\documentclass{article}

 \usepackage[nonatbib,final]{neurips_2022}

\usepackage[utf8]{inputenc} %
\usepackage[T1]{fontenc}    %
\usepackage[hidelinks,colorlinks,linkcolor=black,citecolor=orange]{hyperref}   %
\usepackage{url}            %
\usepackage{booktabs}       %
\usepackage{amsfonts}       %
\usepackage{nicefrac}       %
\usepackage{microtype}      %
\usepackage{xcolor}         %
\usepackage[citestyle=numeric,natbib=true,bibstyle=numeric,maxbibnames=99]{biblatex}
\addbibresource{main.bib}
\usepackage{graphicx}
\usepackage{caption}
\usepackage{subcaption}
\usepackage{amsmath,amssymb,amsthm}
\usepackage{bbm}
\usepackage{wrapfig}
\usepackage{enumitem}

\newcommand\E{{\mathbb{E}}}   %

\renewcommand\l{{\mathcal{L}}}
\renewcommand\E{{\mathcal{E}}}
\newcommand\V{{\mathcal{V}}}

\newcommand\F{{\mathcal{F}}}
\renewcommand\S{{\mathcal{S}}}

\newcommand\va{{\boldsymbol{a}}}

\newcommand\vb{{\boldsymbol{b}}}

\newcommand\vx{{\boldsymbol{x}}}

\newcommand\vz{{\boldsymbol{z}}}

\newcommand{\ceil}[1]{\left\lceil #1 \right\rceil}

\renewcommand\P{{\mathcal{P}}}
\newcommand\res{{\mathrm{res}}}
\newcommand\cnn{{\mathrm{cnn}}}

\def\rva{{\mathbf{a}}}
\def\rvb{{\mathbf{b}}}

\def\rvu{{\mathbf{i}}}

\def\rvs{{\mathbf{s}}}

\def\rvu{{\mathbf{u}}}
\def\rvv{{\mathbf{v}}}
\def\rvw{{\mathbf{w}}}
\def\rvx{{\mathbf{x}}}

\def\rmH{{\mathbf{H}}}

\def\rmW{{\mathbf{W}}}

\DeclareMathOperator{\In}{\mathsf{in}}
\DeclareMathOperator{\Out}{\mathsf{out}}
\DeclareMathOperator{\id}{\mathsf{id}}

\theoremstyle{plain}
\newtheorem{theorem}{Theorem}[section]
\newtheorem{proposition}[theorem]{Proposition}
\newtheorem{lemma}[theorem]{Lemma}
\newtheorem{corollary}[theorem]{Corollary}
\theoremstyle{definition}
\newtheorem{definition}[theorem]{Definition}
\newtheorem{assumption}[theorem]{Assumption}
\theoremstyle{remark}
\newtheorem{remark}[theorem]{Remark}

\title{Transition to Linearity of  General Neural Networks with  Directed Acyclic Graph Architecture}

\author{%
  Libin Zhu\thanks{ Department of Computer Science \& Halicioğlu Data Science Institute, University of California, San Diego. E-mail: \texttt{l5zhu@ucsd.edu}}~~~~~~~~
   Chaoyue Liu\thanks{Halicioğlu Data Science Institute, University of California, San Diego. E-mail: \texttt{chl212@ucsd.edu}}~~~~~~~~
   Mikhail Belkin\thanks{Halicioğlu Data Science Institute \& Department of Computer Science, University of California, San Diego. E-mail: \texttt{mbelkin@ucsd.edu}}
  \\
}

\begin{document}

\maketitle

\begin{abstract}

In this paper we show that feedforward neural networks corresponding to arbitrary  directed acyclic graphs undergo transition to linearity as their ``width'' approaches infinity. The width of these general networks is characterized by the minimum in-degree of their neurons, except for the input and first layers. Our results identify the mathematical structure underlying transition to linearity and generalize a number of recent works aimed at characterizing transition to linearity or constancy of the Neural Tangent Kernel for standard architectures.

\end{abstract}

\section{Introduction}

A remarkable property of wide neural networks, first discovered in~\cite{jacot2018neural} in terms of the constancy of the Neural Tangent Kernel along the optimization path, is that they transition to linearity (using the terminology from~\cite{liu2020linearity}), i.e., are approximately linear in a ball of a fixed radius. 
There has been an extensive study of this phenomenon for different types of standard neural networks architectures including fully-connected neural networks (FCNs), convolutional neural networks (CNNs), ResNets ~\cite{lee2019wide,chizat2019lazy,arora2019exact,hanin2019finite}. 
Yet the scope of the transition to linearity and the underlying mathematical structure  has not been made completely clear.

In this paper, we show that the property of transition to linearity  holds for a  much broader class of neural networks -- {\it feedforward neural networks}. The architecture of a feedforward neural network can generically be described by a DAG~\cite{you2020graph,wortsman2019discovering,mcclelland1987parallel}: the vertices and the edges correspond to the neurons and the trainable weight parameters of a neural network, respectively.
This DAG structure includes  standard network architectures e.g., FCNs, CNNs, ResNets, as well as DenseNets~\cite{huang2017densely}, whose property of transition to linearity has not been studied in literature.
This generalization shows that the transition to linearity, or the constant Neural Tangent Kernel, does not depend on the specific designs of the networks, and is a more fundamental and universal property.

We define the width of a feedforward neural network as the minimum in-degree of all neurons except for the input and first layers, which 
is a natural generalization of the 
 the minimum number of neurons in hidden layers which is how the width is defined for standard architectures.
For a feedforward neural network, we show it  transitions to linearity if its width goes to infinity as long as the in-degrees of individual neurons are bounded by a polynomial of the network width.
Specifically, we control the deviation of the network  function from its linear approximation by the spectral norm of the Hessian of the network function, which, as we show  vanishes in a ball of fixed radius, in the infinite width limit.  Interestingly, we observe that not only the output neurons, but any pre-activated neuron in the hidden layers of a feedforward neural network can be regarded as a function with respect to its parameters, which will also transition to linearity as the width  goes to infinity.

The key technical difficulty is that all existing analyses for transition to linearity or constant NTK do not apply to this general DAG setting. Specifically, 
those analyses assume in-degrees of neurons are either the same or proportional to each other up to a constant ratio \cite{du2018gradientdeep,lee2019wide,arora2019exact,zou2020gradient,liu2020linearity,allen2019convergence}. However, the general DAG setting allows different scales of neuron in-degrees, for example, 
the largest in-degree can be polynomially large in the smallest in-degree.
In such scenarios, the $(2,2,1)$-norm in \cite{liu2020linearity} and the norm of  parameter change in~\cite{du2018gradientdeep,lee2019wide}  scales with the maximum of in-degrees which causes a trivial bound on the NTK change.
Instead, we introduce a different set of tools based on the tail bound for the norm of matrix Gaussian series~\cite{tropp2015introduction}. Specifically, we show that the Hessian of the network function takes the form of matrix Gaussian series, whose matrix variance relies on the Hessian of connected neurons. Therefore, we reconcile the in-degree difference by building a recursive relation between the Hessian of neurons,  which exactly cancels out the in-degree with the scaling factor.

Transition to linearity helps understand the training dynamics of wide neural networks and plays an important role in developing the optimization theory for them, as has been shown for certain particular wide neural networks~\cite{du2018gradient,du2018gradientdeep,chizat2019lazy,lee2019wide,zou2019improved,zou2020gradient}.
While transition to linearity is not a necessary condition for successful optimization, it provides a powerful tool for analyzing optimization for many different architectures. 
Specifically, transition to linearity in a ball of sufficient radius combined with a lower bound on the norm of the gradient at its center is sufficient to demonstrate the PL$^*$ condition~\cite{liu2020loss} (a version of the Polyak-{\L}ojasiewicz condition~\cite{polyak1963gradient,lojasiewicz1963topological}) which ensures convergence of optimization. We discuss this connection and provide one such lower bound in Section~\ref{sec:lowerbound}.

\paragraph{Summary of contributions.} 

 We show the phenomenon of transition to linearity in general feedforward neural networks corresponding to a DAG with large in-degree. Specifically, under the assumption that the maximum in-degree of its neurons is bounded by a polynomial of the width $m$ (the minimum in-degree), we prove that the spectral norm of the Hessian of   a feedforward neural network is bounded by $\tilde{O}(1/\sqrt{m})$ in an $O(1)$ ball. 
 Our results generalize the existing literature on the linearity of wide feedforward neural networks.
    We discuss connections to optimization. Under additional assumptions we show that the norm of the gradient of a feedforward neural network is bounded away from zero at initialization. Together with the Hessian bound this implies convergence of gradient descent for the loss function.

\subsection{Notations}
We use bold lowercase letters, e.g., $\rvw$, to denote vectors, capital letters, e.g., $A$, to denote matrices, and bold capital letters, e.g., $\rmH$, to denote higher order tensors or matrix tuples. For a matrix $A$, we use $A_{[i,:]}$ to denote its $i$-th row and $A_{[:,i]}$ to denote its $j$-th column.

We use $\nabla_\rvw f(\rvw_0)$ to denote the gradient of $f$ with respect to $\rvw$ at $\rvw_0$, and $H_f(\rvw)$ to denote Hessian matrix (second derivative) of $f$ with respect to $\rvw$. 
For vectors, we use $\|\cdot\|$ to denote Euclidean norm. For matrices, we use $\|\cdot\|$ to denote spectral norm and $\|\cdot\|_F$ to denote Frobenius norm. We use $\|\cdot\|_\infty$ to denote function $L_\infty$ norm. For a set $\mathcal{S}$, we use $|\mathcal{S}|$ to denote the cardinality of the set. For $n>0$, $[n]$ denotes the set $\{1,2,...,n\}$.

We use big-$O$ notation to hide constant factors, and use big-$\tilde{O}$ notation to additionally hide logarithmic factors.
In this paper, the argument of $O$/$\tilde{O}(\cdot)$ is always with respect to the network width.

Given a vector $\rvw$ and a constant $R>0$, we define a Euclidean ball $\mathsf{B}(\rvw,R)$ as:
\begin{align}
    \mathsf{B}(\rvw,R) := \left\{\rvv:\|\rvv - \rvw\| \leq R\right\}.
\end{align}

\section{Neural networks with  acyclic graph architecture}\label{sec:general_nn}
In this section, we provide a definition and notation for general feedforward neural networks with an arbitrary DAG structure.  This definition includes standard feedforward neural network architectures, such as FCNs and DenseNet.

\subsection{Defining feedforward neural networks}

 \begin{figure}[ht]
    \centering
      \begin{subfigure}[t]{0.42\textwidth}
        \includegraphics[width=\linewidth]{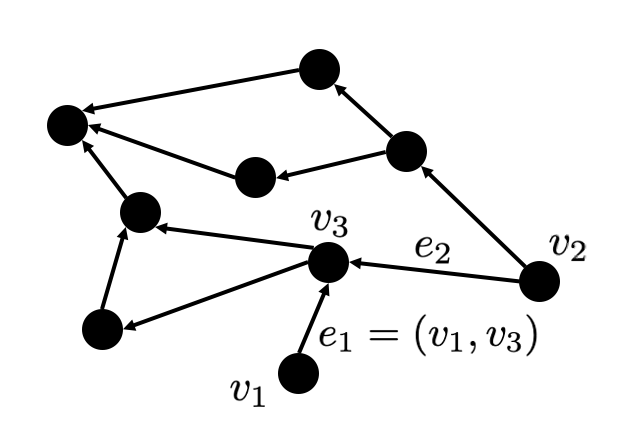} 
         \caption{}
     \end{subfigure}
     \begin{subfigure}[t]{0.45\textwidth}
        \includegraphics[width=\linewidth]{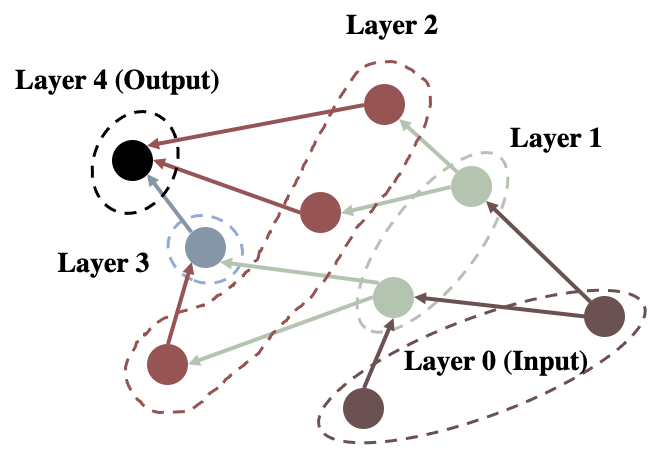}
         \caption{}
     \end{subfigure}
\caption{{\bf (a):} {\bf An example of  directed acyclic graph.} $v_1$, $v_2$ and $v_3$ are three vertices and $e_1$, $e_2$ are two edges of the graph. $v_3$ has two incoming edges $e_1$ and $e_2$ which connects to $v_1$ and $v_2$ respectively. 
{\bf (b):} {\bf Organizing the vertices into layers.} The vertices with $0$ in-degree are in $0$-th layer (or input layer), and last layer are called output layer in which the vertices have $0$ out-degree. Note that the layer index is determined by the longest path from the inputs $\mathcal{V}_{\mathrm{input}}$, for example, the neuron in layer $3$.
} \label{fig:label_graph}
\vspace{-10pt}
\end{figure}
{\bf Graph Structure.} Consider a directed acyclic graph (DAG) $\mathcal{G} = (\mathcal{V}, \mathcal{E})$, where $\mathcal{V}$ and $\mathcal{E}$ denote the sets of vertices and edges, respectively. See the left panel of Figure \ref{fig:label_graph}, for an illustrative example. For a directed edge $e\in\mathcal{E}$, we may also use the notation $e= (v_1,v_2)$ to explicitly write out the start vertex $v_1$ and end vertex $v_2$. 

For a vertex $v\in\mathcal{V}$, we denote its in-degree, $\In(v)$, by the number of incoming edges (edges that end with it):
\begin{equation}
   {\mathsf{in}}(v) = |\mathcal{S}_{{\mathrm{in}}}(v)|, \ \   \mathrm{with} \ \mathcal{S}_{{\mathrm{in}}}(v) :=\{u\in\mathcal{V} : (u,v)\in\mathcal{E}\}. \nonumber
\end{equation} 
Similarly, for a vertex $v\in\mathcal{V}$, we denote its out-degree $\Out(v)$  by the number of outgoing edges (edges that start from it): 
\begin{equation}
  {\Out}(v) = |\mathcal{S}_{\mathrm{out}}(v)|, \ \   \mathrm{with} \ \mathcal{S}_{{\mathrm{out}}}(v) :=\{u\in\mathcal{V} : (v,u)\in\mathcal{E}\}. \nonumber
\end{equation} 

We call the set of vertices with zero in-degrees  {\it input}: $\mathcal{V}_{\mathrm{input}} = \{v\in\mathcal{V}: {\In}(v)=0\}$, and the set of vertices with zero out-degrees  {\it output} $\mathcal{V}_{\mathrm{output}} = \{v\in\mathcal{V}: {\Out}(v)=0\}$.

\begin{definition}
For each vertex $v\in \mathcal{V}\backslash\mathcal{V}_{\mathrm{input}}$, its distance $p(v)$, to the input $\V_{\mathrm{input}}$,  is defined to be the maximum length of all paths that start from a vertex within $\mathcal{V}_{\mathrm{input}}$ and end with $v$. 
\end{definition}
It is easy to check that $p(v) = 0$ if $v \in\mathcal{V}_{\mathrm{input}}$.

{\bf Feedforward neural network.} Based on a given DAG architecture, we define the feedforward neural network.  Each individual vertex corresponds to a neuron additionally  equipped  with a scalar function (also called activation function).  Each edge is associated with a real-valued weight, a trainable parameter. Each neuron is defined as a function of the weight parameters and the adjacent neurons connected by its incoming edges. The feedforward neural network is considered as the output neurons, corresponding to the output $\V_{\mathrm{output}}$, of all  weight parameters and input neurons which correspond to the input $\V_{\mathrm{input}}$.
Formally, we define the feedforward neural network as follows.
\begin{definition}[Feedforward neural network]\label{def:fnn}
Consider a DAG $\mathcal{G} = (\V,\E)$. For each vertex $v\in\mathcal{V}\backslash\mathcal{V}_{\mathrm{input}}$, we associate it with an activation function $\sigma_v(\cdot): \mathbb{R} \to \mathbb{R}$ and each of its incoming edges $e=(u,v)\in\mathcal{E}$ with a weight variable $w_e =  w_{(u,v)}$. Then we define the following functions:
\begin{equation}\label{eq:local_function}
    f_v =\sigma_v(\tilde{f}_v), \quad \tilde{f}_v= \frac{1}{\sqrt{\In(v)}}\sum_{u\in\mathcal{S}_{\mathrm{in}}(v)} w_{(u,v)}f_u.
\end{equation}
When $v\in \V_{\mathrm{input}}$, $f_v$ is prefixed as the input data, and we denote $f_{\mathrm{input}}:=\{f_v: v\in \V_{\mathrm{input}}\}$.
 For $v\notin \V_{\mathrm{input}}$, we call $f_v$  {\it neurons} and $\tilde{f}_v$  {\it pre-activations}.
With necessary composition of functions, each $f_v$, and $\tilde{f}_v$, can be regarded as a function of all related weight variables and inputs $f_{\mathrm{input}}$.
The {\it feedforward neural network} is defined to be the function corresponding to the output $\V_{\mathrm{output}}$:
\begin{equation}\label{eq:f_output}
    f(\mathcal{W};f_{\mathrm{input}}) := f_{\mathrm{output}}=\{f_v: v\in \V_{\mathrm{output}}\},
\end{equation}
where $\mathcal{W}:=\{w_e: e\in\mathcal{E}\}$ denotes the set of all the weight variables.
\end{definition}
\begin{remark}
The validity of the definition is guaranteed by the fact that the DAG is acyclic. It makes sure that the dependence of each function $f_v$ on other neurons can pass all the way down to the input $f_{\mathrm{input}}$, through Eq.~(\ref{eq:local_function}).
\end{remark}
\begin{remark}
For $v\in\V_{\mathrm{input}}\bigcup\V_{\mathrm{output}}$, we use the identity function $\mathbb{I}(\cdot)$ as the activation functions.
\end{remark}

\paragraph{Weight initialization and inputs.} Each weight parameter $w_e \in \mathcal{W}$ is initialized i.i.d. following the standard normal distribution i.e., $\mathcal{N}(0,1)$. The inputs are considered given, usually determined by datasets. Under this initialization, we introduce the scaling factor $1/\sqrt{\In(v)}$ in Eq.~(\ref{eq:basic_op}) to control the value of neurons to be of order $O(1)$. Note that this initialization is an extension of the NTK initialization~\cite{jacot2018neural}, which was defined for FCNs therein. 
\paragraph{Generality of DAG architecture.} 
Including FCNs and DenseNets~\cite{huang2017densely} as special examples, the class of feedforward neural networks allows much more choices of architectures, for example, neural networks with randomly dropped edges.
Please see detailed discussions about these specific examples in Appendix~\ref{app:example}.
We note that our definition of feedforward neural networks does not directly include networks with nont-trainable skip connections, e.g., ResNets, and networks with shared weights, e.g., CNNs. However, with a slight modification of the analysis,  the property of transition to linearity still holds. See the detailed discussion in Appendix~\ref{sec:skip_connection} and~\ref{sec:cnn}.

\subsection{Organizing feedforward networks into layers }\label{subsec:nn_notation}
The architecture of the feedforward neural network is determined by the DAG $\mathcal{G}$. The complex structures of DAGs often lead to complicated neural networks, which are hard to analyze. 

For the ease of analysis, we organize the neurons of the feedforward neural network into {\it layers}, which are sets of neurons. 

\begin{definition}[Layers]\label{def:layer}
Consider a feedforward neural network $f$ and its corresponding graph structure $\mathcal{G}$. A layer of the network is defined to be the set of neurons which have the same  distance $p$ to the inputs. Specifically, the $\ell$-th layer, denoted by $f^{(\ell)}$, is
\begin{equation}\label{eq:layer}
    f^{(\ell)} = \{f_v: p(v) = \ell, v\in \mathcal{V}, \ell\in \mathbb{N}\}.
\end{equation}
\end{definition}

It is easy to see that the layers are mutually exclusive, and the layer index $\ell$ is labeled from $0$ to $\ell$, where $L+1$ is the total number of layers in the network.
As $p(v) = 0$ if and only if $v\in \mathcal{V}_{\mathrm{input}}$,  the $0$-th layer $f^{(0)}$ is exactly the input layer $f_{\mathrm{input}}$. 
The right panel of Figure~\ref{fig:label_graph} provides an illustrative example of the layer structures.

In general, the output neurons $f_{\mathrm{output}}$ (defined in Eq.~(\ref{eq:f_output})) do not have to be in the same layer. For the convenience of presentation and analysis, we assume that all the output neurons are in the last layer, i.e., layer $\ell$, which is the case for most of commonly used neural networks, e.g., FCNs and CNNs.
Indeed, our analysis applies to every output neuron (see Theorem~\ref{cor:bound_each_hessian}), even if they are not in the same layer.

With the notion of network layers, we rewrite the neuron functions Eq.~(\ref{eq:local_function}), as well as related notations, to reflect the layer information. 

For $\ell$-layer, $\ell=0,1,\cdots,L$, we denote the total number of neurons as $d_\ell$, and rewrite the layer function $f^{(\ell)}$  into a form of vector-valued function
\begin{align*}
    f^{(\ell)} = \left(f_1^{(\ell)}, f_2^{(\ell)},...,f_{d_\ell}^{(\ell)}\right)^T,
\end{align*}
where we use $f_i^{(\ell)}$ with index $i=1,2,\cdots,d_\ell$ to denote each individual neuron. Correspondingly, we denote its vertex as $v_i^{(\ell)}$, and  $\mathcal{S}_{i}^{(\ell)}:=\mathcal{S}_{\mathrm{in}}(v_i^{(\ell)})$. Hence, the in-degree $\In(v_i^{(\ell)})$, denoted as $m_i^{(\ell)}$ here, is equivalent to the cardinality of the set $\mathcal{S}_{i}^{(\ell)}$. 
\begin{remark}
Note that $m_i^{(\ell)}$, with the superscript $\ell$, denotes an in-degree, i.e., the number of neurons that serve as direct inputs to the current neuron in $\ell$-th layer. In the context of FCNs, $m_i^{(\ell)}$ is equivalent to the size of its previous layer, i.e., $(\ell-1)$-th layer, and is often denoted as $m^{(\ell-1)}$ in literature.
\end{remark}

To write the summation in Eq.~(\ref{eq:local_function}) as a matrix multiplication, we further introduce the following two vectors: (a), $f_{\mathcal{S}_i^{(\ell)}}$ represents the vector that consists of neuron components $f_v$ with $v\in\mathcal{S}_i^{(\ell)}$; (b), $\rvw_{i}^{(\ell)}$ represents the  vector that consists of weight parameters $w_{(u,v_i^{(\ell)})}$ with $u \in\mathcal{S}_i^{(\ell)}$. Note that both vectors $f_{\mathcal{S}_i^{(\ell)}}$ and $\rvw_{i}^{(\ell)}$ have the same dimension $m_i^{(\ell)}$.

With the above notation, the neuron functions Eq.~(\ref{eq:local_function}) can be equivalently rewritten as:
\begin{align}\label{eq:basic_op}
    f_i^{(\ell)} =\sigma_{i}^{(\ell)}(\tilde{f}_i^{(\ell)}), \ \  \tilde{f}_i^{(\ell)}= \frac{1}{\sqrt{m_{i}^{(\ell)}}}\left(\rvw_i^{(\ell)}\right)^T f_{\S^{(\ell)}_i}.
\end{align}
For any $\ell\in[L]$, we denote the weight parameters corresponding to all incoming edges toward neurons at layer $\ell$ by
\vspace{-7pt}
\begin{align}\label{eq:def_weights}
    \rvw^{(\ell)} := \left( (\rvw_1^{(\ell)})^T,...,(\rvw_{d_{\ell}}^{(\ell)})^T\right)^T~~~ \ell\in[L].
\end{align}
Through the way we define the feedforward neural network, the output of the neural network is a function of all the weight parameters and the input data, hence we denote it by 
\begin{align}\label{eq:output_f}
    f(\rvw;\vx) := f^{(L)} = \left(f_1^{(L)},...,f_{d_L}^{(L)}\right)^T,
\end{align}
where $\rvw$ is the collection of all the weight parameters, i.e., $ \rvw := \left((\rvw^{(1)})^T,...,(\rvw^{(L)})^T\right)^T \in \mathbb{R}^{\sum_\ell\sum_i m_i^{(\ell)}}$.

With all the notations, for a feedforward neural network, we formally define the width of it:
\begin{definition}[Network width]\label{def:width}
The width $m$ of a feedforward neural network is the minimum in-degree of all the neurons except those in the input and first layers:
\begin{align}
    m:= \inf_{\ell\in\{2,...,L\},i\in [d_\ell]}m_i^{(\ell)}.
\end{align} 
\end{definition}
\begin{remark}
Note that, the network width $m$ is determined by the in-degrees of neurons except for the input and first layers, and not necessarily relates the number of neurons in hidden layers. But for certain architectures e.g., FCNs, these two coincide that the minimum in-degree after the first layer is the same as the minimum hidden layer size.
\end{remark}

We say a feedforward neural network is {\it wide} if its width $m$ is large enough. In this paper, we consider wide feedforward neural networks with a fixed number of layers.

\section{Transition to linearity of feedforward neural networks}\label{sec:control_hessian}
In this section, we show that the feedforward neural networks exhibit the phenomenon of transition to linearity, which was previously observed in specific types of neural networks.  

Specifically, we prove that a feedforward neural network $f(\rvw;\vx)$, when considered as a function of its weight parameters $\rvw$, is arbitrarily close to a {\it linear} function in the ball $\mathsf{B}(\rvw_0,R)$ given constant $R>0$, where $\rvw_0$ is randomly initialized, as long as the width of the network is sufficiently large.

First, we make the following assumptions on the input $\vx$ and the activation functions:
\begin{assumption}\label{assump:input}
The input is uniformly upper bounded, i.e., $\|\vx\|_\infty \leq C_\vx$ for some constant $C_\vx >0$.
\end{assumption}

\begin{assumption}\label{assump:sigma}
All the activation functions $\sigma(\cdot)$ are twice differentiable, and there exist constants $\gamma_0,\gamma_1,\gamma_2 >0$ such that, for all activation functions, $|\sigma(0)| \leq \gamma_0$ and the following Lipschitz continuity and smoothness conditions are satisfied
\begin{align*}
    &\left|\sigma'(z_1) - \sigma'(z_2)\right| \leq \gamma_1|z_1-z_2|,\\
        &\left|\sigma''(z_1) - \sigma''(z_2)\right| \leq \gamma_2|z_1-z_2|,~~\forall z_1,z_2\in\mathbb{R}.
\end{align*}

\end{assumption}

We note that the above two assumptions are very common in literature. Although ReLU does not satisfy Assumption~\ref{assump:sigma} due to non-differentiability at point $0$, we 
believe our main claims still hold as ReLU can be approximated arbitrarily closely by some differentiable function which satisfies our assumption.

\begin{remark}
By assuming all the activation functions are twice differentiable, it is not hard to see that the feedforward neural network i.e., Eq.~(\ref{eq:output_f}) is also twice differentiable.
\end{remark}
{\bf Taylor expansion.} To study the linearity of a general feedforward neural network, we consider its Taylor expansion with second order Lagrange remainder term.  Given a point $\rvw_0$, we can write the network function $f(\rvw)$ (omitting the input argument for simplicity) as 
\begin{align}\label{eq:taylor_exp}
    f(\rvw) &= \underbrace{f(\rvw_0) +  (\rvw-\rvw_0)^T \nabla_\rvw f(\rvw_0)}_{f_{\mathrm{lin}}(\rvw)}+ \underbrace{\frac{1}{2}(\rvw-\rvw_0)^T H_f(\xi) (\rvw-\rvw_0)}_{\mathcal{R}(\rvw)},
\end{align}
where $\xi$ is a point on the line segment between $\rvw_0$ and $\rvw$. Above, $f_{\mathrm{lin}}(\rvw)$ is a linear function and $\mathcal{R}(\rvw)$ is the Lagrange remainder term. 

In the rest of the section, we will show that in a ball $\mathsf{B}(\rvw_0,R)$ of any constant radius $R>0$,
\begin{align}\label{eq:lag_re}
    |\mathcal{R}(\rvw)| = \tilde{O}\left({1}/{\sqrt{m}}\right)
\end{align}
where $m$ is the network width (see Definition~\ref{def:width}). Hence, $f(\rvw)$ can be arbitrarily close to its linear approximation $f_{\mathrm{lin}}(\rvw)$ with sufficiently large $m$. Note that in Eq.~(\ref{eq:taylor_exp}), we consider a single output of the network function. The same analysis can be applied to multiple outputs (see Corollary~\ref{cor:bound_multi_hessian}).
\begin{remark}
For a general function, the remainder term $\mathcal{R}(\rvw)$ is not expected to vanish at a finite distance from $\rvw_0$. Hence, the transition to linearity in the ball $\mathsf{B}(\rvw_0,R)$ is a non-trivial property. 
On the other hand,  the radius $R$ can be set to be large enough to contain the whole optimization path of GD/SGD for various types of wide neural networks (see~\cite{liu2020loss,zou2019improved}, also indicated in ~\cite{du2018gradient,du2018gradientdeep,zou2020gradient,lee2019wide}). In Section~\ref{sec:lowerbound}, we will see that such a ball is also large enough to cover the  whole optimization path of GD/SGD for the general feedforward neural networks. Hence, to study  the optimization dynamics of wide feedforward neural networks, this ball is large enough.

\end{remark}

To prove Eq.~(\ref{eq:lag_re}), we make an assumption on the width $m$:
\begin{assumption}\label{assump:md}
The maximum in-degree of any neuron is at most polynomial in the network width $m$:
\begin{align*}
    \sup_{\ell\in \{2,\ldots,L\},i\in [d_\ell]}m_i^{(\ell)} = O(m^c),
\end{align*} where $c>0$ is a constant.
\end{assumption}

 This assumption puts a constraint on the neurons with large in-degrees such that the in-degrees cannot be super-polynomially large compared to $m$. A natural question is whether this constraint is necessary, for example, do our main results still hold in cases some in-degrees are exponentially large in $m$? While we believe the answer is positive, we need this assumption to apply the proof techniques. 
 Specifically, we apply  the tail bound for the norm of matrix Gaussian series~\cite{tropp2015introduction}, where there is a dimension factor equivalent to the number of weight parameters. Thus an exponentially large dimension factor would result in useless bounds. It is still an open question whether the dimension factor in the bound can be removed or moderated (see 
the discussion after Theorem 4.1.1 in~\cite{tropp2015introduction}).

With these assumptions, we are ready to present our main result:
\begin{theorem}[Scaling of the Hessian norm]\label{thm:bound_hessian}
 Suppose Assumption \ref{assump:input}, \ref{assump:sigma} and \ref{assump:md} hold.  Given a fixed $R>0$, for all $\rvw \in \mathsf{B}(\rvw_0,R)$, with probability at least $1-\exp(-\Omega(\log^2{m}))$ over the random initialization $\rvw_0$, each output neuron $f_k$ of a feedforward neural network satisfies
\begin{align}\label{eq:small_hessian}
     \left\|H_{f_k}({\rvw}) \right\| = O\left({(\log m +R)^{L^2}}/{\sqrt{m}}\right) = \tilde{O}\left({R^{L^2}}/{\sqrt{m}}\right),~~~k\in [d_\ell].
\end{align}
\end{theorem}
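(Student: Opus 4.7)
The plan is to prove the bound by induction on the layer index $\ell$, establishing an inductive bound on the spectral norm of the Hessian of each pre-activation $\tilde{f}_i^{(\ell)}$ and thereby each neuron $f_i^{(\ell)}$. The main technical engine is the matrix Gaussian series tail inequality (Theorem 4.1.1 of \cite{tropp2015introduction}): since the incoming weights $\rvw_i^{(\ell)}$ at each neuron are independent standard Gaussians, any sum of the form $\sum_u w_u \rmA_u$ with $\rmA_u$ determined by earlier-layer weights is a matrix Gaussian series whose operator norm is controlled by $\|\sum_u \rmA_u^2\|^{1/2}$ up to a polylogarithmic dimension factor. Crucially, the total parameter count is at most $\mathrm{poly}(m)$ by Assumption~\ref{assump:md}, so the dimension factor contributes only $\log m$.

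First, I would carry out auxiliary concentration: show that with probability $1-\exp(-\Omega(\log^2 m))$ over $\rvw_0$, each neuron value $|f_i^{(\ell)}|$ and each gradient norm $\|\nabla f_i^{(\ell)}\|$ stay within a $\mathrm{polylog}(m)$ factor of a constant, by a layer-by-layer induction using standard Gaussian scalar concentration for the pre-activations conditioned on earlier layers. Extending these bounds from $\rvw_0$ to all $\rvw \in \mathsf{B}(\rvw_0,R)$ uses the Lipschitz/smoothness hypotheses on the activations (Assumption~\ref{assump:sigma}) and accumulates multiplicative factors in $(\log m + R)$ across the $L$ layers, accounting for the $R^{L^2}$ growth in the final bound.

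For the core Hessian estimate I would partition the parameters feeding into $\tilde{f}_i^{(\ell)}$ into the new weights $\rvw_i^{(\ell)}$ and the old weights in strictly earlier layers. Since $\tilde{f}_i^{(\ell)}$ is bilinear in $\rvw_i^{(\ell)}$ and $f_{\S_i^{(\ell)}}$, its Hessian has the block form
\begin{equation}
H_{\tilde{f}_i^{(\ell)}} = \tfrac{1}{\sqrt{m_i^{(\ell)}}}
\begin{pmatrix}
\mathbf{0} & M_i^T \\ M_i & \sum_{u\in\S_i^{(\ell)}} w_{(u,v_i^{(\ell)})}\, H_{f_u}
\end{pmatrix},\nonumber
\end{equation}
where $M_i$ stacks the gradients $\nabla f_u$ for $u\in\S_i^{(\ell)}$. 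The off-diagonal block is controlled by showing that the Gram matrix $M_i M_i^T$, with entries $\langle \nabla f_u,\nabla f_{u'}\rangle$, concentrates around a deterministic NTK-type limit of bounded spectral norm, so $\|M_i\| = \tilde O(\sqrt{m_i^{(\ell)}})$ and the block itself is $\tilde O(1/\sqrt{m_i^{(\ell)}}) = \tilde O(1/\sqrt m)$. The diagonal block, conditioned on the old weights, is a matrix Gaussian series, and Tropp's inequality bounds it by $\tilde O(\,\|\sum_u H_{f_u}^2\|^{1/2}/\sqrt{m_i^{(\ell)}}\,)$. Finally, using $H_{f_u}=\sigma''(\tilde{f}_u)\nabla\tilde{f}_u(\nabla\tilde{f}_u)^T + \sigma'(\tilde{f}_u) H_{\tilde{f}_u}$ and the fact that output neurons use the identity (killing the outer-product piece at the last layer) closes the induction.

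The main obstacle is bounding $\|\sum_u H_{f_u}^2\|$ sharply enough to give a genuine $1/\sqrt m$ recursion rather than merely $\tilde O(\max_u \|H_{f_u}\|)$; the naive triangle-inequality bound $\|\sum_u H_{f_u}^2\| \leq m_i^{(\ell)}\,\max_u \|H_{f_u}\|^2$ would be fatal because the rank-one pieces $\sigma''(\tilde{f}_u)\nabla\tilde{f}_u(\nabla\tilde{f}_u)^T$ are only $O(1)$. The saving structure is that these rank-one matrices, when squared and summed, reassemble into an NTK-like Gram matrix whose spectral norm is much smaller than the sum of its diagonal entries (by the same concentration used for $M_i M_i^T$), while the smooth piece $H_{\tilde{f}_u}$ is already $\tilde O(1/\sqrt m)$ by the inductive hypothesis. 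Carrying this estimate carefully through the recursion yields $\|H_{\tilde{f}_i^{(\ell)}}\| = \tilde O((\log m + R)^{\ell^2}/\sqrt m)$ layer by layer, and specializing to $\ell = L$ with identity output activation gives the stated bound. A union bound over the $\mathrm{poly}(m)$ many neurons and edges preserves the $\exp(-\Omega(\log^2 m))$ failure probability.
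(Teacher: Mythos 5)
Your proposal follows essentially the same strategy as the paper: exploit the bilinear structure in the newest weights, treat each second-derivative sum as a matrix Gaussian series conditioned on earlier weights, invoke Theorem 4.1.1 of Tropp with the dimension factor kept at $\mathrm{poly}(m)$ via Assumption~\ref{assump:md}, let the $1/\sqrt{m_i^{(\ell)}}$ prefactor cancel the in-degree appearing in the matrix variance, and recurse over layers with a union bound; the only organizational difference is that you run one induction per neuron with a $2\times 2$ block Hessian, whereas the paper works with layer-pair blocks $H^{(\ell_1,\ell_2)}_{f_k}$ (Lemma~\ref{lemma:2_F_norm} and Lemma~\ref{lemma:2diff}). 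That part is fine.

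The genuine gap is in the one estimate the whole recursion hinges on: the spectral norm of the stacked-gradient matrix $M_i$ (and, identically, of the summed gradient outer products entering the matrix variance of the $\sigma''$ term). You justify it by asserting that the Gram matrix $M_iM_i^T$ "concentrates around a deterministic NTK-type limit of bounded spectral norm," and you then write $\|M_i\| = \tilde O\bigl(\sqrt{m_i^{(\ell)}}\bigr)$. These two statements are mutually inconsistent, and the second is useless: multiplied by the prefactor $1/\sqrt{m_i^{(\ell)}}$ it only yields an $O(1)$ bound on the mixed block, not $O(1/\sqrt m)$. What is actually needed is the spectral-versus-Frobenius gain proved in Lemma~\ref{lemma:2_F_norm}: $\bigl\|\partial f_{\S_i^{(\ell)}}/\partial\rvw^{(\ell')}\bigr\| = \tilde O\bigl(\sqrt{m_i^{(\ell)}/\underline{m}_{\,p}}\bigr)$ while the Frobenius norm is $\tilde O\bigl(\sqrt{m_i^{(\ell)}}\bigr)$, uniformly over $\mathsf{B}(\rvw_0,R)$. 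Your proposed route to it does not hold up: in the regime Assumption~\ref{assump:md} explicitly allows, $m_i^{(\ell)} = O(m^c)$ with $c>1$, the Gram matrix does \emph{not} have bounded spectral norm (it is of order $m_i^{(\ell)}/m$ up to logs), so "bounded NTK limit" is false as stated; and even where it is morally true, concentration at $\rvw_0$ does not by itself give a bound at every $\rvw$ in the ball, which is what the theorem requires. The paper supplies exactly this missing ingredient with an induction over layers built on the structured Gaussian-matrix bound of Lemmas~\ref{lemma:2norm_init} and~\ref{lemma:2norm} (an $\epsilon$-net estimate for matrices of the form $[B_1\va_1,\dots,B_n\va_n]$, with the $R$-shift tracked), feeding into Lemma~\ref{lemma:matrix_varaince}. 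Without an argument of that kind, both your off-diagonal block bound and your bound on $\bigl\|\sum_u (\sigma'')^2\|\nabla\tilde f_u\|^2\,\nabla\tilde f_u(\nabla\tilde f_u)^T\bigr\|$ are unsupported, and these are precisely the places where the naive estimate destroys the $1/\sqrt{m}$ scaling — a failure you yourself flag but do not actually resolve.
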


This theorem states that the Hessian matrix, as the second derivative with respect to weight parameters $\rvw$, of any output neuron can be arbitrarily small, if the network width is sufficient large.

Note that Eq.~(\ref{eq:small_hessian}) holds for all $\rvw\in \mathsf{B}(\rvw_0,R)$ with high probability over the random initialization $\rvw_0$. The basic idea is that, the spectral norm of Hessian can be bounded at the center of the ball, i.e., $\rvw_0$, though probability bounds due to the randomness of $\rvw_0$. For all other points $\rvw\in\mathsf{B}(\rvw_0,R)$, 
 the distance $\|\rvw-\rvw_0\|$, being  no greater than $R$, controls $\|H(\rvw)-H(\rvw_0)\|$ such that it is no larger  than the order of $\|H(\rvw_0)\|$, hence $\|H(\rvw)\|$ keeps the same order.
See the proof details in Subsection~\ref{subsec:proof}.

Using the Taylor expansion Eq.~(\ref{eq:taylor_exp}), we can bound the Lagrange remainder and have transition to linearity of the network:
\begin{corollary}[Transition to linearity]\label{cor:taylor}
Suppose Assumption \ref{assump:input}, \ref{assump:sigma} and \ref{assump:md} hold.  Given a fixed $R>0$, for all $\rvw \in \mathsf{B}(\rvw_0,R)$, with probability at least $1-\exp(-\Omega(\log^2{m}))$ over the random initialization  $\rvw_0$,
each $f_k$ will be closely approximated by a linear model:
\begin{align*}
     |f_k(\rvw) - (f_k)_{\mathrm{lin}}(\rvw)| \leq \frac{1}{2}\sup_{\rvw \in \mathsf{B}(\rvw_0,R)}\|H_{f_k}(\rvw)\| R^2= \tilde{O}\left({R^{L^2+2}}/{\sqrt{m}}\right).
\end{align*}
\end{corollary}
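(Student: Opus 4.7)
The plan is to derive the corollary directly from the Taylor expansion with Lagrange remainder already written in Eq.~(\ref{eq:taylor_exp}), combined with the Hessian bound in Theorem~\ref{thm:bound_hessian}. The statement of the corollary is essentially a packaging step: once we have uniform control of $\|H_{f_k}(\rvw)\|$ on the ball $\mathsf{B}(\rvw_0,R)$, the bound on $|f_k(\rvw)-(f_k)_{\mathrm{lin}}(\rvw)|$ is a one-line consequence.

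First I would apply Eq.~(\ref{eq:taylor_exp}) coordinate-wise to the output neuron $f_k$: for any $\rvw \in \mathsf{B}(\rvw_0,R)$ there exists $\xi$ on the segment $[\rvw_0,\rvw]$ such that
\begin{equation*}
f_k(\rvw) - (f_k)_{\mathrm{lin}}(\rvw) \;=\; \tfrac{1}{2}(\rvw-\rvw_0)^\T H_{f_k}(\xi)(\rvw-\rvw_0).
\end{equation*}
Note that $\xi$ automatically lies in $\mathsf{B}(\rvw_0,R)$ because the ball is convex. Taking absolute values and bounding the quadratic form by the operator norm gives
\begin{equation*}
|f_k(\rvw) - (f_k)_{\mathrm{lin}}(\rvw)| \;\leq\; \tfrac{1}{2}\|H_{f_k}(\xi)\|\,\|\rvw-\rvw_0\|^2 \;\leq\; \tfrac{1}{2}\sup_{\rvv\in\mathsf{B}(\rvw_0,R)}\|H_{f_k}(\rvv)\|\cdot R^2.
\end{equation*}

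Next I would invoke Theorem~\ref{thm:bound_hessian}, which states that, on the event of probability at least $1-\exp(-\Omega(\log^2 m))$ over the initialization $\rvw_0$, the spectral norm $\|H_{f_k}(\rvv)\|$ is $\tilde O(R^{L^2}/\sqrt{m})$ uniformly for all $\rvv \in \mathsf{B}(\rvw_0,R)$. The key point is that Theorem~\ref{thm:bound_hessian} already provides a \emph{uniform} bound over the ball with a single high-probability event, so it is safe to replace $\|H_{f_k}(\xi)\|$ by $\sup_{\rvv\in\mathsf{B}(\rvw_0,R)}\|H_{f_k}(\rvv)\|$ without losing any probability mass. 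Multiplying by $R^2$ yields the advertised rate $\tilde O(R^{L^2+2}/\sqrt{m})$.

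The ``main obstacle'' in this corollary is not really an obstacle at all, since the entire technical weight sits inside Theorem~\ref{thm:bound_hessian}; the only subtleties are (i) making sure the intermediate point $\xi$ produced by the mean-value form of Taylor's theorem stays in the ball (which follows from convexity), and (ii) upgrading the pointwise Hessian bound at $\xi$ to a supremum over the ball, which we do not need to do manually because Theorem~\ref{thm:bound_hessian} is already stated uniformly in $\rvw\in\mathsf{B}(\rvw_0,R)$. Hence the whole proof is two displayed lines plus a citation to Theorem~\ref{thm:bound_hessian}.
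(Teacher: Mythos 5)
Your proposal is correct and matches the paper's own argument: the paper derives Corollary~\ref{cor:taylor} precisely by bounding the Lagrange remainder in Eq.~(\ref{eq:taylor_exp}) via the spectral norm of the Hessian and invoking the uniform-in-the-ball, high-probability bound of Theorem~\ref{thm:bound_hessian}. Your two noted subtleties (the intermediate point $\xi$ staying in $\mathsf{B}(\rvw_0,R)$ by convexity, and the supremum already being covered by the uniform statement of Theorem~\ref{thm:bound_hessian}) are exactly the points implicitly used there, so nothing is missing.
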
\vspace{-8pt}
For feedforward neural networks with multiple output neurons, the property of transition to linearity holds with high probability, if the number of output neurons is bounded, i.e., $d_\ell = O(1)$. See the result in Appendix~\ref{sec:multiple_output}.

Furthermore, as defined in Definition~\ref{def:fnn}, each pre-activation, as a function of all related weight parameters and inputs, can be viewed as a feedforward neural network. Therefore, we can  apply the same techniques used for Theorem~\ref{thm:bound_hessian} to show that each pre-activation can transition to linearity:

\begin{theorem}\label{cor:bound_each_hessian}
Suppose Assumption \ref{assump:input}, \ref{assump:sigma} and \ref{assump:md} hold.   Given a fixed radius $R>0$, for all $\rvw \in \mathsf{B}(\rvw_0,R)$, with probability at least $1-\exp(-\Omega(\log^2{m}))$ over the random initliazation of $\rvw_0$,  any pre-activation in a feedforward neural network i.e., $\tilde{f}_k^{(\ell)}(\rvw)$ satisfies
\begin{align}
     \left\|H_{\tilde{f}^{(\ell)}_k}({\rvw}) \right\| =O\left({(\log m +R)^{\ell^2}} / {\sqrt{m}}\right)= \tilde{O}\left({R^{\ell^2}}/{\sqrt{m}}\right),~~~\ell\in [L],~~k\in {[d_\ell]}.
\end{align}
\end{theorem}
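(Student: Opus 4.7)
The plan is to reduce Theorem~\ref{cor:bound_each_hessian} directly to Theorem~\ref{thm:bound_hessian} by cutting the original DAG down to the sub-DAG that actually supports $\tilde f_k^{(\ell)}$. Concretely, fix a vertex $v_k^{(\ell)}$ with $p(v_k^{(\ell)})=\ell$. Let $\mathcal{V}'\subseteq \mathcal{V}$ be the set of ancestors of $v_k^{(\ell)}$ together with $v_k^{(\ell)}$ itself, and let $\mathcal{E}'$ be the edges of $\mathcal{E}$ whose both endpoints lie in $\mathcal{V}'$. The induced graph $\mathcal{G}'=(\mathcal{V}',\mathcal{E}')$ is again a DAG, with the same input set restricted to $\mathcal{V}'$, and with $v_k^{(\ell)}$ as its unique vertex of zero out-degree. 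On this sub-DAG we equip $v_k^{(\ell)}$ with the identity activation (as prescribed by the remark after Definition~\ref{def:fnn} for output neurons), and keep all other activations unchanged. By construction, the feedforward neural network on $\mathcal{G}'$ has output exactly $\tilde f_k^{(\ell)}$ viewed as a function of the weights $\rvw'\subseteq\rvw$ assigned to edges in $\mathcal{E}'$.

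The next step is to verify that this sub-network satisfies all the hypotheses of Theorem~\ref{thm:bound_hessian}. The in-degree of every retained non-input vertex in $\mathcal{G}'$ is identical to its in-degree in $\mathcal{G}$, because every incoming edge into an ancestor of $v_k^{(\ell)}$ comes from another ancestor and is therefore retained. Hence the width of the sub-network is at least $m$, and its maximum in-degree is still $O(m^c)$, so Assumption~\ref{assump:md} carries over. Assumptions~\ref{assump:input} and~\ref{assump:sigma} are unaffected. The depth of $\mathcal{G}'$ is exactly $\ell$ by the definition of $p(v_k^{(\ell)})$, since the longest path in $\mathcal{G}$ from an input to $v_k^{(\ell)}$ lies entirely inside $\mathcal{V}'$ and no longer path exists. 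The weight initialization on $\mathcal{W}'$ is the marginal of the original Gaussian initialization, and is therefore i.i.d.\ standard normal.

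Now apply Theorem~\ref{thm:bound_hessian} to the sub-network, with $L$ replaced by $\ell$ and the single output neuron being $\tilde f_k^{(\ell)}$. This gives, with probability at least $1-\exp(-\Omega(\log^2 m))$ over the initialization $\rvw'_0$, the bound
\begin{equation*}
  \bigl\|H_{\tilde f_k^{(\ell)}}(\rvw')\bigr\| \;=\; O\!\left((\log m + R)^{\ell^2}/\sqrt{m}\right)
\end{equation*}
uniformly for $\rvw'\in \mathsf{B}(\rvw'_0,R)$. Finally, the Hessian of $\tilde f_k^{(\ell)}$ with respect to the full weight vector $\rvw$ is block-diagonal with zero block on the coordinates corresponding to edges outside $\mathcal{E}'$, because $\tilde f_k^{(\ell)}$ does not depend on those weights. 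Therefore its spectral norm over $\mathsf{B}(\rvw_0,R)$ equals the spectral norm of its restriction to $\rvw'$ over $\mathsf{B}(\rvw'_0,R)$, yielding the claimed bound.

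The only subtle point, and the one I expect to require the most care, is confirming that the proof of Theorem~\ref{thm:bound_hessian} treats the depth parameter uniformly and yields the exponent $\ell^2$ when applied at depth $\ell$. In particular, the recursion on Hessians of neurons that underlies Theorem~\ref{thm:bound_hessian} (the matrix-Gaussian-series argument together with the cancellation of in-degrees against the scaling factor) must depend only on the depth of the neuron in question, not on the depth of the ambient network. Since the sub-network we constructed is a bona fide feedforward neural network with depth $\ell$, this replacement is immediate; no additional analysis beyond what Theorem~\ref{thm:bound_hessian} already supplies is needed. One should also record, as a small bookkeeping remark, that taking the activation at $v_k^{(\ell)}$ to be the identity is compatible with Assumption~\ref{assump:sigma} (trivially, with $\gamma_0=\gamma_1=\gamma_2=0$ on that single vertex), so the constants hidden in the $O(\cdot)$ remain admissible.
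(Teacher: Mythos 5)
Your reduction is correct and is essentially the paper's own route: the paper likewise treats each pre-activation as a feedforward network in its own right (its Hessian-block lemma is proved for every layer $\ell$, not just $\ell=L$, and the block-summation argument of Theorem~\ref{thm:bound_hessian} is then repeated verbatim), so your ancestor-sub-DAG construction plus a black-box application of Theorem~\ref{thm:bound_hessian} at depth $\ell$ is the same argument made explicit. The only bookkeeping worth recording is that the bound you obtain is in terms of the sub-network's width $m'\ge m$, which converts to the stated bound in $m$ because $1/\sqrt{m'}\le 1/\sqrt{m}$ and $\log m' = O(\log m)$ by Assumption~\ref{assump:md}.
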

\begin{remark}
Note that pre-activations in the input layer i.e., the input data and in the first layer are constant and linear functions respectively, hence the spectral norm of their Hessian is zero.
\end{remark}

\paragraph{Experimental verification.} To verify our theoretical result on the scaling of the Hessian norm, i.e., Theorem~\ref{thm:bound_hessian}, we train a  DAG network built from a 3-layer DenseNet with each weight removed i.i.d. with probability $1/2$,  on $10$ data points of CIFAR-2 (2-class subset of CIFAR-10~\cite{krizhevsky2009learning}) using GD.  We compute the maximum relative change of the tangent kernel (definition is deferred to~Eq.~(\ref{eq:ntk})) during training, i.e., $\max_t\|K_t-K_0\|/\|K_0\|$ to simulate the scaling of the spectral norm of the Hessian. We observe the convergence of loss for all widths $\{2,2^2,...,2^{12}\}$, and the scaling of the Hessian follows close to the theoretical prediction of $\Theta(1/\sqrt{m})$. See Figure~\ref{fig:exp}.

\begin{figure}
\centering
\begin{minipage}{.48\textwidth}
  \centering
  \vspace{-12pt}
  \includegraphics[width=0.9\linewidth]{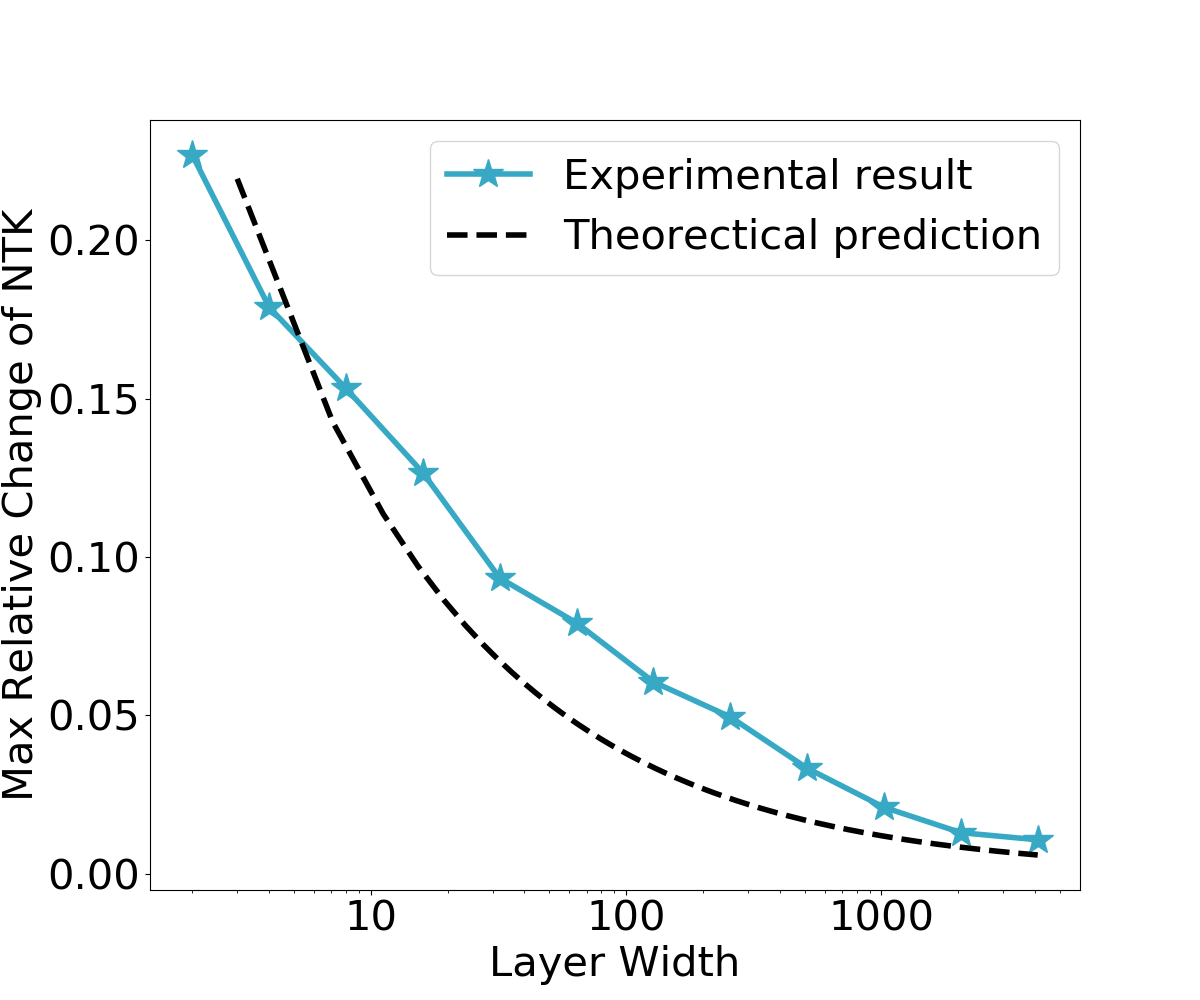} 
  \captionof{figure}{{\bf Transition to linearity of DAG network.} The experimental result approximates well the theoretical prediction of relative change of  tangent kernel from initialization to convergence, as a function of the network width. Each point on the solid curve is the average of independent 5 runs.}
  \label{fig:exp}
\end{minipage}\hfill
\begin{minipage}{.48\textwidth}
  \centering
  \includegraphics[width=0.85\linewidth]{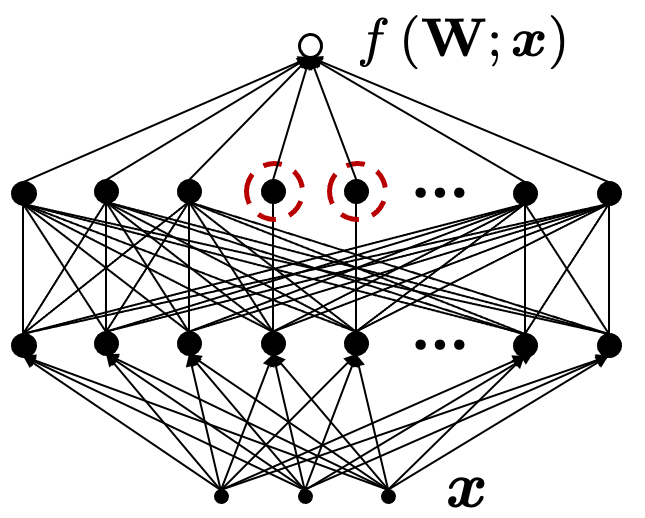}
  \vspace{10pt}
  \captionof{figure}{{\bf An example of DAG network with bottleneck neurons.} The DAG network $f(\rmW;\vx)$ has two bottleneck neurons (in red dashed circles) with in-degree 1, while the rest of neurons except for the input and the first layer have large in-degree. In this case,  $f(\rmW;\vx)$ will still transition to linearity with respect to $\rmW$ as the number of neurons goes to infinity.}
  \label{fig:bottleneck_neuron}
\end{minipage}
\vspace{-10pt}
\end{figure}

\paragraph{Non-linear activation on output neurons breaks transition to linearity.} In the above discussions, the transition to linearity of networks are under the assumption of identity activation function on every output neuron. In fact, the activation function on output neurons is critical to the linearity of neural networks. Simply, composing a non-linear function with a linear function will break the linearity.
Consistently, as shown in~\cite{liu2020loss} for FCNs, with non-linear activation function on the  output, transition to linearity does not hold any more.

\paragraph{``Bottleneck neurons'' do not necessarily break transition to linearity.}
We have seen that if {\it all}  neurons have sufficiently large in-degree, the network will transition to linearity. Does transition to linearity still hold, if neurons with small in-degree exist? 
We call  neurons with small in-degree {\it bottleneck neurons}, as their in-degrees are smaller than rest of neurons hence forming ``bottlenecks''.
As we show in Appendix E, existence of these bottleneck neurons does not break the transition to linearity, as long as the number of such neurons is significantly smaller than that of non-bottleneck neurons.
Figure~\ref{fig:bottleneck_neuron} shows an example with two bottleneck neurons, whose in-degree is $1$. This network still transitions to linearity as the number of neurons goes to infinity.

\subsection{Proof sketch of Theorem~\ref{thm:bound_hessian}}\label{subsec:proof}

By Lemma~\ref{lemma:sum_hessian}, the spectral norm of $H_{f_k}$ can be bounded by the summation of the spectral norm of all the Hessian blocks, i.e., $\|H_{f_k}\| \leq \sum_{\ell_1,\ell_2} \| H_{f_k}^{(\ell_1,\ell_2)}\|$, where $H_{f_k}^{(\ell_1,\ell_2)}:= \frac{\partial^2 f_k}{\partial \rvw^{(\ell_1)}\partial \rvw^{(\ell_2)}}$. Therefore, it suffices to bound the spectral norm of each block.
Without lose of generality, we consider the block  with $1\leq \ell_1\leq \ell_2\leq L$. 

By the chain rule of derivatives, we can write the Hessian block into:
\begin{align}\label{eq:hessian_block}
    \frac{\partial^2 f_k}{\partial \rvw^{(\ell_1)}\partial \rvw^{(\ell_2)}} = \sum_{\ell'=\ell_2}^{L}\sum_{i=1}^{d_{\ell'}}\frac{\partial^2 f_i^{(\ell')}}{\partial \rvw^{(\ell_1)}\partial \rvw^{(\ell_2)}}\frac{\partial f_k}{\partial f_i^{(\ell')}}
    := \sum_{\ell'=\ell_2}^L G^{L,\ell'}_{k}.
\end{align}

For each $G_k^{L,\ell'}$, since $f_i^{(\ell')} = \sigma\left(\tilde{f}_i^{(\ell')}\right)$, again by the chain rule of derivatives, we have
\begin{align}\label{eq:deri_G}
  G^{L,\ell'}_{k}&= \sum_{i=1}^{d_{\ell'}} \frac{\partial^2  \tilde{f}_i^{(\ell')}}{\partial  \rvw^{(\ell_1)}\partial \rvw^{(\ell_2)}}\frac{\partial f_k}{\partial  \tilde{f}_i^{(\ell')}} + \frac{1}{\sqrt{m_k^{(L)}}}\sum_{i: f_i^{(\ell')} \in \F_{\S^{(L)}_k}} \left(\rvw_k^{(L)}\right)_{\id^{L,k}_{\ell',i}} \sigma''\left(\tilde{ f}^{(\ell')}_i\right) \frac{\partial \tilde{ f}^{(\ell')}_i}{\partial \rvw^{(\ell_1)}}\left(\frac{\partial \tilde{ f}^{(\ell')}_i}{\partial \rvw^{(\ell_2)}}\right)^T\nonumber\\
    &= \frac{1}{\sqrt{m_k^{(L)}}}\sum_{r = \ell'}^{L-1} \sum_{i: f_i^{(r)}\in \F_{\S_k^{(L)}}}  \left(\rvw_k^{(L)}\right)_{\id^{L,k}_{r,i}} \sigma'\left(\tilde{f}_s^{(r)}\right) G^{r,\ell'}_i\nonumber\\
    &~~~~~+ \frac{1}{\sqrt{m_k^{(L)}}}\sum_{i: f_i^{(\ell')} \in \F_{\S^{(L)}_k}} \left(\rvw_k^{(L)}\right)_{\id^{L,k}_{\ell',i}} \sigma''\left(\tilde{ f}^{(\ell')}_i\right) \frac{\partial \tilde{ f}^{(\ell')}_i}{\partial \rvw^{(\ell_1)}}\left(\frac{\partial \tilde{ f}^{(\ell')}_i}{\partial \rvw^{(\ell_2)}}\right)^T,
\end{align}
where $\F_{\S_k^{(L)}} := \{ f: f\in f_{\S_k^{(L)}}\}$ and $\id^{L,k}_{\ell',i}:=\{p: \left(f_{\S_k^{(L)}}\right)_p = f_i^{(\ell')}\}$.

The first quantity on the RHS of the above equation, $\sum \left(\rvw_k^{(L)}\right)_{\id^{L,k}_{\ell',i}} \sigma'\left(\tilde{f}_i^{(r)}\right) G^{r,\ell'}_i$, is a matrix Gaussian series with respect to random variables $\rvw_k^{(L)}$, conditioned on fixed $\sigma'\left(\tilde{f}_i^{(r)}\right) G^{r,\ell'}_i$ for all $i$ such that $f_i^{(r)}\in \F_{\S_k^{(\ell)}}$.  We apply the tail bound for matrix Gaussian series, Theorem 4.1.1 from~\cite{tropp2015introduction}, to bound this quantity. To that end,  we need to  bound  its matrix variance, which   suffices to bound the spectral norm of $\sum_i G_i^{r,\ell'}$ since $\sigma'(\cdot)$ is assumed to be uniformly bounded by Assumption~\ref{assump:sigma}. There is a recursive relation that the norm bound of $G_k^{L,\ell'}$ depends on the norm bound of $G_i^{r,\ell'}$ which appears in the matrix variance. Therefore, we can recursively apply the argument to bound each $G$.

Similarly, the second quantity on the RHS of the above equation is also a matrix Gaussian series with respect to $\rvw_k^{(L)}$, conditioned on fixed $ \sigma''\left(\tilde{f}_i^{(\ell')}\right)\frac{\partial \tilde{ f}^{(\ell')}_i}{\partial \rvw^{(\ell_1)}}\left(\frac{\partial \tilde{ f}^{(\ell')}_i}{\partial \rvw^{(\ell_2)}}\right)^T$ for all $i$ such that $f_i^{(\ell')} \in \F_{\S^{(L)}_k}$.  As $\sigma''(\cdot)$ is assumed to be uniformly bounded by Assumption~\ref{assump:sigma}, we use Lemma~\ref{lemma:2_F_norm} to bound its matrix variance, hence the matrix Gaussian series can be bounded.

Note that such tail bound does not scale with the largest in-degree of the networks, since the in-degree of $f_k$, i.e., $m_k^{(L)}$,  will be cancelled out with the scaling factor $\nicefrac{1}{\sqrt{m_k^{(L)}}}$ in the bound of matrix variance.

See the complete proof in Appendix~\ref{sec:mainproof}.

\section{Relation to optimization }\label{sec:lowerbound}

While transition to linearity is a significant and surprising property of wide networks in its own right, it also  plays an important role in building the optimization theory of wide feedforward neural networks. 
Specifically, transition to linearity provides a path toward showing that the corresponding loss function satisfies the PL$^*$ condition in a ball of a certain radius, which is sufficient for exponential convergence of optimization to a global minimum by gradient descent or SGD~\cite{liu2020loss}.

Consider a supervised learning task. Given training inputs and labels  $\{(\vx_i,y_i)\}_{i=1}^n$, we use GD/SGD to minimize the square loss:
\begin{align}\label{eq:loss}
    \l(\rvw) = \frac{1}{2} \sum_{i=1}^n(f(\rvw;\vx_i)-y_i)^2,
\end{align}
where $f(\rvw;\cdot)$ is a  feedforward neural network.

The loss $\l(\rvw)$ is said to satisfy $\mu$-PL$^*$ condition, a variant of the well-known Polyak-{\L}ojasiewicz condition~\cite{polyak1963gradient,lojasiewicz1963topological}, at point $\rvw$, if
\begin{align*}
    \|\nabla_{\rvw} \l(\rvw)\|^2 \geq 2\mu \l(\rvw), ~~~\mu>0.
\end{align*}
Satisfaction of this $\mu$-PL$^*$ condition in a ball $\mathsf{B}(\rvw_0,R)$ with $R = O(1/\mu)$ around the starting point $\rvw_0$ of GD/SGD  guarantees a fast converge of the algorithm to a global minimum in this ball~\cite{liu2020loss}.

In the following, we use the transition to linearity of wide feedforward networks to 
establish the PL$^*$ condition for $\l(\rvw)$. Taking derivative on Eq.~(\ref{eq:loss}), we get
\begin{align}\label{eq:pl_*}
    \|\nabla_{\rvw} \l(\rvw)\|^2 \geq 2\lambda_{\min}(K(\rvw))\l(\rvw),
\end{align}
where matrix $K(\rvw)$, with elements 
\begin{align}\label{eq:ntk}
    K_{i,j}(\rvw) = \nabla_\rvw f(\rvw;\vx_i)^T\nabla_\rvw f(\rvw;\vx_j)~~\mathrm{for}~~ i,j\in[n],
\end{align}
 is called Neural Tangent Kernel (NTK)~\cite{jacot2018neural},
and $\lambda_{\min}(\cdot)$ denotes the smallest eigenvalue of a matrix. Note that, by definition, the NTK matrix is always positive semi-definite, i.e., $\lambda_{\min}(K(\rvw))\ge 0$.

Directly by the definition of PL$^*$ condition, at a given point $\rvw$, if $\lambda_{\min}(K(\rvw))$ is strictly positive, then the loss function $\l(\rvw)$ satisfies  PL$^*$ condition.

To establish convergence  of GD/SGD, it is sufficient to verify that  PL$^*$ condition is satisfied in a ball $\mathsf{B}(\rvw_0,R)$ with $R = O(1/\mu)$. Assuming that $\lambda_{\min}(K(\rvw_0))$ is bounded away from zero,  transition to linearity extends the satisfaction of the PL$^*$ condition from one point $\rvw_0$ to all  points in  $\mathsf{B}(\rvw_0,R)$.

\paragraph{PL$^*$ condition at $\rvw_0$.} For certain  neural networks, e.g., FCNs, CNNs and ResNets, strict positiveness of $\lambda_{\min}(K(\rvw_0))$ can be shown, see for example,~\cite{du2018gradient,du2018gradientdeep}. We expect same holds more generally,  in the case of general feedforward neural networks. 
Here we show that $\lambda_{\min}(K(\rvw_0))$ can be bounded from $0$ for one data point under certain additional assumptions.  Since there is only one data point, 
$\lambda_{\min}(K(\rvw_0)) = K(\rvw_0) = \|\nabla_\rvw f(\rvw_0)\|^2$.
We also assume the following on activation functions and the input.

 \smallskip
\begin{assumption}\label{assump:gaussian_input}
The input $\rvx$ satisfies $\rvx \sim \mathcal{N}(0,I_{d_0})$.
\end{assumption}
\begin{assumption}\label{assump:homo:act}
The activation function is homogeneous, i.e. $\sigma_i^{(\ell)}(az) = a^r \sigma_i^{(\ell)}(z), r>0$ for any constant $a$. And $\inf_{\ell\in[L-1],i\in [d_\ell]}\mathbb{E}_{z\sim\mathcal{N}(0,1)}\left[\sigma_i^{(\ell)}(z)^2\right] = C_\sigma>0$.
\end{assumption}
\smallskip
\begin{remark}
Here for simplicity we assume the activation functions are homogeneous with the same $r$. It is not hard to extend the result to the case that each activation function has different $r$.
\end{remark}
\begin{proposition}\label{prop:lowerbound_k}
With Assumption~\ref{assump:gaussian_input} and \ref{assump:homo:act}, we have for any $k\in {[d_\ell]}$,
\begin{align}
    \mathbb{E}_{\rvx,\rvw_0}[\|\nabla_\rvw f_k(\rvw_0)\|] &\geq \sqrt{\min\left(1,\min_{1\leq j\leq L}C_\sigma^{\sum_{l'=0}^{j-1}r^{\ell'}}\right)}= \Omega(1).
\end{align}
\end{proposition}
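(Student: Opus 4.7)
The plan is to lower bound $\|\nabla_\rvw f_k\|$ by the norm of a single, well-chosen block of the gradient, then to compute (a lower bound on) its expected squared norm via a depth-recursion that uses homogeneity, and finally to pass from $\mathbb{E}[\|\nabla_\rvw f_k\|^2]$ to $\mathbb{E}[\|\nabla_\rvw f_k\|]$ by concentration. Concretely, because output neurons carry identity activations we have $f_k = \tilde f_k^{(L)} = \frac{1}{\sqrt{m_k^{(L)}}}(\rvw_k^{(L)})^{\T}f_{\mathcal{S}_k^{(L)}}$, so $\bigl\|\tfrac{\partial f_k}{\partial \rvw_k^{(L)}}\bigr\|^2=\tfrac{1}{m_k^{(L)}}\sum_{u\in\mathcal{S}_k^{(L)}}f_u^2$, which is a sub-block of the full gradient and hence a lower bound for $\|\nabla_\rvw f_k\|^2$. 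More generally I would consider the analogous block at every intermediate layer $j$ along a path from $\mathcal{V}_{\mathrm{input}}$ to $v_k$; the $\min_{1\le j\le L}$ in the stated bound then corresponds to the value obtained at the worst such choice.

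Next I would prove by induction on the depth $p(u)$ that $\mathbb{E}[f_u^2]\ge C_\sigma^{\sum_{\ell'=0}^{p(u)-1}r^{\ell'}}$. The base case $p(u)=0$ uses $\mathbb{E}[x_v^2]=1$ from Assumption~\ref{assump:gaussian_input}. For the inductive step, condition on $f_{\mathcal{S}_{\mathrm{in}}(u)}$: because the incoming weights of $u$ are i.i.d.\ $\mathcal{N}(0,1)$, the pre-activation $\tilde f_u$ is conditionally $\mathcal{N}(0,s_u)$ with $s_u:=\|f_{\mathcal{S}_{\mathrm{in}}(u)}\|^2/m_u^{(\ell)}$. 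Homogeneity gives $\sigma(\sqrt{s_u}\,z)=s_u^{r/2}\sigma(z)$ for $z\sim\mathcal{N}(0,1)$, so $\mathbb{E}[f_u^2\mid s_u]=s_u^r\,\mathbb{E}_{z\sim\mathcal{N}(0,1)}[\sigma(z)^2]\ge C_\sigma\,s_u^r$. For $r\ge 1$, Jensen applied to the convex map $x\mapsto x^r$ gives $\mathbb{E}[s_u^r]\ge(\mathbb{E}[s_u])^r$, and $\mathbb{E}[s_u]=\frac{1}{m_u^{(\ell)}}\sum_{v\in\mathcal{S}_{\mathrm{in}}(u)}\mathbb{E}[f_v^2]$ is at least the worst-case inductive bound over the parents; the recursion therefore has the shape $\alpha_\ell\ge C_\sigma\,\alpha_{\ell-1}^r$, which unrolls to $C_\sigma^{\sum_{\ell'=0}^{\ell-1}r^{\ell'}}$ as claimed.

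The hard step, and the one I would spend the most care on, is passing from a lower bound on $\mathbb{E}[\|\nabla_\rvw f_k\|^2]$ to one on $\mathbb{E}[\|\nabla_\rvw f_k\|]$, since Jensen applied to $\sqrt{\cdot}$ points in the wrong direction. I would resolve this by a concentration argument: in the wide-network regime, $\frac{1}{m_k^{(L)}}\sum_{u\in\mathcal{S}_k^{(L)}}f_u^2$ concentrates tightly around its mean (a second-moment computation plus Chebyshev, or equivalently a Paley-Zygmund bound, suffices), so its square root concentrates around the square root of its mean and the expectation of the square root is $(1-o(1))\sqrt{\mathbb{E}[\cdot]}$. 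Taking expectations then delivers the stated $\sqrt{\,\cdot\,}$ bound; the outer $\min(1,\cdot)$ acts as a conservative cap for the regime $C_\sigma>1$, in which the recursion amplifies rather than damps and the bookkeeping is trivial. The same concentration argument incidentally also removes the $r\ge 1$ restriction used above, since once $s_u$ is shown to concentrate around $\mathbb{E}[s_u]$, $\mathbb{E}[s_u^r]$ is well-approximated by $(\mathbb{E}[s_u])^r$ regardless of the sign of $r-1$.
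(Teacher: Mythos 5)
Your core argument is essentially the paper's: you lower bound $\|\nabla_\rvw f_k\|$ by the last-layer block $\frac{1}{\sqrt{m_k^{(L)}}}\|f_{\mathcal{S}_k^{(L)}}\|$, and then run an induction over depth in which the pre-activation is conditionally Gaussian given its parents, homogeneity pulls out the factor $s_u^r$, and Jensen gives the recursion $\alpha_\ell \ge C_\sigma\,\alpha_{\ell-1}^r$. This is exactly the paper's proof (the paper also applies Jensen in the direction that needs $r\ge 1$ without comment, so that caveat is shared; one small misreading is that the $\min_{1\le j\le L}$ in the statement comes from the fact that the parents in $\mathcal{S}_k^{(L)}$ may sit at any depth $0,\dots,L-1$ in the DAG, not from choosing gradient blocks at different layers, but that does not affect your induction).

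The genuine gap is your third step, the passage from $\mathbb{E}\bigl[\|f_{\mathcal{S}_k^{(L)}}\|^2\bigr]/m_k^{(L)}\ge \min(\cdots)$ to $\mathbb{E}\bigl[\|f_{\mathcal{S}_k^{(L)}}\|\bigr]/\sqrt{m_k^{(L)}}\ge\sqrt{\min(\cdots)}$. You have correctly located the soft spot (the paper itself asserts this inequality with no justification), but the concentration argument you sketch does not close it. The neurons in $\mathcal{S}_k^{(L)}$ are not independent: they all depend on the \emph{same} single draw of the input $\rvx$ and on shared upstream weights, and the width assumption exempts the input and first layers, so the parents may funnel through a low-dimensional bottleneck (e.g.\ $d_0=1$). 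In that case $\frac{1}{m_k^{(L)}}\sum_{u}f_u^2$ can concentrate at best over the weight randomness conditionally on $\rvx$; the fluctuation coming from $\rvx$ is of constant order and never averages out as $m\to\infty$, so the claimed relation $\mathbb{E}[\sqrt{X}]=(1-o(1))\sqrt{\mathbb{E}[X]}$ is false in general (with $d_0=1$ the normalized sum behaves like a fixed power of $|x|$, and $\mathbb{E}|x|^{r'}$ is smaller than $\sqrt{\mathbb{E}|x|^{2r'}}$ by a constant factor), and the outer expectation over $\rvx$ then runs into the wrong-way Jensen again. Moreover, a Chebyshev or Paley--Zygmund route would require fourth-moment and cross-covariance control of the $f_u$'s that you have not established; the same objection applies to your claim that concentration removes the $r\ge 1$ restriction in the Jensen step. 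So the first two thirds of your proposal reproduce the paper's proof, while the added step that was meant to make the final moment comparison rigorous would fail as stated and needs a different argument (or a weakened constant) to yield the stated $\Omega(1)$ bound on the first moment.
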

The proof can be found in Appendix~\ref{proof:lowerbound_k}.

The above proposition establishes a positive lower bound on $\|\nabla_\rvw f(\rvw_0)\|$, hence also on $\lambda_{\min}(K(\rvw_0))$. Using Eq.~(\ref{eq:pl_*}), we get that the loss function $\l(\rvw)$ satisfies PL$^*$ at $\rvw_0$.

 \paragraph{Extending PL$^*$ condition to $\mathsf{B}(\rvw_0,R)$.} Now we use transition to linearity to extend the satisfaction of PL$^*$ condition to the ball $\mathsf{B}(\rvw_0,R)$. In Theorem~\ref{thm:bound_hessian}, we see that, a feedforward neural network $f(\rvw)$ transitions to linearity, i.e.,  $\|H_f(\rvw)\| = \tilde{O}(1/\sqrt{m})$ in this ball.
 An immediate consequence is that, for any $\rvw\in\mathsf{B}(\rvw_0,R)$,
\begin{align*}
   | \lambda_{\min}(K(\rvw)) - \lambda_{\min}(K(\rvw_0))| \leq O\left(\sup_{\rvw\in \mathsf{B}(\rvw_0,R)}\|H_f(\rvw)\|\right).
\end{align*}
Since $\lambda_{\min}(K(\rvw_0))$ is bound from $0$ and $\|H_f(\rvw)\|$ can be arbitrarily small as long as $m$ is large enough, we have $\lambda_{\min}(K(\rvw))$ is lower bounded from $0$ in the whole ball. Specifically, there is a $\mu>0$ such that
\begin{align*}
    \inf_{\rvw\in \mathsf{B}(\rvw_0,R)}\lambda_{\min}(K(\rvw))\ge \mu.
\end{align*}
Moreover, the radius $R$ can be set to be $O(1/\mu)$, while keeping the above inequality hold.
Then, applying the theory in \cite{liu2020loss}, existence of global minima of $\l(\rvw)$ and convergence of GD/SGD can be established.

 For the case of multiple data points, extra techniques are needed to lower bound the minimum eigenvalue of the tangent kernel. Since we focus more on the transition to linearity of feedforward neural networks in this paper, we leave it as a future work.
 
\paragraph{Non-linear activation function on outputs and transition to linearity.} 
In this paper, we mainly discussed  feedforward neural networks with linear activation function on output neurons.
In most of the literature also considers this setting~\cite{jacot2018neural,montanari2020interpolation,nguyen2021tight,du2018gradient,du2018gradientdeep,zou2019improved,zou2020gradient}. In fact, as pointed out in~\cite{liu2020loss} for FCNs,  
while this linearity of activation function on the outputs is necessary for transition to linearity, it is not required for  successful  optimization. Specifically, simply adding a nonlinear activation function on the output layer causes the Hessian norm to be $O(1)$, independently of the network width. Thus transition to linearity does not occur. 
 However, the corresponding square loss can still satisfy the PL$^*$ condition and the existence of global minimum and efficient optimization can still be established.

\section{Discussion and future directions}
In this work, we showed that transition to linearity arises in  general feedforward neural networks with arbitrary DAG architectures, extending previous results for standard architectures~\cite{jacot2018neural, lee2019wide, liu2020linearity}. 
We identified the minimum in-degree of all neurons except for the input and first layers as the key quantity to control the transition to linearity of general feedforward neural networks. 

We showed that the property of transition to linearity is flexible to the choice of the neuron function Eq.~(\ref{eq:local_function}). For example, skip connections Eq.~(\ref{eq:res_op}) and shared weights Eq.~(\ref{eq:cnn_op}) do not break the property. Therefore, we believe our framework can be extended to more complicated neuron functions, e.g., attention layers~\cite{hron2020infinite}. 
For non-feedforward networks, such as RNN, recent work
\cite{alemohammad2020recurrent} showed that they  also have a constant NTK. For this reason,  we expect  transition to linearity also to occur for of non-feedforward networks.

Another direction of future work is better understanding of optimization for DAG networks, which requires a more delicate analysis of the NTK at initialization. Specifically, with multiple training examples, a lower bound on the minimum eigenvalue of the NTK of the DAG networks is sufficient for the PL$^*$ condition to hold, thus  guaranteeing the convergence of GD/SGD.

\section*{Acknowledgements}

We are grateful for support of
the NSF and the
Simons Foundation for the Collaboration on the Theoretical
Foundations of Deep Learning\footnote{\url{https://deepfoundations.ai/}}
through awards DMS-2031883 and \#814639. We also acknowledge NSF support through  IIS-1815697 and the TILOS institute (NSF CCF-2112665).
We thank Nvidia for the donation of GPUs.
This work used the Extreme Science and Engineering Discovery Environment (XSEDE,~\cite{xsede}),  which is supported by National Science Foundation grant number ACI-1548562 and allocation TG-CIS210104.

\printbibliography

\newpage
\appendix
\section*{Appendix}
\paragraph{Notations for set of neurons.}We extra define the following notations for the proof. For $0\leq \ell\leq L-1$, $i\in [d_\ell]$, we use $\F_{\S_i^{(\ell)}}$ to denote the set of all the elements in the vector $f_{\S_i^{(\ell)}}$ (Eq.~(\ref{eq:basic_op})):
\begin{align}\label{eq:set_elements}
    \F_{\S_i^{(\ell)}} := \{ f: f\in f_{\S_i^{(\ell)}}\}.
\end{align}

And we use $\P^{(\ell)}$ to denote the set of all neurons in $\ell'$-th layer i.e., $f^{(\ell')}$ defined in Eq.~(\ref{eq:layer}), with $0\leq \ell'\leq \ell$:
\begin{align}\label{eq:pool}
    \P^{(\ell)} := \{ f: f\in f^{(\ell')}, \ell'\leq \ell\}. 
\end{align}

\paragraph{Activation functions.} In Assumption~\ref{assump:sigma}, we assume the Lipschitz continuity and smoothness for all the activation functions. In the proof of lemmas, e.g., Lemma~\ref{lemma:2_F_norm} and ~\ref{lemma:2diff}, we only use the fact that they are Lipschitz continuous and smooth, as well as bounded by a constant $\gamma_0>0$ at point $0$, hence we use $\sigma(\cdot)$ to denote all the activation functions like what we do in Assumption~\ref{assump:sigma} for simplicity.

\paragraph{Notations for derivatives.}Additionally, in the following we introduce notations of the derivatives, mainly used in the proof of Lemma~\ref{lemma:2_F_norm} and Lemma~\ref{lemma:2diff}.

By definition of feedforward neural networks in Section~\ref{sec:general_nn}, different from the standard neural networks such as FCNs and CNNs in which the connection between neurons are generally only in adjacent layers, the neurons in feedforward neural networks can be arbitrarily  connected as long as there is no loop. 

To that end, we define ${\partial  f_{\S_i^{(\ell)}}}/{\partial  f^{(\ell')}}$ to be a mask matrix for any $\ell'< \ell$, $i\in [d_{\ell}]$ to indicate whether the neurons $ f_{\S_i^{(\ell)}}$ appear in  $ f^{(\ell')}$:
\begin{align}\label{eq:mask}
    \left( \frac{\partial  f_{\S_i^{(\ell)}}}{\partial  f^{(\ell')}}\right)_{j,k} =\mathbb{I}\left\{\left( f_{\S_i^{(\ell)}}\right)_k \in  f_j^{(\ell')}\right\}.
\end{align}

And $\partial  f_i^{(\ell)} / \partial  f_{\S_i^{(\ell)}}$ and $\partial  f_i^{(\ell)} / \partial \rvw_i^{(\ell)}$ are standard derivatives according to Eq.~(\ref{eq:basic_op}):
\begin{align*}
    \frac{\partial  f_i^{(\ell)}}{ \partial  f_{\S_i^{(\ell)}}} &= \frac{1}{\sqrt{m_i^{(\ell)}}}\left(\rvw_i^{(\ell)}\right)^T (\sigma_i^{(\ell)})'(\tilde{ f}_i^{(\ell)}),\\
    \frac{\partial  f_i^{(\ell)}}{ \partial \rvw_i^{(\ell)}} &= \frac{1}{\sqrt{m_i^{(\ell)}}}\left( f_{\S_i^{(\ell)}}\right)^T (\sigma_i^{(\ell)})'(\tilde{ f}_i^{(\ell)}).
\end{align*}

We give a table of notations that will be frequently used (See Table~\ref{table:notations}). The same notations will be used for ResNets and CNNs with extra subscripts $\res$ and $\cnn$ respectively.
\begin{table}[ht]
  \caption{Table of notations}
  \label{sample-table}
  \centering
  \begin{tabular}{ll}
    \toprule
    Symbol     & Meaning    \\
    \midrule
    $f^{(\ell)}$ &  Vector of neurons in $\ell$-th layer    \\
    $d_\ell$ &  Number of  neurons in $\ell$-th layer, i.e., length of $f^{(\ell)}$  \\
    $f_{\S_i^{(\ell)}}$ & Vector of in-coming neurons of $f_i^{(\ell)}$     \\
    $\rvw_i^{(\ell)}$ & Weight vector corresponding to in-coming edges of $f_i^{(\ell)}$ \\
    $m_i^{(\ell)}$ & Number of in-coming neurons of $f_i^{(\ell)}$, i.e., length of  $f_{\S_i^{(\ell)}}$ and $\rvw_i^{(\ell)}$\\
        $\sigma_i^{(\ell)}$ & Activation function on $\tilde{f}_i^{(\ell)}$\\
    $\rvw^{(\ell)}$ & Weight vector corresponding to all incoming edges toward neurons at layer $\ell$\\
    $\F_{\S_i^{(\ell)}}$ & Set of all the elements in the vector $f_{\S_i^{(\ell)}}$ (Eq.~(\ref{eq:set_elements}))\\
    $\P^{(\ell)}$ & Set of all neurons in $f^{(\ell')}$ with $0\leq \ell'\leq \ell$ (Eq.~(\ref{eq:pool}))\\
    $\id^{\ell_1,i}_{\ell_2,j}$ & Index of $f_j^{(\ell_2)}$ in the vector $f_{\S_i^{(\ell_1)}}$\\
    \bottomrule
  \end{tabular}\label{table:notations}
\end{table}
\section{Examples of feedforward neural networks}\label{app:example}

Here we show that many common neural networks are special examples of the feedforward neural networks in Definition~\ref{def:fnn}.

\paragraph{Fully-connected neural networks.} Given an input $\vx \in \mathbb{R}^{d}$, an $L$-layer fully-connected neural network is defined as follows:
\begin{align}\label{eq:fc}
 &f^{(0)} = \vx, \nonumber\\
 &f^{(\ell)}= \sigma\left(\frac{1}{\sqrt{m_{\ell-1}}}W^{(\ell)}f^{(\ell-1)}\right), \  \ \forall \ell \in[L-1],\\
 &f(\rmW;\vx) := f^{(L)} = \frac{1}{\sqrt{m_{L-1}}}W^{(L)}f^{(L-1)},\nonumber
\end{align} 
where each $f^{(\ell)}$ is a $m_\ell$-dimensional vector-valued function, and  $\rmW := \left(W^{(1)},...,W^{(\ell)}\right)$, $W^{(\ell)} \in \mathbb{R}^{m_{\ell+1}\times m_\ell}$, is the collection of all the weight matrices. Here $\sigma(\cdot)$ is an element-wise activation function, e.g., sigmoid function.

For FCNs, the inputs are the $0$-th layer neurons $f^{(0)} = \vx$ and the outputs are the $\ell$-th layer neurons $f^{(\ell)}$, which have zero in-degrees and zero out-degrees, respectively. For each non-input neuron, its in-degree is the number of neurons in its previous layer, $m_{\ell-1}$; the summation in Eq.~(\ref{eq:local_function}) turns out to be over all the neurons in the previous layer, which is manifested in the matrix multiplication of  $W^{(\ell)}f^{(\ell-1)}$. For this network, the activation functions are the same, except the ones on input and output neurons, where identity functions are used in the definition above.

\paragraph{ DenseNets~\cite{huang2017densely}.} Given an input $\vx \in \mathbb{R}^{d}$, an $L$-layer DenseNet is defined as follows:
\begin{align}
 &f^{(0)} = f_{\mathrm{temp}}^{(0)} = \vx, \nonumber\\
 &f^{(\ell)}= \sigma\left(\frac{1}{\sqrt{\sum_{l' = 0}^{\ell-1}m_{\ell'}}}W^{(\ell)}f_{\mathrm{temp}}^{(\ell-1)}\right), \label{eq:dense_neuron}  \\
 &f_{\mathrm{temp}}^{(\ell)} = \left[\left(f_{\mathrm{temp}}^{(\ell-1)}\right)^T,\left(f^{(\ell)}\right)^T\right]^T, \  \ \forall \ell\in[L-1],\nonumber \\
 &f(\rmW;\vx) :=f^{(L)}=\frac{1}{\sqrt{\sum_{\ell' = 0}^{L-1}m_{\ell'}}}W^{(L)}f_{\mathrm{temp}}^{(L-1)}\label{eq:densnet},
\end{align} 
where $\rmW = \left(W^{(1)},...,W^{(L)}\right)$ is the collection of all the weight matrices.  Here $\sigma(\cdot)$ is an element-wise activation function and for each $\ell\in[L]$, $W^{(\ell)} \in \mathbb{R}^{m_{\ell}\times \sum_{\ell'=0}^{\ell-1} m_{\ell'}}$.

The DenseNet shares much similarity with the fully-connected neural network, except that each non-input neuron depends on all the neurons in previous layers. This difference makes the in-degree of the neuron be $\sum_{\ell' = 0}^{\ell-1}m_{\ell'}$.

\paragraph{Neural networks with randomly dropped edges.} Given a network $f$ built from a DAG, for any neuron $f_v$ , where $v\in \mathcal{V}\backslash \mathcal{V}_{\mathrm{input}}$, according to Eq.~(\ref{eq:fc}), it is defined by 
\begin{align*}
        f_v =\sigma_v(\tilde{f}_v), \quad \tilde{f}_v= \frac{1}{\sqrt{\In(v)}}\sum_{u\in\mathcal{S}_{\mathrm{in}}(v)} w_{(u,v)}f_u.
\end{align*}
If each edge $(u,v)$ is randomly dropped with parameter $p\in(0,1)$, then the above equation becomes
\begin{align*}
          f_v =\sigma_v(\tilde{f}_v), \quad \tilde{f}_v= \frac{1}{\sqrt{\In(v)}}\sum_{u\in\mathcal{S}_{\mathrm{in}}(v)} w_{(u,v)}f_u\cdot \mathbb{I}_{\{\xi_{u,v} \geq p\}},  
\end{align*}
where $\xi_{u,v}$ is i.i.d. drawn from \textsf{Bernoulli}($p$).

To interpret such an architecture, we can simply remove the edges $(u,v)$ in the DAG where $\xi_{u,v}<p$. Then it is not hard to see that the new DAG network corresponds to the network with randomly dropped edges. 

Similarly, for a neural network with randomly dropped edges in multiple layers, we can remove all the edges whose corresponding $\xi$ is less than $p$. Then the resulting DAG can describe this network architecture.

We note the similarity of this network with the popularly used dropout layer~\cite{srivastava2014dropout}, both of which have a mechanism of randomly dropping out neurons/edges. However, the major difference is that, neural networks with dropout layers dynamically remove (or put mask on) neurons/edges during training, while the networks we considered only here drop edges and are fixed during training.

\section{Proof of  Theorem~\ref{thm:bound_hessian}}\label{sec:mainproof}
We will first compute the Hessian matrix of the network function  then show how to bound  the spectral norm of it.

We denote for each $\ell\in[L]$,
\begin{align}\label{eq:up_low_m}
    \underline{m}_{\,\ell} := \inf_{i\in [d_\ell]} m_i^{(\ell)},~~~\overline{m}_{\ell} := \sup_{i\in [d_\ell]} m_i^{(\ell)}.
\end{align}
By Assumption~\ref{assump:md}, it is not hard to infer that $\overline{m}_\ell$ and $\underline{m}_{\,\ell}$ are also polynomial in $m$.

Fixing $k\in [d_\ell]$, to bound $\|H_{f_k}\|$, we will first bound the spectral norm of the each Hessian block $H_{f_k}^{(\ell_1,\ell_2)}$, which takes the form 
 \begin{align*}
    H_{f_k}^{(\ell_1,\ell_2)} := \frac{\partial^2 f_k}{\partial \rvw^{(\ell_1)}\partial \rvw^{(\ell_2)}},~~k\in [d_\ell],~~ \ell_1,\ell_2\in [L].
\end{align*}
 
Without lose of generality, we assume $1\leq \ell_1\leq \ell_2\leq L$ and we start with the simple case when $\ell_2=L$.

If $\ell_1 = \ell_2= L$, $H^{(L,L)}_{f_k}$ is simply a zero matrix since $f_k(\rvw)$ is linear in $\rvw^{(\ell)}$.

If $1\leq \ell_1<\ell_2 = L$, we will use the following Lemma:
\begin{lemma}\label{lemma:2_F_norm}
Given $\ell'\geq 1$,  for any $\ell'+1 \leq \ell \leq L$,  $\rvw\in \mathsf{B}(\rvw_0,R)$, and $j \in [d_{\ell}]$, we have, with probability at least $1-\exp(-C_{\ell,\ell'}^ f \log^{2}m)$,
\begin{align}
   &\left\|\frac{\partial f_{S_j^{(\ell)}}}{\partial \rvw^{(\ell')}}\right\|= {O}\left(\max_{\ell'+1\leq p\leq \ell}\frac{\sqrt{m_j^{(\ell)}}}{\sqrt{\underline{m}_{\,p}}}(\log m +R)^{\ell'}\right) = \tilde{O}\left(\max_{\ell'+1\leq p\leq \ell}\frac{\sqrt{m_j^{(\ell)}}}{\sqrt{\underline{m}_{\,p}}}R^{\ell'}\right), \label{eq:2norm}\\
   &\left\|\frac{\partial f_{S_j^{(\ell)}}}{\partial \rvw^{(\ell')}}\right\|_F= {O}\left(\sqrt{m_j^{(\ell)}}(\log m+R)^{\ell-1}\right) = \tilde{O}\left(\sqrt{m_j^{(\ell)}}R^{\ell-1}\right)\label{eq:Fnorm},
\end{align}
where $C_{\ell,\ell'}^ f >0$ is a constant.
\end{lemma}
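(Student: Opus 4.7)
The plan is to prove both bounds by strong induction on $\ell\ge \ell'+1$, with $\ell'\ge 1$ held fixed, starting from the base case $\ell=\ell'+1$. The engine is the chain-rule identity applied to Eq.~(\ref{eq:basic_op}): for any neuron $f_v$ with $p(v)>\ell'$,
\[
\frac{\partial f_v}{\partial \rvw^{(\ell')}} \;=\; \sigma'(\tilde f_v)\cdot \frac{1}{\sqrt{\In(v)}}\bigl(\rvw_v^{(p(v))}\bigr)^{\T}\frac{\partial f_{\mathcal{S}_v^{(p(v))}}}{\partial \rvw^{(\ell')}},
\]
whereas for $v$ at layer exactly $\ell'$ the row is the explicit sparse vector $\sigma'(\tilde f_v)\,f_{\mathcal{S}_v^{(\ell')}}^{\T}/\sqrt{\In(v)}$ supported on the coordinate block of $\rvw_v^{(\ell')}$. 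Stacking these rows for $v\in \mathcal{S}_j^{(\ell)}$ realizes the Jacobian as a matrix Gaussian series in the intermediate-layer weights, conditional on all strictly earlier layers, to which the matrix Gaussian tail bound (Theorem 4.1.1 of the Tropp monograph, already used in the paper's main argument) applies.

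For the base case $\ell=\ell'+1$, every $v\in \mathcal{S}_j^{(\ell)}$ lies at layer $\le \ell'$, and only those at exactly $\ell'$ yield nonzero rows. Since distinct such $v$'s own disjoint coordinate blocks of $\rvw^{(\ell')}$, the matrix is (after permutation) block-diagonal, so the spectral norm is the largest row norm and the squared Frobenius norm is the sum of squared row norms; both reduce to bounding $\|f_{\mathcal{S}_v^{(\ell')}}\|^2/\In(v)$, which is controlled by standard in-ball concentration and yields the stated exponents. For the inductive step, partition $\mathcal{S}_j^{(\ell)}$ by layer $p\le \ell-1$; for each $p>\ell'$ the stacked rows form, conditional on layers $<p$, a matrix Gaussian series in $\{\rvw_v^{(p)}\}_v$ with matrix variance controlled by $\In(v)^{-1}\sum_v \|J_v\|_F^2$ where $J_v:=\partial f_{\mathcal{S}_v^{(p)}}/\partial \rvw^{(\ell')}$. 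The inductive Frobenius bound estimates each $\|J_v\|_F^2$, and Tropp's bound then gives spectral norm $\tilde{O}(\sqrt{m_j^{(\ell)}/\underline m_{\,p}}\,(\log m+R)^{\ell'})$, with the $1/\sqrt{\underline m_{\,p}}$ saving over Frobenius being the Gaussian-averaging gain and the $(\log m+R)^{\ell'}$ factor inherited from the base-case rows rather than the averaging step. The Frobenius bound instead expands $\|\partial f_{\mathcal{S}_j^{(\ell)}}/\partial \rvw^{(\ell')}\|_F^2=\sum_v\|\nabla_{\rvw^{(\ell')}}f_v\|^2$ and applies Hanson--Wright-type concentration to each quadratic form $\|(\rvw_v^{(p)})^{\T}J_v\|^2\approx \|J_v\|_F^2$, iterating layer by layer and compounding $(\log m+R)$ to $(\log m+R)^{\ell-1}$.

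The principal obstacle will be DAG bookkeeping: since $\mathcal{S}_j^{(\ell)}$ may span many layers at once, the conditioning must be done carefully layer by layer so that each invocation of Tropp's bound operates on Gaussian weights $\rvw_v^{(p)}$ that are independent of the already-conditioned inner Jacobian $J_v$. A secondary difficulty is aggregating the Tropp deviations (each carrying a $\log d$ dimension factor) together with a union bound over all $(\ell'',j')$ with $\ell'+1\le \ell''\le \ell-1$ into the stated $1-\exp(-\Omega(\log^2 m))$ probability; Assumption~\ref{assump:md} is precisely what keeps those dimension factors polynomial in $m$, so that a $\log^2 m$ deviation absorbs them. Finally, one must verify that the polylog exponents separate as stated: $(\log m+R)^{\ell'}$ for the spectral bound (because the Gaussian-averaging step contributes only a $\sqrt{\log m}$ per application, not a $(\log m+R)$) versus $(\log m+R)^{\ell-1}$ for the Frobenius bound (because Hanson--Wright concentration preserves a $(\log m+R)$ factor per layer traversed).
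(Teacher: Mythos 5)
Your overall architecture is the same as the paper's: simultaneous induction on $\ell$ for both norms, a base case at $\ell=\ell'+1$ reduced to the block structure of the layer-$\ell'$ rows and in-ball concentration of $\|f_{\S_v^{(\ell')}}\|$, a layerwise split of $\S_j^{(\ell)}$ in the inductive step with Gaussian concentration conditioned on the inner Jacobians, a per-column norm-concentration argument for the Frobenius bound, and Assumption~\ref{assump:md} to keep the dimension factors polynomial under the union bounds. (The paper's spectral step uses its Lemma~\ref{lemma:2norm}, an $\varepsilon$-net bound for matrices with independent Gaussian columns $B_i\va_i$, rather than the Tropp series bound; replacing it by Tropp is a legitimate variant, since the conditional matrix for each intermediate layer $p$ is indeed a Gaussian series in the entries of $\{\rvw_v^{(p)}\}$.)

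However, there is a genuine gap in your variance accounting for the spectral bound. Writing $B_{v,s}=\tfrac{\sigma'(\tilde f_v)}{\sqrt{\In(v)}}(J_v)_{[:,s]}e_{\id(v)}^T$, the Tropp variance is $\nu=\max\bigl\{\bigl\|\sum_v\tfrac{(\sigma'(\tilde f_v))^2}{\In(v)}J_vJ_v^T\bigr\|,\ \max_v\tfrac{(\sigma'(\tilde f_v))^2}{\In(v)}\|J_v\|_F^2\bigr\}$. If, as you propose, you control it by $\sum_v \In(v)^{-1}\|J_v\|_F^2$ and feed in only the inductive \emph{Frobenius} bound $\|J_v\|_F^2=\tilde O(\In(v)(\log m+R)^{2(p-1)})$, you get $\nu=\tilde O\bigl(m_j^{(\ell)}(\log m+R)^{2(p-1)}\bigr)$ and hence a spectral estimate $\tilde O\bigl(\sqrt{m_j^{(\ell)}}(\log m+R)^{p-1}\bigr)$ — no $1/\sqrt{\underline{m}_{\,p}}$ factor at all, i.e.\ no better than the Frobenius bound. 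That is fatal downstream: the block $H^{(\ell_1,L)}_{f_k}=\tfrac{1}{\sqrt{m_k^{(L)}}}\partial f_{\S_k^{(L)}}/\partial\rvw^{(\ell_1)}$ would then only be $\tilde O(1)$, not $\tilde O(1/\sqrt m)$. The $1/\sqrt{\underline{m}_{\,p}}$ saving is \emph{not} a "Gaussian-averaging gain" of the Tropp step; it comes from using the inductive \emph{spectral} hypothesis on the first side of the variance, $\sum_v\In(v)^{-1}\|J_v\|^2\le m_j^{(\ell)}\max_v\In(v)^{-1}\|J_v\|^2=\tilde O\bigl(m_j^{(\ell)}\max_{p'}(\log m+R)^{2\ell'}/\underline{m}_{\,p'}\bigr)$, which is small precisely because the base-case spectral norm is $\tilde O(1)$ while the Frobenius norm is $\tilde O(\sqrt{\In(v)})$, and the $1/\sqrt{\In(v)}$ pre-activation scaling then converts that deficit into the $1/\sqrt{\underline{m}_{\,p}}$ factor. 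The Frobenius hypothesis is needed only for the block-diagonal side $\max_v\In(v)^{-1}\|J_v\|_F^2$. This is exactly how the paper's inductive step is organized (its Eq.~(\ref{eq:2_norm_intm}) carries both $\max_i\|J_i\|/\sqrt{m_i^{(q)}}$ and $\max_i\|J_i\|_F/\sqrt{m_i^{(q)}}$), so your proof goes through once you use both inductive bounds in the variance; as written, the key quantitative claim does not follow from the ingredients you invoke.
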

See the proof in Appendix~\ref{proof:2_F_norm}.

By Lemma~\ref{lemma:2_F_norm}, with probability at least $1-\exp(-\Omega(\log^2{m}))$,
\begin{align*}
    \left\|H_{f_k}^{(\ell_1,L)}\right\| &= \left\|\frac{1}{\sqrt{m_k^{(\ell)}}}\frac{\partial  f_{\S^{(\ell)}_k}}{\partial \rvw^{(\ell_1)}}\right\|= {O}\left(\max_{\ell_1+1 \leq \ell\leq L}\frac{1}{\sqrt{\underline{m}_{\,\ell}}}(\log m + R)^{\ell_1}\right) = \tilde{O}(R^{\ell_1}/\sqrt{m}).
\end{align*}

For the rest of blocks that $1\leq \ell_1\leq \ell_2\leq L-1$, we will use the following lemma to bound their spectral norms:

\begin{lemma}\label{lemma:2diff}
Given $1\leq \ell_1\leq \ell_2 \leq L-1$, for any $\ell_2< \ell\leq L$, $\rvw \in \mathsf{B}(\rvw_0,R)$,  and $j\in [d_{\ell}]$, we have, with probability at least $1-\exp(-\Omega(\log^2 m))$,
\begin{align}\label{eq:2diff_each}
    \left\|\frac{\partial^2 \tilde{f}_j^{(\ell)}}{\partial \rvw^{(\ell_1)}\partial \rvw^{(\ell_2)}}\right\|=  {O}\left(\max_{\ell_1+1\leq p\leq \ell}\frac{1}{\sqrt{\underline{m}_{\,p}}}(\log m + R)^{\ell^2}\right) = \tilde{O}\left(\max_{\ell_1+1\leq p\leq \ell}\frac{R^{\ell^2}}{\sqrt{\underline{m}_{\,p}}}\right).
\end{align}
\end{lemma}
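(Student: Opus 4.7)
\textbf{Proof Proposal for Lemma~\ref{lemma:2diff}.}
My plan is to proceed by induction on the outer layer index $\ell$, with the base case $\ell=\ell_2+1$ where all incoming neurons of layer $\ell$ live in layers $\leq \ell_2$, so the second--derivative recursion terminates. The engine at each inductive step is an expansion of $\tilde f_j^{(\ell)}$ through the defining relation Eq.~\eqref{eq:basic_op}. Since $\ell>\ell_2\geq \ell_1$, the weight vector $\rvw_j^{(\ell)}$ does not appear in either $\rvw^{(\ell_1)}$ or $\rvw^{(\ell_2)}$; consequently
\begin{align*}
\frac{\partial^2 \tilde f_j^{(\ell)}}{\partial \rvw^{(\ell_1)}\partial \rvw^{(\ell_2)}}
= \frac{1}{\sqrt{m_j^{(\ell)}}}\sum_{s=1}^{m_j^{(\ell)}} \bigl(\rvw_j^{(\ell)}\bigr)_s\, \frac{\partial^2 (f_{\S_j^{(\ell)}})_s}{\partial \rvw^{(\ell_1)}\partial \rvw^{(\ell_2)}},
\end{align*}
and expanding the second derivative of the activation $(f_{\S_j^{(\ell)}})_s=\sigma(\tilde f_s)$ via the chain rule yields a sum of two types of matrix coefficients $A_s$ and $B_s$: type~(a), $A_s=\sigma''(\tilde f_s)\,\partial_{\rvw^{(\ell_1)}}\tilde f_s\,(\partial_{\rvw^{(\ell_2)}}\tilde f_s)^{\T}$ (first derivatives only), and type~(b), $B_s=\sigma'(\tilde f_s)\,\partial^2_{\rvw^{(\ell_1)}\rvw^{(\ell_2)}}\tilde f_s$ (second derivative at a strictly earlier layer).

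With $A_s$ and $B_s$ fixed by the layers below, each of the two resulting pieces is a \emph{matrix Gaussian series} in the Gaussian coordinates of $\rvw_{0,j}^{(\ell)}$ after the split $\rvw_j^{(\ell)}=\rvw_{0,j}^{(\ell)}+(\rvw_j^{(\ell)}-\rvw_{0,j}^{(\ell)})$. For the stochastic piece I would invoke Theorem~4.1.1 of Tropp, which controls the spectral norm by $O(\sqrt{v\,\log d})$ where $v=\max\{\|\sum_s A_s A_s^{\T}\|,\|\sum_s A_s^{\T} A_s\|\}$ (and analogously for $B_s$), and where $d$ is the ambient dimension of the block; Assumption~\ref{assump:md} keeps $\log d=O(\log m)$. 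For the deterministic perturbation I would use Cauchy--Schwarz: $\bigl\|\sum_s(\rvw_j^{(\ell)}-\rvw_{0,j}^{(\ell)})_s A_s\bigr\|\leq R\sqrt{\sum_s\|A_s\|^2}$, which is of the same order as the variance term after the $1/\sqrt{m_j^{(\ell)}}$ prefactor is applied.

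To feed Tropp's bound I need to control $\|A_s\|$, $\|B_s\|$ and the relevant sums. The activation bounds are uniform by Assumption~\ref{assump:sigma}; the first--derivative matrices $\partial_{\rvw^{(\ell_i)}}\tilde f_s$ are controlled by the operator and Frobenius bounds in Lemma~\ref{lemma:2_F_norm} applied one layer down, which gives $\|\sum_s A_s A_s^{\T}\|$ and $\|\sum_s A_s^{\T} A_s\|$ of order $m_j^{(\ell)}\,\tilde O(R^{2(\ell-1)})/\underline{m}_{\,p_\star}$ for the worst $p_\star$. The crucial cancellation is that the prefactor $1/\sqrt{m_j^{(\ell)}}$ removes exactly one power of $m_j^{(\ell)}$, so the resulting bound for the type~(a) contribution is $\tilde O(\max_p R^{\ell-1}/\sqrt{\underline{m}_{\,p}})$. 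For the type~(b) term the induction hypothesis supplies $\|B_s\|=\tilde O(\max_p R^{(\ell-1)^2}/\sqrt{\underline{m}_{\,p}})$, which enters Tropp's variance bound and, after the same $1/\sqrt{m_j^{(\ell)}}$ cancellation, passes from layer $\ell-1$ to layer $\ell$ while accumulating an additional $R^{O(\ell)}$ factor; telescoping across the recursion produces the advertised $R^{\ell^2}$ growth. A union bound over the finitely many $(\ell_1,\ell_2,\ell,j)$ tuples (polynomially many by Assumption~\ref{assump:md}) gives the stated $1-\exp(-\Omega(\log^2 m))$ confidence.

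The main obstacle is arranging the induction so that the recursion closes cleanly: each step conditions on the entire layer structure below $\ell$ to treat the coefficients $A_s,B_s$ as deterministic, applies Tropp, and then has to re-randomize to move to the next layer without losing independence. I would handle this by peeling the layers from $\ell$ downward, using independence of $\rvw_j^{(\ell)}$ from all weights in layers $<\ell$, and taking a union bound over the finitely many inductive events. Two subsidiary technical points are (i) extending the bound from $\rvw_0$ to the full ball $\mathsf{B}(\rvw_0,R)$, which the Gaussian--plus--bounded--perturbation split handles at every level, and (ii) ensuring that whenever $s$ corresponds to a neuron in a layer $r\leq\ell_2$ the contribution vanishes (either because $\tilde f_s$ is independent of $\rvw^{(\ell_2)}$ when $r<\ell_2$, or because $\tilde f_s$ is linear in $\rvw^{(\ell_2)}$ when $r=\ell_2$), which makes the base case trivial and the recursion well--defined.
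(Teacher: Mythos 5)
Your overall route is essentially the paper's: a layer-by-layer induction in which the Hessian block of $\tilde f_j^{(\ell)}$ is written as a matrix Gaussian series in the top weight vector $\rvw_j^{(\ell)}$ (conditioning on all lower-layer weights), split into a $\sigma''$ term carrying first derivatives and a $\sigma'$ term carrying second derivatives of incoming pre-activations, with Tropp's Theorem 4.1.1 applied to each, the matrix variance controlled by Lemma~\ref{lemma:2_F_norm} (respectively the inductive hypothesis), the $1/\sqrt{m_j^{(\ell)}}$ prefactor cancelling the in-degree, the Gaussian-plus-bounded-perturbation split handling the ball $\mathsf{B}(\rvw_0,R)$, and Assumption~\ref{assump:md} keeping the dimension factor logarithmic (the paper makes this last point precise via Proposition~\ref{prop:m_bound}). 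This mirrors the recursion in Eq.~(\ref{eq:deri_G}); the appendix proof organizes the same content slightly differently (summing per contributing layer and invoking Lemma~\ref{lemma:2diff_tilde} for the path-derivative coefficients), but the mechanism is the same.

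There is, however, one concrete error in how you close the recursion. You claim that for incoming neurons $s$ in layer $r=\ell_2$ the contribution vanishes ``because $\tilde f_s$ is linear in $\rvw^{(\ell_2)}$,'' and you use this to declare the base case $\ell=\ell_2+1$ trivial. Linearity in $\rvw^{(\ell_2)}$ kills only the pure block $\partial^2_{\rvw^{(\ell_2)}\rvw^{(\ell_2)}}$; the mixed block does not vanish: for $\ell_1<\ell_2$,
\begin{align*}
\frac{\partial^2 \tilde f_s^{(\ell_2)}}{\partial \rvw^{(\ell_1)}\partial \rvw^{(\ell_2)}}
= \frac{1}{\sqrt{m_s^{(\ell_2)}}}\,\frac{\partial f_{\S_s^{(\ell_2)}}}{\partial \rvw^{(\ell_1)}} \neq 0,
\end{align*}
and likewise the type-(a) matrices $A_s$ at $r=\ell_2$ are nonzero. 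So your base case is not trivial, and at every inductive step the type-(b) terms coming from layer-$\ell_2$ neighbors are not covered by your inductive hypothesis (which, as you state the lemma, only applies to $\ell>\ell_2$). This is exactly why the paper proves a generalized statement that includes $\ell=\ell_2$, whose base case is precisely $\bigl\|\tfrac{1}{\sqrt{m_j^{(\ell_2)}}}\,\partial f_{\S_j^{(\ell_2)}}/\partial\rvw^{(\ell_1)}\bigr\|$, bounded through the spectral and Frobenius norm estimates of Lemma~\ref{lemma:2_F_norm}. The gap is fixable with tools you already invoke (either extend your induction down to $\ell=\ell_2$, or bound these $r=\ell_2$ terms directly via Lemma~\ref{lemma:2_F_norm} inside the Gaussian-series variance), but as written the termination of your recursion is incorrect.
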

See the proof in Appendix~\ref{proof:2diff}.
\begin{remark}\label{remark:node_linear}
Note that the above results hold for any $\ell \leq L$. When $\ell=L$, $\tilde{ f}_j^{(\ell)} = f_j$ which is what we need to show the transition to linearity of $f_j$. When $\ell<L$, as discussed before, we can regard $\tilde{ f}_j^{(\ell)}$ as a function of its parameters. We note that  $\tilde{ f}_j^{(\ell)}$ with $\ell<L$ will also transition to linearity by applying the same analysis for $f_j$, which is the result of Theorem~\ref{cor:bound_each_hessian}.
\end{remark}

By letting $\ell=L$ in Lemma~\ref{lemma:2diff}, for any $1\leq \ell_1\leq \ell_2\leq L-1$, with probability at least $1-\exp(-\Omega(\log^2{m}))$,
\begin{align*}
    \left\|H_{f_k}^{(\ell_1,\ell_2)}\right\|={O}\left(\max_{\ell_1\leq \ell \leq L}\frac{1}{\sqrt{\underline{m}_{\,\ell}}} (\log m+R)^{\ell^2}\right)
    = {O}((\log m +R)^{L^2}/\sqrt{m}) = \tilde{O}(R^{L^2}/\sqrt{m}).
\end{align*}

Finally by Lemma~\ref{lemma:sum_hessian}, the spectral norm of $H_{f_k}$ can be bounded by the summation of the spectral norm of all the Hessian blocks, i.e.,$\|H_{f_k}\| \leq \sum_{\ell_1,\ell_2} \| H_{f_k}^{(\ell_1,\ell_2)}\|$. Applying the union bound over the indices of layers $\ell_1,\ell_2$, we finish the proof.

\section{Feedforward neural networks with multiple output}\label{sec:multiple_output}
In cases of multiple output neurons, the network function is vector-valued and its Hessian is a three-order tensor. The spectral norm of Hessian is defined in a standard way, i.e., $$\|\rmH_f(\rvw)\| := \sup_{\|\rvv\|=\|\rvu\|=\|\rvs\|=1}\sum_{i,j,k}\left(\rmH_f(\rvw)\right)_{i,j,k}v_i u_j s_k,$$ where $\rvs\in \mathbb{R}^{d_\ell}$ and $\rvv$, $\rvu$ have the same dimension with $\rvw$. It is not hard to see that $\|\rmH_f(\rvw)\| \leq d_\ell \max_{k\in[d_\ell]} \|H_{f_{k}}(\rvw)\|$.

If the number of output neurons $d_\ell$ is bounded (as in most practical cases), the spectral norm of the Hessian of $f$ is also of the order $\tilde{O}(1/\sqrt{m})$, with high probability, as a direct consequence of Theorem~\ref{thm:bound_hessian}.

\begin{corollary}\label{cor:bound_multi_hessian}
Suppose Assumption \ref{assump:input}, \ref{assump:sigma} and \ref{assump:md} hold.   Given a fixed radius $R>0$, for all $\rvw \in \mathsf{B}(\rvw_0,R)$, with probability at least $1-\exp(-\Omega(\log^2{m}))$ over the random initialization $\rvw_0$,  a vector-valued feedforward neural network $f$ satisfies
\begin{align}
     \left\|\rmH_{f}({\rvw}) \right\| = \tilde{O}\left(\frac{R^{L^2}}{\sqrt{m}}\right).
\end{align}
\end{corollary}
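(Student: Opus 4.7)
The plan is to reduce the vector-valued case to the scalar case already handled by Theorem~\ref{thm:bound_hessian}, via the tensor/scalar inequality stated just above the corollary. Concretely, for a vector-valued output $f=(f_1,\dots,f_{d_L})^T$, the Hessian $\rmH_f(\rvw)$ is the order-$3$ tensor whose slice along the output coordinate $k$ is exactly the scalar Hessian $H_{f_k}(\rvw)$. The definition of the tensor spectral norm given in the excerpt, together with Cauchy–Schwarz on the output coordinate, yields
\begin{align*}
\|\rmH_f(\rvw)\| \;=\; \sup_{\|\rvv\|=\|\rvu\|=\|\rvs\|=1}\sum_k s_k\,\rvv^T H_{f_k}(\rvw)\,\rvu \;\le\; \sum_{k=1}^{d_L}\|H_{f_k}(\rvw)\| \;\le\; d_L\,\max_{k\in[d_L]}\|H_{f_k}(\rvw)\|,
\end{align*}
which is the inequality already asserted in the paragraph preceding the corollary. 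So it is enough to control the maximum over $k$ of the scalar Hessian norms.

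Next, I would invoke Theorem~\ref{thm:bound_hessian} for each fixed output index $k\in[d_L]$: with probability at least $1-\exp(-\Omega(\log^2 m))$ over $\rvw_0$, the bound $\sup_{\rvw\in\mathsf{B}(\rvw_0,R)}\|H_{f_k}(\rvw)\|=\tilde{O}(R^{L^2}/\sqrt{m})$ holds simultaneously for all $\rvw$ in the ball. Taking a union bound over the $d_L$ output coordinates inflates the failure probability by a factor of $d_L$, which, under the standing assumption $d_L=O(1)$ (bounded number of outputs, as stated at the beginning of Appendix~\ref{sec:multiple_output}), is absorbed into the $\exp(-\Omega(\log^2 m))$ tail without changing its order. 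On this joint good event, $\max_k \|H_{f_k}(\rvw)\|=\tilde{O}(R^{L^2}/\sqrt{m})$ uniformly on $\mathsf{B}(\rvw_0,R)$, so combining with the tensor-norm inequality above and absorbing the constant $d_L$ into the $\tilde{O}$ notation gives exactly $\|\rmH_f(\rvw)\|=\tilde{O}(R^{L^2}/\sqrt{m})$.

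There is essentially no technical obstacle here since all the heavy lifting (the matrix Gaussian series tail bound, the recursive Hessian control, and the extension from $\rvw_0$ to the whole ball) is already done in Theorem~\ref{thm:bound_hessian}. The only point worth being careful about is the dependence on $d_L$: if $d_L$ were allowed to grow with $m$ (say polynomially), the linear factor $d_L$ in the bound and the factor $d_L$ in the union bound would both still be absorbed by $\tilde{O}(\cdot)$ as long as $d_L$ is at most polylogarithmic, or more generally $d_L\le m^{o(1)}$ paired with the fact that the tail is $\exp(-\Omega(\log^2 m))$. The cleanest statement, and the one matching the corollary, is under the $d_L=O(1)$ convention already made in Appendix~\ref{sec:multiple_output}.
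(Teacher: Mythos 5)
Your proposal is correct and follows essentially the same route as the paper: the paper also reduces the tensor norm via $\|\rmH_f(\rvw)\|\le d_L\max_{k}\|H_{f_k}(\rvw)\|$ and then treats the corollary as a direct consequence of Theorem~\ref{thm:bound_hessian} under $d_L=O(1)$, with the union bound over the $d_L$ outputs absorbed into the $\exp(-\Omega(\log^2 m))$ tail. Your added remark on how large $d_L$ could be before the argument degrades is a fine observation but not needed for the stated result.
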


\section{Feedforward neural networks with skip connections}\label{sec:skip_connection}
 
In this section, we discuss the property of transition to linearity holds for networks with skip connection.

We formally define the skip connection in the following. 
We add a skip connection to each neuron then 
the neuron functions Eq.~(\ref{eq:basic_op}) become 
\begin{align}\label{eq:res_op}
    f_{i,\res}^{(\ell)} =\sigma_{i}^{(\ell)}\left(\tilde{f}_{i,\res}^{(\ell)}\right) + {f}_{{B}(\ell,i),\res}^{({A}(\ell,i))}, \ \  \tilde{f}_{i,\res}^{(\ell)}= \frac{1}{\sqrt{m_{i}^{(\ell)}}}\left(\rvw_i^{(\ell)}\right)^T f_{\S^{(\ell)}_i,\res} ,
\end{align}
where $1\leq \ell \leq L-1$ and $i\in[d_{\ell}]$. Here ${A}(\ell,i) \in \{0,\cdots,\ell-1\}$ denotes the layer index of the connected neuron by skip connection with respect to $f_{i,\res}^{(\ell)}$ and ${B}(\ell,i)\in [d_{{A}(\ell,i)}]$.

And for the output layer $\ell = L$, we define
\begin{align*}
    {f}_{i,\res}^{(L)} = \tilde{f}_{i,\res}^{(L)}= \frac{1}{\sqrt{m_{i}^{(L)}}}\left(\rvw_i^{(L)}\right)^T f_{\S^{(L)}_i,\res} ,
\end{align*}
where $i\in[d_L]$.

The following theorem shows the property of transition to linearity holds for networks with skip connections. The proof of the  theorem follows the almost identical idea with the proof of 
Theorem~\ref{thm:bound_hessian}, hence we present the proof sketch and focus on the arguments that are new for $f_{\res}$.

\begin{theorem}[Scaling of the Hessian norm for $f_{\res}$]\label{thm:skip_connection}
Suppose Assumption \ref{assump:input}, \ref{assump:sigma} and \ref{assump:md} hold.   Given a fixed radius $R>0$, for all $\rvw \in \mathsf{B}(\rvw_0,R)$, with probability at least $1-\exp(-\Omega(\log^2{m}))$ over the random initliazation of $\rvw_0$, each output neuron $f_{k,\res}$  satisfies 
\begin{align}
     \left\|H_{{f}_{k,\res}}({\rvw}) \right\| = \tilde{O}\left(\frac{R^{L^2}}{\sqrt{m}}\right),~~~\ell\in [L],~~k\in {[d_\ell]}.
\end{align}
\end{theorem}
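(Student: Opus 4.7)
I would mirror the proof of Theorem~\ref{thm:bound_hessian}: decompose $H_{f_{k,\res}}(\rvw) = \sum_{\ell_1,\ell_2} H_{f_{k,\res}}^{(\ell_1,\ell_2)}(\rvw)$ via Lemma~\ref{lemma:sum_hessian}, bound each block in spectral norm, and conclude by a union bound over the $O(L^2)$ layer pairs. The central observation is that Eq.~(\ref{eq:res_op}) attaches to each hidden neuron a copy of a strictly earlier neuron $f_{B(\ell,i),\res}^{(A(\ell,i))}$ with $A(\ell,i) < \ell$, so both its Jacobian and its Hessian with respect to $\rvw$ coincide with the Jacobian and Hessian of a shallower-layer neuron. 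This is exactly the structure that a layerwise induction on $\ell$ can exploit. Note also that the output layer is pure linear ($f_{k,\res}^{(L)} = \tilde f_{k,\res}^{(L)}$), so $H_{f_{k,\res}}^{(L,L)} = 0$, just as in the original proof.

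First I would reprove analogues of Lemma~\ref{lemma:2_F_norm} and Lemma~\ref{lemma:2diff} for $f_{\res}$ by induction on $\ell$. For the Jacobian, chain rule gives
\[
\frac{\partial f_{i,\res}^{(\ell)}}{\partial \rvw^{(\ell')}}
= \sigma'\!\left(\tilde f_{i,\res}^{(\ell)}\right) \frac{\partial \tilde f_{i,\res}^{(\ell)}}{\partial \rvw^{(\ell')}}
\;+\; \frac{\partial f_{B(\ell,i),\res}^{(A(\ell,i))}}{\partial \rvw^{(\ell')}},
\]
where the first summand is handled exactly as in Lemma~\ref{lemma:2_F_norm} (a matrix Gaussian series in $\rvw_i^{(\ell)}$ whose variance reduces to Jacobians of layers $\le \ell-1$), and the second summand is a Jacobian of a neuron in layer $A(\ell,i) < \ell$, covered by the inductive hypothesis. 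Because the skip is the identity (no weight multiplier), the triangle inequality preserves the original scaling $\tilde O\!\left(\sqrt{m_j^{(\ell)}/\underline m_{\,p}}\, R^{\ell'}\right)$ and its Frobenius counterpart.

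The Hessian analogue of Lemma~\ref{lemma:2diff} proceeds identically: expanding $\partial^2 \tilde f_{j,\res}^{(\ell)}/(\partial \rvw^{(\ell_1)}\partial \rvw^{(\ell_2)})$ via chain rule yields matrix Gaussian series of the same form as in Eq.~(\ref{eq:deri_G}), whose matrix variances are now controlled by the updated Jacobian estimate, plus an additional additive term $\partial^2 f_{B(\ell,i),\res}^{(A(\ell,i))}/(\partial \rvw^{(\ell_1)}\partial \rvw^{(\ell_2)})$ contributed by the skip connection. By the inductive hypothesis, this extra term is at most $\tilde O(R^{A(\ell,i)^2}/\sqrt m) \le \tilde O(R^{(\ell-1)^2}/\sqrt m)$, which is absorbed into the target bound. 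Applying the tail bound for matrix Gaussian series (Theorem~4.1.1 of~\cite{tropp2015introduction}) at each induction step yields the $\tilde O(R^{\ell^2}/\sqrt m)$ estimate per block; summing blocks and invoking Lemma~\ref{lemma:sum_hessian} then gives $\|H_{f_{k,\res}}(\rvw)\| = \tilde O(R^{L^2}/\sqrt m)$.

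The principal obstacle I anticipate is the bookkeeping of the branching recursion: with skip connections, an inductive call at layer $\ell$ triggers sub-calls at both the immediate predecessor layers and at the arbitrary skip-target layer $A(\ell,i)$, so one must verify that the inductive hypothesis is invoked at the correct depths and that the failure probabilities, unioned over all layer pairs $(\ell_1,\ell_2)$, all neurons, and all $(A,B)$-choices (polynomial in $m$ by Assumption~\ref{assump:md}), still sum to $\exp(-\Omega(\log^2 m))$. Since each skip merely copies a shallower-layer Jacobian/Hessian of the same form and no extra weight-dependent multiplier is introduced, no new concentration phenomenon appears and the exponent $L^2$ in $R$ is unchanged.
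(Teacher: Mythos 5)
Your proposal follows essentially the same route as the paper's proof: expand the Hessian blocks by the chain rule, observe that the skip connections contribute extra additive terms which are simply Jacobians/Hessians of neurons at strictly earlier layers $A(\ell,i)<\ell$ (the paper's $T_2$ and $T_4$ terms), and note that these take the same matrix-Gaussian-series form as the original terms, so the recursive bounds of Lemmas~\ref{lemma:2_F_norm} and~\ref{lemma:2diff} absorb them without changing the $\tilde O(R^{L^2}/\sqrt m)$ scaling or the $\exp(-\Omega(\log^2 m))$ failure probability. Your observation that the identity skip introduces no weight-dependent multiplier, and your handling of the $H^{(L,L)}=0$ block and the union bound, match the paper's argument.
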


\begin{proof}[Proof sketch of Theorem~\ref{thm:skip_connection}]

For each output $f_{k,\res}$, where $k\in[d_\ell]$, similar to the proof of Theorem~\ref{thm:bound_hessian}, we bound the spectral norm of each Hessian block, i.e., $\frac{\partial^2 f_{k,\res}}{\partial \rvw^{(\ell_1)}\partial \rvw^{(\ell_2)}}$. Without loss of generality, we assume $1\leq \ell_1\leq \ell_2\leq L.$

Similar to Eq.(\ref{eq:hessian_block}), we derive the expression of the Hessian block by definition:
\begin{align*}
    \frac{\partial^2 f_{k,\res}}{\partial \rvw^{(\ell_1)}\partial \rvw^{(\ell_2)}} &= \sum_{\ell'=\ell_2}^{L}\sum_{i=1}^{d_{\ell'}}\frac{\partial^2 f_{i,\res}^{(\ell')}}{\partial \rvw^{(\ell_1)}\partial \rvw^{(\ell_2)}}\frac{\partial f_{k,\res}}{\partial f_{i,\res}^{(\ell')}} := \sum_{\ell'=\ell_2}^L G^{L,\ell'}_{k,\res}.
\end{align*}

And again by chain rule of derivatives, each $G^{L,\ell'}_{k,\res}$ can be written as
\begin{align*}
   G^{L,\ell'}_{k,\res}&=  \underbrace{ \frac{1}{\sqrt{m_k^{(L)}}}\sum_{r = \ell'}^{L-1} \sum_{s: f_{s,\res}^{(r)}\in \F_{\S_k^{(L)}}}  \left(\rvw_k^{(L)}\right)_{\id^{L,k}_{r,s}} \sigma'\left(\tilde{f}_{s,\res}^{(r)}\right) G^{r,\ell'}_{s,\res}}_{T_1}\\
    &~~~~+\underbrace{\frac{1}{\sqrt{m_k^{(L)}}}\sum_{r = \ell'}^{L-1} \sum_{s: f_{s,\res}^{(r)}\in \F_{\S_k^{(L)}}}  \left(\rvw_k^{(L)}\right)_{\id^{L,k}_{r,s}} \sigma'\left(\tilde{f}_{B(\ell',s),\res}^{(A(\ell',s))}\right) G^{A(\ell',s),\ell'}_{B(\ell',s),\res}}_{T_2} \\
   &~~~~+ \underbrace{\frac{1}{\sqrt{m_k^{(L)}}}\sum_{i: f_{i,\res}^{(\ell')} \in \F_{\S^{(L)}_{k,\res}}} \left(\rvw_k^{(L)}\right)_{\id^{L,k}_{\ell',i}} \left(\sigma''\left(\tilde{ f}^{(\ell')}_{i,\res}\right) \frac{\partial \tilde{ f}^{(\ell')}_{i,\res}}{\partial \rvw^{(\ell_1)}}\left(\frac{\partial \tilde{ f}^{(\ell')}_{i,\res}}{\partial \rvw^{(\ell_2)}}\right)^T \right)}_{T_3}\\
   &~~~~+\underbrace{\frac{1}{\sqrt{m_k^{(L)}}}\sum_{i: f_{i,\res}^{(\ell')} \in \F_{\S^{(L)}_{k,\res}}} \left(\rvw_k^{(L)}\right)_{\id^{L,k}_{\ell',i}} \left(\sigma''\left(\tilde{ f}^{(A(\ell',i))}_{B(\ell',i),\res}\right) \frac{\partial \tilde{ f}^{(A(\ell',i))}_{B(\ell',i),\res}}{\partial \rvw^{(\ell_1)}}\left(\frac{\partial \tilde{ f}^{(A(\ell',i))}_{B(\ell',i),\res}}{\partial \rvw^{(\ell_2)}}\right)^T \right)}_{T_4},
\end{align*}
where  $\F_{\S_{k,\res}^{(L)}} := \{ f: f\in f_{\S_{k,\res}^{(L)}}\}$ and $\id^{L,k}_{\ell',i}:=\{p: \left(f_{\S_k^{(L)},\res}\right)_p = f_{i,\res}^{(\ell')}\}$.

Note that the new terms  which are induced by the skip connection  in the above equation are
\begin{align*}
 T_2 = \frac{1}{\sqrt{m_k^{(L)}}}\sum_{r = \ell'}^{L-1} \sum_{s: f_{s,\res}^{(r)}\in \F_{\S_k^{(L)}}}  \left(\rvw_k^{(L)}\right)_{\id^{L,k}_{r,s}} \sigma'\left(\tilde{f}_{B(\ell',s),\res}^{(A(\ell',s))}\right) G^{A(\ell',s),\ell'}_{B(\ell',s),\res},
\end{align*}
and
\begin{align*}
T_4 = \frac{1}{\sqrt{m_k^{(L)}}}\sum_{i: f_{i,\res}^{(\ell')} \in \F_{\S^{(L)}_{k,\res}}} \left(\rvw_k^{(L)}\right)_{\id^{L,k}_{\ell',i}} \left(\sigma''\left(\tilde{ f}^{(A(\ell',i))}_{B(\ell',i),\res}\right) \frac{\partial \tilde{ f}^{(A(\ell',i))}_{B(\ell',i),\res}}{\partial \rvw^{(\ell_1)}}\left(\frac{\partial \tilde{ f}^{(A(\ell',i))}_{B(\ell',i),\res}}{\partial \rvw^{(\ell_2)}}\right)^T \right).
\end{align*}

These two new terms take the same form with the original two terms i.e., $T_1$ and $T_3$, which are matrix Gaussian series with respect to the random variables $\rvw_k^{(L)}$. Therefore, we can use the same method as $T_1$ and $T_3$ to bound the spectral norm of $T_2$ and $T_4$. 

As $A(\ell',i) < \ell'$ by definition, the bound on $T_2$ and $T_4$ will be automatically included in our recursive analysis. Then the rest of the proof is identical to the one for feedforward neural networks, i.e., the proof of Theorem~\ref{thm:bound_hessian}.

\end{proof}

\section{Feedforward neural networks with shared weights, e.g., convolutional neural networks}\label{sec:cnn}

In this section, we consider the feedforward neural networks where weight parameters are shared, e.g., convolutional neural networks, as an extension to our result where we assume each weight parameter $w_e\in\mathcal{W}$ is initialized i.i.d.  We will show that such feedforward neural networks in which the weight parameters are shared constant times, i.e., independent of the width $m$, the property of transition to linearity still holds.

We formally define the networks with shared weights in the following:
\begin{align}\label{eq:cnn_op}
    f_{i,j,\cnn}^{(\ell)} = \sigma_{i}^{(\ell)}\left(\tilde{f}_{i,j,\cnn}^{(\ell)}\right),~~~\tilde{f}_{i,j,\cnn}^{(\ell)} = \frac{1}{\sqrt{m_{i,j}^{(\ell)}}}\left(\rvw_{i}^{(\ell)}\right)^T f_{\S_{i,j}^{(\ell)},\cnn},
\end{align}
where $1\leq \ell \leq L$, $i\in[d_\ell]$. We introduce new index  $j\in[D(\ell,i)]$ where  $D(\ell,i)$  denotes the number of times that weights $\rvw_i^{(\ell)}$ are shared. Note that the element in $f_{\S_{i,j}^{(\ell)},\cnn}$ is allowed to be $0$, corresponding to the zero padding which is commonly used in CNNs.

We similarly denote the output of the networks $f_{i,j,\cnn}^{(L)}$ by $f_{i,j,\cnn}$.

To see how CNNs fit into this definition, we consider a CNN with 1-D convolution as a simple example.

\paragraph{Convolutional neural networks} Given input $\vx \in \mathbb{R}^d$, an $\ell$-layer convolutional neural network is defined as follows:
\begin{align}\label{eq:cnn}
 & f^{(0)} = \vx, \nonumber\\
 & f^{(\ell)}= \sigma\left(\frac{1}{\sqrt{m_{\ell-1}\times p}}W^{(\ell)}\ast f^{(\ell-1)}\right), \  \ \forall l \in[L-1],\nonumber\\
 &f_i(\rmW;\vx) = \frac{1}{\sqrt{m_{\ell-1}\times d}}\left\langle W_{[i,:,:]}^{(\ell)}, f^{(\ell-1)}\right\rangle,\  \ \forall i \in[d_{\ell}],
\end{align} 
where $\rmW = \left(W^{(1)},...,W^{(\ell)}\right)$ is the collection of all the weight matrices. 

We denote the size of the window by $p\times 1$, hence $W^{(\ell)}\in\mathbb{R}^{m_\ell\times m_{\ell-1}\times p}$ for $\ell\in[L-1]$. We assume the stride is 1 for simplicity, and we do the standard zero-padding to each $ f^{(\ell)}$ such that for each $\ell\in [L-1]$, $ f^{(\ell)} \in \mathbb{R}^{m_\ell\times d}$.  At the last layer, as $ f^{(\ell-1)}\in \mathbb{R}^{m_{\ell-1} \times d}$ and $W^{(\ell)}\in\mathbb{R}^{m_{L}\times m_{\ell-1} \times d}$, we do the matrix inner product for each $i\in[d_{\ell}]$.

Now we show how Eq.~(\ref{eq:cnn}) fits into Eq.~(\ref{eq:cnn_op}). For $\ell \in[L-1]$, in Eq.~(\ref{eq:cnn}), each component of $f^{(\ell)}\in \mathbb{R}^{m_\ell\times d}$ is computed as
\begin{align*}
   f^{(\ell)}_{i,j} = \sigma\left(\frac{1}{\sqrt{m_{\ell-1}\times p}} \left\langle W^{(\ell)}_{[i,:,:]}, f_{\left[:,j-\ceil{\frac{p-1}{2}}:j+\ceil{\frac{p-1}{2}}\right]}^{(\ell-1)}\right\rangle\right).
\end{align*}
Therefore, $m_{i,j}^{(\ell)}$, $\rvw_i^{(\ell)}$ and $f_{\S_{i,j}^{(\ell)},\cnn}$ in Eq.~(\ref{eq:cnn_op}) correspond to $m_{\ell-1}\times p$, $W^{(\ell)}_{[i,:,:]}$ and $f_{\left[:,j-\ceil{\frac{p-1}{2}}:j+\ceil{\frac{p-1}{2}}\right]}^{(\ell-1)}$ respectively. For $\ell = L$, $m_{i,j}^{(L)}$ corresponds to $m_{L-1}\times d$ and $f_{\S_{i,j}^{(L)},\cnn}$
corresponds to $f^{(L-1)}$. Then we can see our definition of networks with shared weights, i.e., Eq.~(\ref{eq:cnn_op}) includes standard CNN as an special example. 

Similar to Theorem~\ref{thm:bound_hessian}, we will show that the spectral norm of its Hessian can be controlled, hence the property of transition to linearity will hold for $f^{(\ell)}_{\cnn}$. The proof of the following theorem follows the almost identical idea with the proof of 
Theorem~\ref{thm:bound_hessian}, hence we present the proof sketch and focus on the arguments that are new for $f_{\cnn}$.

\begin{theorem}\label{thm:hessian_cnn}
Suppose Assumption \ref{assump:input}, \ref{assump:sigma} and \ref{assump:md} hold.   Given a fixed radius $R>0$, for all $\rvw \in \mathsf{B}(\rvw_0,R)$, with probability at least $1-\exp(-\Omega(\log^2{m}))$ over the random initliazation of $\rvw_0$,  each output neuron $f_{i,j,\cnn}(\rvw)$ satisfies
\begin{align}\label{eq:hessian_bound_cnn}
     \left\|H_{{f}_{i,j,\cnn}}({\rvw}) \right\| =O\left({(\log m +R)^{\ell^2}} / {\sqrt{m}}\right)= \tilde{O}\left({R^{\ell^2}}/{\sqrt{m}}\right),~~~\ell\in [L],~~i\in {[d_\ell]},~~j\in[D(i,\ell)].
\end{align}
\end{theorem}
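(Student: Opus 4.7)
The plan is to follow the recursive decomposition used in the proof of Theorem~\ref{thm:bound_hessian} and Theorem~\ref{cor:bound_each_hessian}, with the modifications needed to accommodate weight sharing. The crucial structural assumption is that each shared weight vector $\rvw_i^{(\ell)}$ is reused at most a constant number of times, i.e., $D(\ell,i) = O(1)$ independently of the width $m$; this guarantees that the multiplicative overhead introduced by sharing does not interact with the $1/\sqrt{m}$ scaling.

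First, I would decompose $H_{f_{i,j,\cnn}}$ into blocks $\partial^2 f_{i,j,\cnn}/\partial \rvw^{(\ell_1)}\partial\rvw^{(\ell_2)}$ as in Eq.~(\ref{eq:hessian_block}). When applying the chain rule through Eq.~(\ref{eq:cnn_op}), the derivative $\partial f_{i',j',\cnn}^{(\ell')}/\partial \rvw_i^{(\ell)}$ now collects contributions from every position $j''\in[D(\ell,i)]$ at which $\rvw_i^{(\ell)}$ is reused to produce an ancestor of $f_{i',j',\cnn}^{(\ell')}$. Because $D(\ell,i) = O(1)$, the sum over such positions introduces only a constant-factor blow-up. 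I would then derive the analogue of the key identity that expressed $G^{L,\ell'}_k$ as a matrix Gaussian series in $\rvw_k^{(L)}$ together with a second-derivative term: the same identity goes through, except that each occurrence of a single $\left(\rvw_k^{(L)}\right)_{\id^{L,k}_{r,s}}$ is replaced by a sum of at most $D(L,k) = O(1)$ occurrences of the same Gaussian variable.

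Second, I would establish the CNN analogues of Lemma~\ref{lemma:2_F_norm} and Lemma~\ref{lemma:2diff}. The spectral-norm and Frobenius-norm bounds on $\partial f_{\S_{i,j}^{(\ell)},\cnn}/\partial\rvw^{(\ell')}$ follow the same recursive argument: conditioning on lower-layer quantities, the partial derivative is a matrix Gaussian series in the current layer's weights, whose matrix variance is controlled by the spectral norm of an analogous quantity one layer below. Weight sharing inflates each entry by at most a factor $D(\ell,i) = O(1)$, so the matrix variance is enlarged only by a constant factor, preserving the form $\tilde{O}(R^{\ell'}/\sqrt{m})$. The recursive cancellation between $1/\sqrt{m_{i,j}^{(\ell)}}$ and the matrix variance, which is the heart of the original argument, is not affected by sharing.

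The main obstacle, and what deserves the most care, is verifying that the matrix Gaussian series concentration (Theorem 4.1.1 of~\cite{tropp2015introduction}) still applies cleanly in the shared-weight setting. The issue is that the random matrix is now a linear combination of random variables with repeated appearances of the \emph{same} scalar Gaussian, so one must re-group terms so that each distinct weight entry contributes as a single coefficient times a fixed deterministic matrix before invoking Tropp's bound. Once the re-grouped coefficient matrices are shown to have spectral/Frobenius norms that are at most $D(\ell,i) = O(1)$ times the corresponding single-copy coefficient matrices, the matrix variance statistic changes by at most a constant, and the dimension factor in Tropp's inequality remains $\mathrm{poly}(m)$ by Assumption~\ref{assump:md}. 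The rest of the recursion—union bound over $(\ell_1,\ell_2)$, conversion from $\rvw_0$ to arbitrary $\rvw\in\mathsf{B}(\rvw_0,R)$ via the Lipschitz behavior of the Hessian, and assembly of the per-block bounds into the overall $\tilde{O}(R^{\ell^2}/\sqrt{m})$ estimate—proceeds exactly as in the proof of Theorem~\ref{thm:bound_hessian}, yielding Eq.~(\ref{eq:hessian_bound_cnn}).
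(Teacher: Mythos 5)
Your proposal follows essentially the same route as the paper's proof of Theorem~\ref{thm:hessian_cnn}: block decomposition of the Hessian, the chain-rule identity with an extra summation over the sharing index of cardinality $D(\ell,i)=O(1)$, the recursive matrix Gaussian series bound via Tropp's inequality with the $\mathrm{poly}(m)$ dimension factor, and assembly as in Theorem~\ref{thm:bound_hessian}. Your explicit re-grouping of repeated occurrences of the same Gaussian entry before invoking the concentration bound is a point the paper handles only implicitly, but it does not change the argument, so the two proofs coincide in substance.
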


\begin{proof}[Proof sketch of Theorem~\ref{thm:hessian_cnn}]

Similar to the proof of Theorem~\ref{thm:bound_hessian}, by Lemma~\ref{lemma:sum_hessian}, the spectral norm of $H_{f_{i,j,\cnn}}$ can be bounded by the summation of the spectral norm of all the Hessian blocks, i.e., $\|H_{f_{i,j,\cnn}}\| \leq \sum_{\ell_1,\ell_2} \| H_{f_{i,j,\cnn}}^{(\ell_1,\ell_2)}\|$, where $H_{f_{i,j,\cnn}}^{(\ell_1,\ell_2)}:= \frac{\partial^2 f_k}{\partial \rvw^{(\ell_1)}\partial \rvw^{(\ell_2)}}$. Therefore, it suffices to bound the spectral norm of each block.
Without lose of generality, we consider the block  with $1\leq \ell_1\leq \ell_2\leq L$. 

By the chain rule of derivatives, we can write the Hessian block into:
\begin{align}
    \frac{\partial^2 f_{i,j,\cnn}}{\partial \rvw^{(\ell_1)}\partial \rvw^{(\ell_2)}} &= \sum_{\ell'=\ell_2}^{L}\sum_{k=1}^{d_{\ell'}}\sum_{t=1}^{D(k,\ell')}\frac{\partial^2 f_{k,t,\cnn}^{(\ell')}}{\partial \rvw^{(\ell_1)}\partial \rvw^{(\ell_2)}}\frac{\partial f_{i,j,\cnn}}{\partial f_{k,t,\cnn}^{(\ell')}} := \sum_{\ell'=\ell_2}^L G^{L,\ell'}_{i,j,\cnn}.
\end{align}

For each $ G^{L,\ell'}_{i,j,\cnn}$, since $f_{i,j,\cnn}^{(\ell')} = \sigma\left(\tilde{f}_{i,j,\cnn}^{(\ell')}\right)$, again by the chain rule of derivatives, we have
\begin{align*}
  G^{L,\ell'}_{i,j,\cnn}&= \sum_{k=1}^{d_{\ell'}}\sum_{t=1}^{D(k,\ell')} \frac{\partial^2  \tilde{f}_{k,t,\cnn}^{(\ell')}}{\partial  \rvw^{(\ell_1)}\partial \rvw^{(\ell_2)}}\frac{\partial f_{i,j,\cnn}}{\partial  \tilde{f}_{k,t,\cnn}^{(\ell')}}\\
  &~~~~+ \frac{1}{\sqrt{m_{i,j}^{(L)}}}\sum_{k,t: f_{k,t,\cnn}^{(\ell')} \in \F_{\S^{(L)}_{i,j,\cnn}}} \left(\rvw_i^{(L)}\right)_{\id^{L,i,j}_{\ell',k,t}} \sigma''\left(\tilde{ f}^{(\ell')}_{k,t,\cnn}\right) \frac{\partial \tilde{ f}^{(\ell')}_{k,t,\cnn}}{\partial \rvw^{(\ell_1)}}\left(\frac{\partial \tilde{ f}^{(\ell')}_{k,t,\cnn}}{\partial \rvw^{(\ell_2)}}\right)^T\\
    &= \frac{1}{\sqrt{m_{i,j}^{(L)}}}\sum_{r = \ell'}^{L-1} \sum_{k,t: f_{k,t,\cnn}^{(r)}\in \F_{\S_{i,j,\cnn}^{(L)}}}  \left(\rvw_i^{(L)}\right)_{\id^{L,i,j}_{r,k,t}} \sigma'\left(\tilde{f}_{k,t,\cnn}^{(r)}\right) G^{r,\ell'}_{k,t,\cnn}\\
    &~~~~~+ \frac{1}{\sqrt{m_{i,j}^{(L)}}}\sum_{k,t: f_{k,t,\cnn}^{(\ell')} \in \F_{\S^{(L)}_{i,j,\cnn}}} \left(\rvw_i^{(L)}\right)_{\id^{L,i,j}_{\ell',k,t}} \sigma''\left(\tilde{ f}^{(\ell')}_{k,t,\cnn}\right) \frac{\partial \tilde{ f}^{(\ell')}_{k,t,\cnn}}{\partial \rvw^{(\ell_1)}}\left(\frac{\partial \tilde{ f}^{(\ell')}_{k,t,\cnn}}{\partial \rvw^{(\ell_2)}}\right)^T,
\end{align*}
where $\F_{\S_{i,j,\cnn}^{(L)}} := \{ f: f\in f_{\S_{i,j,\cnn}^{(L)}}\}$ and $\id^{L,i,j}_{\ell',k,t}:=\{p: \left(f_{\S_{i,j}^{(L)},\cnn}\right)_p = f_{k,t,\cnn}^{(\ell')}\}$.

Compared to the derivation for standard feedforward neural networks, i.e., Eq.~(\ref{eq:deri_G}), there is an extra summation over the index $t$, whose carnality is at most $D(k,\ell')$. Recall that $D(k,\ell')$ denotes the number of times that the weight parameters $\rvw_k^{(\ell')}$ is shared. Therefore, as we assume $D(k,\ell')$ is independent of the width $m$, the norm bound will have the same order of $m$. Consequently, the spectral norm of each $G_{i,j,\cnn}^{L,\ell'}$ can be recursively bounded then Eq.~(\ref{eq:hessian_bound_cnn}) holds.

\end{proof}

\section{Feedforward neural networks with bottleneck neurons}\label{sec:bottleneck}
In this section, we show that constant number of bottleneck neurons which serve as incoming neurons will
not break the linearity.

We justify this claim based on the recursive relation in Eq.~(\ref{eq:hessian_block}), which is used to prove the small spectral norm of the Hessian of the network function, hence proving the transition to linearity.

Recall that each Hessian block can be written into:
\begin{align}
    \frac{\partial^2 f_k}{\partial \rvw^{(\ell_1)}\partial \rvw^{(\ell_2)}} &= \sum_{\ell'=\ell_2}^{L}\sum_{i=1}^{d_{\ell'}}\frac{\partial^2 f_i^{(\ell')}}{\partial \rvw^{(\ell_1)}\partial \rvw^{(\ell_2)}}\frac{\partial f_k}{\partial f_i^{(\ell')}} := \sum_{\ell'=\ell_2}^L G^{L,\ell'}_{k}.
\end{align}

For each $G_k^{L,\ell'}$,  we have a recursive form
\begin{align}
  G^{L,\ell'}_{k}&= \sum_{i=1}^{d_{\ell'}} \frac{\partial^2  \tilde{f}_i^{(\ell')}}{\partial  \rvw^{(\ell_1)}\partial \rvw^{(\ell_2)}}\frac{\partial f_k}{\partial  \tilde{f}_i^{(\ell')}} + \frac{1}{\sqrt{m_k^{(L)}}}\sum_{i: f_i^{(\ell')} \in \F_{\S^{(L)}_k}} \left(\rvw_k^{(L)}\right)_{\id^{L,k}_{\ell',i}} \sigma''\left(\tilde{ f}^{(\ell')}_i\right) \frac{\partial \tilde{ f}^{(\ell')}_i}{\partial \rvw^{(\ell_1)}}\left(\frac{\partial \tilde{ f}^{(\ell')}_i}{\partial \rvw^{(\ell_2)}}\right)^T\nonumber\\
    &= \frac{1}{\sqrt{m_k^{(L)}}}\sum_{r = \ell'}^{L-1} \sum_{i: f_i^{(r)}\in \F_{\S_k^{(L)}}}  \left(\rvw_k^{(L)}\right)_{\id^{L,k}_{r,i}} \sigma'\left(\tilde{f}_s^{(r)}\right) G^{r,\ell'}_i\nonumber\\
    &~~~~~+ \frac{1}{\sqrt{m_k^{(L)}}}\sum_{i: f_i^{(\ell')} \in \F_{\S^{(L)}_k}} \left(\rvw_k^{(L)}\right)_{\id^{L,k}_{\ell',i}} \sigma''\left(\tilde{ f}^{(\ell')}_i\right) \frac{\partial \tilde{ f}^{(\ell')}_i}{\partial \rvw^{(\ell_1)}}\left(\frac{\partial \tilde{ f}^{(\ell')}_i}{\partial \rvw^{(\ell_2)}}\right)^T,
\end{align}
where $\F_{\S_k^{(L)}} := \{ f: f\in f_{\S_k^{(L)}}\}$ and $\id^{L,k}_{\ell',i}:=\{p: \left(f_{\S_k^{(L)}}\right)_p = f_i^{(\ell')}\}$.

As mentioned in  Section~\ref{subsec:proof},  to prove the spectral norm of $G_k^{L,\ell'}$ is small, we need to bound the matrix variance, which suffices to bound the spectral norm of
\begin{align*}
    \frac{1}{\sqrt{m_k^{(L)}}}\sum_{i: f_i^{(r)}\in \F_{\S_k^{(L)}}} G_i^{r,\ell'}~~\mathrm{and}~~~\frac{1}{\sqrt{m_k^{(L)}}}\sum_{i: f_i^{(\ell')} \in \F_{\S^{(L)}_k}}\frac{\partial \tilde{ f}^{(\ell')}_i}{\partial \rvw^{(\ell_1)}}\left(\frac{\partial \tilde{ f}^{(\ell')}_i}{\partial \rvw^{(\ell_2)}}\right)^T.
\end{align*}
 
 For the first quantity, if  all $\tilde{f}_i^{(r)}$ are neurons with large in-degree, which is the case of our analysis by Assumption~\ref{assump:md}, then each $\tilde{f}_i^{(r)}$ will transition to linearity by Theorem~\ref{cor:bound_each_hessian}. This is manifested as small spectral norm of $G_i^{r,\ell'}$ for all $i$.  If some of $\tilde{f}_i^{(r)}$ are neurons with small in-degree, their corresponding $G_i^{r,\ell'}$ can be of a larger order, i.e., $O(1)$.  However,  note that the cardinally of the set $\F_{\S_k^{(L)}}$ is $m_k^{(L)}$. As long as the number of such neurons is not too large, i.e., $o\left(m_k^{(L)}\right)$, the order of the summation will be not affected.  Therefore, the desired bound for the matrix variance will be the same hence the recursive argument can still apply.
 
 The same analysis works for the second quantity as well. Neurons with small in-degree can make the norm of $\frac{\partial \tilde{ f}^{(\ell')}_i}{\partial \rvw^{(\ell_1)}}\left(\frac{\partial \tilde{ f}^{(\ell')}_i}{\partial \rvw^{(\ell_2)}}\right)^T$ be of a larger order. However, as long as the number of such neurons is not too large, the bound still holds.

 For example,  for the bottleneck neural network which  has a narrow hidden layer (i.e., bottleneck layer) while the rest of hidden layers are  wide,
all neurons in the next layer to the bottleneck layer are bottleneck neurons. Such bottleneck neural networks were shown to break transition to linearity in~\cite{liu2020linearity}. 
However, we observe that for such bottleneck neural networks, the number of bottleneck neurons is large, a fixed fraction of all neurons.  With our analysis, if we add trainable connections to the bottleneck neurons such that almost all (except a small number of) bottleneck neurons become neurons with sufficiently large in-degrees, then the resulting network can have the property of transition to linearity.

\section{Proof of Proposition~\ref{prop:lowerbound_k}}\label{proof:lowerbound_k}

Note that for any $k\in{[d_\ell]}$, 
\begin{align*}
   \|\nabla_\rvw f_k(\rvw_0)\| &\geq \|\nabla_{\rvw^{(\ell)}}f_k(\rvw_0)\|= \left\| \frac{1}{\sqrt{m_k^{(\ell)}}}  f_{\S_k^{(\ell)}}\right\|= \frac{1}{\sqrt{m_k^{(\ell)}}}\left\| f_{\S_k^{(\ell)}}\right\|.
\end{align*}
Since $ f_{\S_{k}^{(\ell)}}$ contains neurons from $\P^{(\ell)}$ (defined in Eq.~(\ref{eq:pool})), in the following we prove $\mathbb{E}_{\rvx,\rvw_0}\left| f_i^{(\ell)}\right|^2$ is uniformly  bounded from $0$ for any $\ell\in\{0,1,...,L-1\}$, $i\in [d_\ell]$.

Specifically, we will prove by induction that $\forall$ $\ell\in\{0,1,...,L-1\}$, $\forall$ $i\in [d_\ell]$,
\begin{align*}
    \mathbb{E}_\rvx\mathbb{E}_{\rvw_0}[| f^{(\ell)}_i|^2] \geq \min\left\{1, \min_{1\leq j\leq \ell}C_\sigma^{\sum_{\ell'=0}^{j-1}r^{\ell'}}\right\}.
\end{align*}

When $\ell=0$, $\mathbb{E}_\vx \left[|x_i|^2\right] = 1$ for all $i\in[d_0]$ by Assumption~\ref{assump:gaussian_input}.

Suppose for all $\ell \leq q-1$, $\mathbb{E}_\rvx\mathbb{E}_{\rvw_0}[| f^{(\ell)}_i|^2] \geq \min\left(1, \min_{1\leq j\leq q}C_\sigma^{\sum_{\ell'=0}^{j-1}r^{\ell'}}\right)$. When $\ell = q$,

\begin{align*}
    \mathbb{E}_\rvx\mathbb{E}_{\rvw_0}[| f^{(q)}_i|^2] = \mathbb{E}_{\rvw_0}\left[\left|\sigma_i^{(q)}\left(\frac{1}{\sqrt{m_i^{(\ell)}}}(\rvw_i^{(q)})^T f_{\S_i^{(q)}}\right)\right|^2\right]= \mathbb{E}_\rvx\mathbb{E}_{\rvw_0} \mathbb{E}_{z\sim\mathcal{N}(0,1)}\left[\left|\sigma_i^{(q)}\left(\frac{\| f_{\S_i^{(q)}}\|}{\sqrt{m_i^{(q)}}} z\right)\right|^2\right].
\end{align*}

By  Assumption~\ref{assump:homo:act},

\begin{align*}
    \mathbb{E}_\rvx\mathbb{E}_{\rvw_0} \mathbb{E}_{z\sim\mathcal{N}(0,1)}\left[\left|\sigma_i^{(q)}\left(\frac{\| f_{\S_i^{(q)}}\|}{\sqrt{m_i^{(q)}}} z\right)\right|^2\right]&= \mathbb{E}_{z\sim \mathcal{N}(0,1)}[|\sigma_i^{(q)}(z)|^2]\mathbb{E}_\rvx\mathbb{E}_{\rvw_0} \left[\left(\frac{\| f_{\S_i^{(q)}}\|^2}{m_i^{(q)}} \right)^r\right]\\
    &\geq C_\sigma\mathbb{E}_\rvx\mathbb{E}_{\rvw_0} \left[\left(\frac{\| f_{\S_i^{(q)}}\|^2}{m_i^{(q)}} \right)^r\right] 
 \end{align*}   
 
 We use Jensen's inequality,

  \begin{align*}  
     C_\sigma\mathbb{E}_\rvx\mathbb{E}_{\rvw_0} \left[\left(\frac{\| f_{\S_i^{(q)}}\|^2}{m_i^{(q)}} \right)^r\right] &\geq C_\sigma \left(\frac{\mathbb{E}_\rvx\mathbb{E}_{\rvw_0}\left[\| f_{\S_i^{(q)}}\|^2\right]}{m_i^{(q)}} \right)^r
\end{align*}

 Then according to inductive assumption, we have
   \begin{align*}  
    C_\sigma \left(\frac{\mathbb{E}_\rvx\mathbb{E}_{\rvw_0}\left[\| f_{\S_i^{(q)}}\|^2\right]}{m_i^{(q)}} \right)^r &\geq C_\sigma \left(\min\left(1, \min_{1\leq j\leq q}C_\sigma^{\sum_{\ell'=0}^{j-1}r^{\ell'}}\right)\right)^r\\
    &\geq \min_{1\leq j\leq q+1}C_\sigma^{\sum_{\ell'=0}^{j-1}r^{\ell'}}.
\end{align*}
Hence for all $l\leq q$, $\mathbb{E}_\rvx\mathbb{E}_{\rvw_0}[| f^{(\ell)}_i|^2] \geq \min\left(1, \min_{1\leq j\leq q+1}C_\sigma^{\sum_{\ell'=0}^{j-1}r^{\ell'}}\right)$, which finishes the inductive step hence the proof.

Therefore,
\begin{align*}
   \mathbb{E}_{\rvx,\rvw_0} \left[\|\nabla_{\rvw^{(\ell)}}f_k(\rvw_0)\|\right] &= \mathbb{E}_{\rvx,\rvw_0}\left[\frac{1}{\sqrt{m_k^{(\ell)}}}\left\| f_{\S_k^{(\ell)}}\right\|\right]\geq  \sqrt{\min\left(1,\min_{1\leq j\leq L}C_\sigma^{\sum_{\ell'=0}^{j-1}r^{\ell'}}\right)} = \Omega(1).
\end{align*}

\section{Proof of Lemma~\ref{lemma:2_F_norm}}\label{proof:2_F_norm}

We prove the result by induction. 

For the base case when $ \ell = \ell'+1$,
\begin{align*}
   \left\|\frac{\partial f_{S_j^{(\ell)}}}{\partial \rvw^{(\ell-1)}}\right\| &=  \left\|\frac{\partial f^{(\ell-1)}}{\partial \rvw^{(\ell-1)}}\frac{\partial  f_{S^{(\ell)}_j}}{\partial  f^{(\ell-1)}}\right\|\\
   &\leq\max_{i: f_i^{(\ell-1)}\in{\F_{\S_j^{(\ell)}}}} \frac{1}{\sqrt{m^{(\ell-1)}_i}}\left|\sigma'(\tilde{ f}_i^{(\ell-1)})\right|\left\| f_{\S_i^{(\ell-1)}}\right\|\\
   &\leq  \max_{i: f_i^{(\ell-1)}\in{\F_{\S_j^{(\ell)}}}} \frac{\gamma_1}{\sqrt{m^{(\ell-1)}_i}}\left\| f_{\S_i^{(\ell-1)}}\right\|.
\end{align*}
and 
\begin{align*}
      \left\|\frac{\partial f_{S_j^{(\ell)}}}{\partial \rvw^{(\ell-1)}}\right\|_F &= \sqrt{\sum_{i: f_i^{(\ell-1)} \in \F_{\S_j^{(\ell)}}}\left\|\frac{\partial  f^{(\ell-1)}_i}{\partial \rvw^{(\ell-1)}}\right\|^2} \\
      &\leq \sqrt{m_j^{(\ell)}}\max_{i: f_i^{(\ell-1)} \in \F_{\S_j^{(\ell)}}}\frac{1}{\sqrt{m_i^{(\ell-1)}}}\left|\sigma'(\tilde{ f}_i^{(\ell-1)})\right|\left\| f_{\S_i^{(\ell-1)}}\right\| \\
      &\leq \sqrt{m_j^{(\ell)}}\max_{i: f_i^{(\ell-1)}\in{\F_{\S_j^{(\ell)}}}} \frac{\gamma_1}{\sqrt{m^{(\ell-1)}_i}}\left\| f_{\S_i^{(\ell-1)}}\right\|.
\end{align*}
By Lemma~\ref{lemma:alpha_2_norm},  with probability at least $1-m_i^{(\ell-1)}\exp(-C_{\ell-1}^\P \log^2{m})$,
    $\left\| f_{\S_i^{(\ell-1)}}\right\| = O\left((\log m +R)^{\ell-2}\sqrt{m_i^{(\ell-1)}}\right)= \tilde{O}\left(R^{\ell-2}\sqrt{m_i^{(\ell-1)}}\right)$.

For the maximum norm $\max_i\left\| f_{\S_i^{(\ell-1)}}\right\|/\sqrt{m_i^{(\ell-1)}}$, we apply union bound over the indices $i$ such that  $f_i^{(\ell-1)}\in{\F_{\S_j^{(\ell)}}}$, the cardinality of which is at most $\left| \F_{\S_j^{(\ell)}}\right| = m_j^{(\ell)}$. Hence with probability at least $1-m_i^{(\ell-1)} m_j^{(\ell)}\exp(-C_{\ell-1}^\P \log^2{m})$, 
\begin{align*}
    \max_i\left\| f_{\S_i^{(\ell-1)}}\right\|/\sqrt{m_i^{(\ell-1)}} = O\left((\log m +R)^{\ell-2}\right)=\tilde{O}(R^{\ell-2}).
\end{align*}

Since $m_i^{(\ell-1)} \leq \overline{m}_{\ell-1}$ and $m_j^{(\ell)} \leq \overline{m}_{\ell}$ where  $ \overline{m}_{\ell-1}, \overline{m}_{\ell}$ are polynomial in $m$, we can find a constant $C_{\ell,\ell-1}^ f>0$ such that $\exp(-C_{\ell,\ell-1}^ f \log^{2}{m}) \geq \exp(-C_{\ell-1}^\P \log^{2}{m}) \cdot \exp(\log(\overline{m}_{\ell-1}\cdot \overline{m}_{\ell}))$. As a result, with probability at least $1-\exp(-C_{\ell,\ell-1}^ f \log^2{m})$,
\begin{align*}
   \left\|\frac{\partial f_{S_j^{(\ell)}}}{\partial \rvw^{(\ell-1)}}\right\| &= O\left((\log m +R)^{\ell-1}\right) = \tilde{O}(R^{\ell-1}),\\
   \left\|\frac{\partial f_{S_j^{(\ell)}}}{\partial \rvw^{(\ell-1)}}\right\|_F &= {O}\left(\sqrt{m_j^{(\ell)}}(\log m +R)^{\ell-1}\right)= \tilde{O}\left(\sqrt{m_j^{(\ell)}}R^{\ell-1}\right).
\end{align*}

Supposing $\ell \leq k$, Eq.~(\ref{eq:2norm}) and (\ref{eq:Fnorm}) hold with probability at least $1-\exp(-C_{k,\ell'}^ f \log^2{m})$. 

For $\ell=k+1$, since elements of $ f_{\S_j^{(k+1)}}$ are from $\P^{(k)}$ where only  $ f^{(\ell')},..., f^{(k)}$ possibly depend on $\rvw^{(\ell')}$, we have
\begin{align}\label{eq:s_j_l'}
    \frac{\partial f_{\S_j^{(k+1)}}}{\partial \rvw^{(\ell')}}  &= \sum_{q = \ell'+1}^{k}\sum_{i: f_i^{(q)}\in\F_{\S_j^{(k+1)}}}\frac{\partial f_{\S_i^{(q)}}}{\partial \rvw^{(\ell')}}\frac{\partial f_i^{(q)}}{\partial  f_{\S_i^{(q)}}} \frac{\partial  f_{\S_j^{(k+1)}}}{\partial  f_i^{(q)}}.
\end{align}
With simple computation, we know that for any $i$ s.t. $ f_i^{(q)}\in\F_{\S_j^{(k+1)}}$:
\begin{align*}
  \frac{\partial f_i^{(q)}}{\partial  f_{\S_i^{(q)}}} \frac{\partial  f_{\S_j^{(k+1)}}}{\partial  f_i^{(q)}} = \frac{1}{\sqrt{m_i^{(q)}}}\sigma'(\tilde{ f}^{(q)}_i)  \rvw_i^{(q)} \frac{\partial  f_{\S_j^{(k+1)}}}{\partial  f_i^{(q)}} ,
\end{align*}
where $\frac{\partial  f_{\S_j^{(k+1)}}}{\partial  f_i^{(q)}}$ is a mask matrix defined in Eq.~(\ref{eq:mask}).

Supposing $\partial f_{\S_i^{(q)}}/{\partial \rvw^{(\ell')}}$, $i\in[d_q]$ in Eq.~(\ref{eq:s_j_l'}) is fixed, for each $q$, we apply Lemma~\ref{lemma:2norm} to bound the spectral norm. Choosing $t= \sqrt{m_j^{(k+1)}}\log{m}$, with probability at least $1-2\exp(-m_j^{(k+1)}\log^2{m})$, for some absolute constant $C>0$,
\begin{align}
   &\left\| \sum_{i: f_i^{(q)}\in\F_{\S_j^{(k+1)}}}\frac{\partial f_{\S_i^{(q)}}}{\partial \rvw^{(\ell')}}\frac{\partial f_i^{(q)}}{\partial  f_{\S_i^{(q-1)}}} \frac{\partial  f_{\S_j^{(k+1)}}}{\partial  f_i^{(q)}}\right\| \nonumber\\
   &\leq {C \gamma_1}\left(\max_i \frac{1}{\sqrt{m_i^{(q)}}} \left\| \frac{\partial f_{\S_i^{(q)}}}{\partial \rvw^{(\ell')}}\right\|\left(\sqrt{m_j^{(k+1)}} + \sqrt{m_j^{(k+1)}}\log{m} + R\right) + \max_i \frac{1}{\sqrt{m_i^{(q)}}}  \left\| \frac{\partial f_{S^{(q)}_i}}{\partial \rvw^{(\ell')}}\right\|_F\right) \label{eq:2_norm_intm}.
\end{align}

To bound the Frobenious norm of Eq.~(\ref{eq:s_j_l'}) for each $q$,  we apply Lemma~\ref{lemma:vector_con} and choose $t = \left\| {\partial f_{S_i^{(q)}}}/{\partial \rvw^{(\ell')}}\right\|\log{m}$.  By union bound over indices $i$ such that $f_i^{(q)}\in \F_{\S_j^{(k+1)}}$, then with probability at least $1-2 m_j^{(k+1)} \exp(-c'\log^2{m})$, where  $c'>0$ is a constant, we have
\begin{align}
     &\left\| \sum_{i: f_i^{(q)}\in\F_{\S_j^{(k+1)}}}\frac{\partial f_{\S_i^{(q)}}}{\partial \rvw^{(\ell')}}\frac{\partial f_i^{(q)}}{\partial  f_{\S_i^{(q)}}} \frac{\partial  f_{\S_j^{(k+1)}}}{\partial  f_i^{(q)}}\right\|_F \nonumber \\
     &= \sqrt{\sum_{i: f_i^{(q)}\in\F_{\S_j^{(k+1)}}} \left\| \frac{\partial f_{\S_i^{(q)}}}{\partial \rvw^{(\ell')}}\frac{\partial f_i^{(q)}}{\partial  f_{\S_i^{(q)}}}\right\|^2} \nonumber \\
     &\leq \sqrt{m_j^{(k+1)}} \max_i \left\|\frac{\partial f_{\S^{(q)}_i}}{\partial \rvw^{(\ell')}}\frac{1}{\sqrt{m_i^{(q)}}}\left(\rvw^{(q)}_i\right)^T \sigma'(\tilde{ f}^{(q)}_i)\right\| \nonumber \\
     &\leq \gamma_1\sqrt{m_j^{(k+1)}} \max_{i}\frac{1}{\sqrt{m_i^{(q)}}} \left( \left\| \frac{\partial f_{\S^{(q)}_i}}{\partial \rvw^{(\ell')}}\right\|(\log{m}+R) +  \left\| \frac{\partial f_{\S^{(q)}_i}}{\partial \rvw^{(\ell')}}\right\|_F \right)\label{eq:F_norm_intm}.
\end{align}

To bound the maximum of $\left\| {\partial f_{\S^{(q)}_i}}/{\partial \rvw^{(\ell')}}\right\| / \sqrt{m_i^{(q)}}$ and $\left\| {\partial f_{\S^{(q)}_i}}/{\partial \rvw^{(\ell')}}\right\|_F / \sqrt{m_i^{(q)}}$ that appear in Eq.~(\ref{eq:2_norm_intm}) and (\ref{eq:F_norm_intm}), with the induction hypothesis, we apply union bound over indices $i$ such that $ f_i^{(q)} \in \F_{\S_j^{(k+1)}}$.  Therefore, with probability at least $1-m_j^{(k+1)}\exp(-C_{q,\ell'}^ f \log^2{m})$,
\begin{align*}
       &\max_i \frac{1}{\sqrt{m_i^{(q)}}}\left\|\frac{\partial f_{\S^{(q)}_i}}{\partial \rvw^{(\ell')}}\right\|= {O}\left(\max_{\ell'+1\leq p\leq q}\frac{1}{\sqrt{\underline{m}_{\,p}}}(\log m + R)^{\ell'}\right)  = \tilde{O}\left(\max_{\ell'+1\leq p\leq q}\frac{R^{\ell'}}{\sqrt{\underline{m}_{\,p}}}\right), \\
   &\max_i \frac{1}{\sqrt{m_i^{(q)}}}\left\|\frac{\partial f_{\S^{(q)}_i}}{\partial \rvw^{(\ell')}}\right\|_F={O}\left((\log m + R)^{q-1}\right)= \tilde{O}\left(R^{q-1}\right).
\end{align*}

Putting them in Eq.~(\ref{eq:2_norm_intm}) and (\ref{eq:F_norm_intm}), we have
\begin{align*}
    &\left\| \sum_{i: f_i^{(q)}\in\F_{\S_j^{(k+1)}}}\frac{\partial f_{\S_i^{(q)}}}{\partial \rvw^{(\ell')}}\frac{\partial f_i^{(q)}}{\partial  f_{\S_i^{(q)}}} \frac{\partial  f_{\S_j^{(k+1)}}}{\partial  f_i^{(q)}}\right\|= \tilde{O}\left(\max \left(\left(\max_{\ell'+1\leq p\leq q}\frac{\sqrt{m_j^{(k+1)}}}{\sqrt{\underline{m_{p}}}}\right),1\right)\right),\\
    &\left\| \sum_{i: f_i^{(q)}\in\F_{\S_j^{(k+1)}}}\frac{\partial f_{\S_i^{(q)}}}{\partial \rvw^{(\ell')}}\frac{\partial f_i^{(q)}}{\partial  f_{\S_i^{(q)}}} \frac{\partial  f_{\S_j^{(k+1)}}}{\partial  f_i^{(q)}}\right\|_F= \tilde{O}\left(\sqrt{m_j^{(k+1)}}\right),
\end{align*}
with probability at least
   $ 1-2\exp(-m_j^{(k+1)}\log^2{m})-m_j^{(k+1)}\exp(-C_{q,\ell'}^ f \log^2{m})
    -2m_j^{(k+1)} \exp(-c'\log^2{m})$.

As the current result is for fixed $q$, applying the union bound over indices $q \in \{\ell'+1,...,k\}$, we have with probability at least
$1-2(k-\ell')\exp(-m_j^{(k+1)})-\sum_q m_j^{(k+1)}\exp(-C_{q,\ell'}^ f \log^2{m}) - 2m_j^{(k+1)} \exp(-c'\log^2{m})$,
\begin{align*}
&\left\|\sum_{q = \ell'+1}^{k}\sum_{i: f_i^{(q)}\in\F_{\S_j^{(k+1)}}}\frac{\partial f_{\S_i^{(q)}}}{\partial \rvw^{(\ell')}}\frac{\partial f_i^{(q)}}{\partial  f_{\S_i^{(q)}}} \frac{\partial  f_{\S_j^{(k+1)}}}{\partial  f_i^{(q)}} \right\| =    {O}\left(\max \left(\max_{\ell'+1\leq p\leq k}\frac{\sqrt{m_j^{(k+1)}}}{\sqrt{\underline{m}_{\,p}}},1\right) (\log m+R)^{\ell'}\right)\\
&\qquad \qquad \qquad \qquad \qquad\qquad\qquad\qquad\qquad~~~~= \tilde{O}\left(\max_{\ell'+1\leq p\leq k+1}\frac{\sqrt{m_j^{(k+1)}}}{\sqrt{\underline{m}_{\,p}}}R^{\ell'}\right),\\
&\left\|\sum_{q = \ell'+1}^{k}\sum_{i: f_i^{(q)}\in\F_{\S_j^{(k+1)}}}\frac{\partial f_{\S_i^{(q)}}}{\partial \rvw^{(\ell')}}\frac{\partial f_i^{(q)}}{\partial  f_{\S_i^{(q)}}} \frac{\partial  f_{\S_j^{(k+1)}}}{\partial  f_i^{(q)}} \right\|_F ={O}\left(\sqrt{m_j^{(k+1)}}(\log m + R)^k\right)= \tilde{O}\left(\sqrt{m_j^{(k+1)}}R^k\right).
\end{align*}
    
Since $m_j^{(k+1)}$ is upper bounded by $\overline{m}_{k+1}$ which is polynomial in $m$, we can find a constant $C_{k+1,\ell'}^ f >0$ such that for each $j$, the result holds with probability at least $1-\exp(-C_{k+1,\ell'}^ f \log^2{m})$ for $\ell\leq k+1$. Then we finish the inductive step which completes the proof.

\section{Proof of Lemma~\ref{lemma:2diff}}\label{proof:2diff}

Before the proof, by Assumption~\ref{assump:md}, we have the following proposition which is critical in the tail bound of the norm of the matrix Gaussian  series, i.e., Lemma~\ref{lemma:gussian_series}. In the bound,  there will be a dimension factor  which is  the number of parameters (see Eq.~(\ref{eq:tropp})). Note that the number of parameters at each layer can be exponentially large w.r.t. the width $m$. If we naively apply the bound, the bound will be useless. However, each neuron in fact only depends on polynomial in $m$ number of parameters, which is the dimension factor we should use. 
\begin{proposition}\label{prop:m_bound}
Fixed $\ell'\in[L]$,  we denote the maximum number of elements in $\rvw^{(\ell')}$ that $f_i^{(\ell)}$ depends on for all $\ell\in[L], i\in [d_\ell]$  by $m^*_{\ell'}$, which is polynomial in $m$.
\end{proposition}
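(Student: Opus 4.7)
The plan is to count, for a fixed neuron $f_i^{(\ell)}$, how many weights in $\rvw^{(\ell')}$ it can possibly depend on, and then show this count is polynomial in $m$ uniformly over $\ell$, $i$.

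First I would observe that, by the recursive definition in Eq.~(\ref{eq:basic_op}), the neuron $f_i^{(\ell)}$ depends on the block $\rvw_j^{(\ell')}$ of weights (the weights into vertex $v_j^{(\ell')}$) only if $v_j^{(\ell')}$ is an ancestor of $v_i^{(\ell)}$ in the DAG $\mathcal{G}$, i.e., there exists a directed path from $v_j^{(\ell')}$ to $v_i^{(\ell)}$. Therefore, letting $A(i,\ell,\ell')$ denote the number of ancestors of $v_i^{(\ell)}$ lying in layer $\ell'$, the number of elements of $\rvw^{(\ell')}$ on which $f_i^{(\ell)}$ depends is at most
\begin{equation*}
    A(i,\ell,\ell')\cdot \sup_{j\in[d_{\ell'}]}m_j^{(\ell')}.
\end{equation*}

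Next, I would bound $A(i,\ell,\ell')$ by counting directed paths from layer $\ell'$ to $v_i^{(\ell)}$. Starting from $v_i^{(\ell)}$ and traversing edges backwards, each backward step from a vertex $v$ leads to at most $\In(v)\le \overline{m}_k$ predecessors, where $k$ is the distance of $v$ to the input. Since every ancestor of $v_i^{(\ell)}$ at layer $\ell'$ is reached by some backward path of length at most $\ell-\ell'$, and multiple paths may collapse to the same ancestor, we have
\begin{equation*}
    A(i,\ell,\ell')\le \prod_{k=\ell'+1}^{\ell}\overline{m}_k.
\end{equation*}
By Assumption~\ref{assump:md}, each $\overline{m}_k=O(m^c)$, and because $L$ is fixed (independent of $m$), the product contains at most $L$ factors, giving $A(i,\ell,\ell')=O(m^{cL})$.

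Finally, multiplying by $\sup_j m_j^{(\ell')}=O(m^c)$ yields $O(m^{c(L+1)})$, which is polynomial in $m$, and the bound is uniform over $\ell\in[L]$ and $i\in[d_\ell]$. Taking the supremum over $\ell$ and $i$ gives $m_{\ell'}^*=O(m^{c(L+1)})$, as claimed. There is no real obstacle here beyond making the ancestor-counting argument precise; the content of the proposition is essentially that Assumption~\ref{assump:md} together with constancy of $L$ forces every neuron-to-weight dependence set to have polynomial size.
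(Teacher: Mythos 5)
Your proposal is correct and is essentially the paper's own argument with the induction unrolled: the paper inducts on the layer index, observing that $f_i^{(k+1)}$ aggregates at most $m_i^{(k+1)}=O(m^c)$ incoming neurons each depending on polynomially many entries of $\rvw^{(\ell')}$, which is exactly your product-of-in-degrees bound over at most $L$ backward steps, so both proofs reduce to Assumption~\ref{assump:md} plus constancy of $L$. The only bookkeeping slip is the boundary case $\ell=\ell'$, where $f_i^{(\ell')}$ depends on its own incoming block $\rvw_i^{(\ell')}$ even though $v_i^{(\ell')}$ is not an ancestor of itself; this adds at most $\overline{m}_{\ell'}=O(m^c)$ elements and does not affect the conclusion.
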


The proof the proposition can be found in Appendix~\ref{proof:m_bound}.

Now we start the proof of the lemma. In fact, we will prove a more general result which includes the neurons in output layer, i.e. $\ell$-th layer. And we will use the result of Lemma~\ref{lemma:2_F_norm} in the proof. Specifically, we will prove the following lemma:

\begin{lemma}
Given $1\leq \ell_1\leq \ell_2 \leq L$, for any $\ell_2\leq \ell\leq L$, $\rvw \in \mathsf{B}(\rvw_0,R)$,  and $j\in [d_{\ell}]$, we have, with probability at least $1-\exp(-\Omega(\log^2 m))$,
\begin{align}
    \left\|\frac{\partial^2 \tilde{f}_j^{(\ell)}}{\partial \rvw^{(\ell_1)}\partial \rvw^{(\ell_2)}}\right\| = {O}\left(\max_{\ell_1+1\leq p\leq \ell}\frac{1}{\sqrt{\underline{m}_{\,p}}}(\log m + R)^{\ell^2}\right) = \tilde{O}\left(\max_{\ell_1+1\leq p\leq \ell}\frac{R^{\ell^2}}{\sqrt{\underline{m}_{\,p}}}\right).
\end{align}
\end{lemma}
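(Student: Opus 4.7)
I will prove the lemma by induction on $\ell$, exploiting the recursive expression for the mixed Hessian already obtained in the proof sketch of Theorem~\ref{thm:bound_hessian}. The base case corresponds to $\ell = \ell_2$. If additionally $\ell_1 = \ell_2 = \ell$, then $\tilde{f}_j^{(\ell)}$ is linear in $\rvw^{(\ell)}$ and the Hessian block vanishes identically. If $\ell_1 < \ell_2 = \ell$, then $\partial \tilde{f}_j^{(\ell)}/\partial \rvw^{(\ell)} = f_{\S_j^{(\ell)}}/\sqrt{m_j^{(\ell)}}$ is independent of $\rvw^{(\ell)}$, so the mixed second derivative equals $\left(\partial f_{\S_j^{(\ell)}}/\partial \rvw^{(\ell_1)}\right)/\sqrt{m_j^{(\ell)}}$ (up to a permutation identifying rows/columns), and Lemma~\ref{lemma:2_F_norm} yields the stated bound with an exponent $\ell_1 \le \ell^2$.

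\textbf{Inductive step.} Assume the bound holds for all layers $\ell' < \ell$. Mimicking the decomposition leading to Eq.~(\ref{eq:deri_G}) but for $\tilde{f}_j^{(\ell)}$ instead of an output neuron, and using $\tilde{f}_j^{(\ell)} = (\rvw_j^{(\ell)})^T f_{\S_j^{(\ell)}}/\sqrt{m_j^{(\ell)}}$ together with chain rule through all intermediate layers $\ell_2 \le r \le \ell-1$, one writes
\begin{align*}
\frac{\partial^2 \tilde{f}_j^{(\ell)}}{\partial \rvw^{(\ell_1)}\partial \rvw^{(\ell_2)}}
&= \frac{1}{\sqrt{m_j^{(\ell)}}}\sum_{r=\ell_2}^{\ell-1}\sum_{i:f_i^{(r)}\in\F_{\S_j^{(\ell)}}}(\rvw_j^{(\ell)})_{\id^{\ell,j}_{r,i}}\,\sigma'(\tilde f_i^{(r)})\,\frac{\partial^2 \tilde f_i^{(r)}}{\partial\rvw^{(\ell_1)}\partial\rvw^{(\ell_2)}}\\
&\quad + \frac{1}{\sqrt{m_j^{(\ell)}}}\sum_{i:f_i^{(\ell_2)}\in\F_{\S_j^{(\ell)}}}(\rvw_j^{(\ell)})_{\id^{\ell,j}_{\ell_2,i}}\,\sigma''(\tilde f_i^{(\ell_2)})\,\frac{\partial \tilde f_i^{(\ell_2)}}{\partial\rvw^{(\ell_1)}}\!\left(\frac{\partial \tilde f_i^{(\ell_2)}}{\partial\rvw^{(\ell_2)}}\right)^{\!T}.
\end{align*}
Conditioning on everything except the weights $\rvw_j^{(\ell)}$ (which appear only as the explicit linear coefficients and which are independent of the other factors because of the DAG structure and Gaussian initialization), both sums become matrix Gaussian series in the entries of $\rvw_j^{(\ell)}$. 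I will apply the Tropp tail bound (Theorem 4.1.1 of~\cite{tropp2015introduction}) to each, using Proposition~\ref{prop:m_bound} to ensure the dimension factor is polynomial in $m$ and absorbed into $\log^2 m$.

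\textbf{Bounding the matrix variance.} For the first sum, the matrix variance is controlled by $\max_i \|\sigma'(\tilde f_i^{(r)})\|_\infty^2 \sum_i \|\partial^2\tilde f_i^{(r)}/\partial\rvw^{(\ell_1)}\partial\rvw^{(\ell_2)}\|^2$, where I invoke Assumption~\ref{assump:sigma} for boundedness of $\sigma'$ and the inductive hypothesis (at layer $r \le \ell-1$) for the Hessian norms. Summing $m_j^{(\ell)}$ squared bounds of order $R^{2r^2}/\underline m$ gives a $\sqrt{m_j^{(\ell)}}$ factor that cancels the $1/\sqrt{m_j^{(\ell)}}$ scaling, leaving a bound at layer $\ell$ of order $R^{(\ell-1)^2}\cdot\widetilde O(1)$ per term. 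For the second sum, I use Lemma~\ref{lemma:2_F_norm} to bound the Frobenius norm of $\partial \tilde f_i^{(\ell_2)}/\partial \rvw^{(\ell_1)}$ (and the spectral norm of $\partial \tilde f_i^{(\ell_2)}/\partial \rvw^{(\ell_2)}$), again yielding a variance whose square root matches $\sqrt{m_j^{(\ell)}}$ up to polylogs and $R$-powers. Together, the two contributions combine to give a bound of order $(\log m + R)^{\ell^2}/\sqrt{\underline m_{\,p}}$ with the claimed exponent, where the extra factor $(\log m + R)^{O(\ell)}$ absorbed into $R^{\ell^2}$ comes from the Tropp bound's $\sqrt{\log(\text{dim})}$ together with the $R$ factors needed to evaluate at a generic $\rvw\in\mathsf{B}(\rvw_0,R)$ rather than at $\rvw_0$.

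\textbf{Completing the argument.} After establishing the pointwise bound at the initialization, I extend it to all $\rvw\in\mathsf{B}(\rvw_0,R)$ by the same device used in Subsection~\ref{subsec:proof}: a Lipschitz-type estimate on $\|H(\rvw)-H(\rvw_0)\|$ that is controlled by $\|\rvw-\rvw_0\|\le R$ and does not affect the leading order. Finally, I apply a union bound over $r$, $i$, $\ell_1$, $\ell_2$ and $j$; since Proposition~\ref{prop:m_bound} ensures all the index sets are of polynomial size in $m$, this only adds a logarithmic factor to the exponent, preserving the probability $1-\exp(-\Omega(\log^2 m))$. The hardest part of this plan is the bookkeeping of the $R^{\ell^2}$ exponent: one must verify that compounding the inductive bound $R^{(\ell-1)^2}$ with the additional $R$-powers from Lemma~\ref{lemma:2_F_norm} (of order $R^{\ell-1}$ for each first-derivative factor in the second sum, and similar for the Tropp $R$-dependence in the first sum) yields exactly $R^{\ell^2}$ and not something larger, which requires carefully accounting for where factors of $R$ enter at each recursion level.
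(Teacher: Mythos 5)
Your overall strategy (induction on $\ell$, a one-step chain rule at the top neuron, conditioning on lower-layer weights, and applying the matrix Gaussian series tail bound with Proposition~\ref{prop:m_bound} taming the dimension factor) is the same as the paper's, but your key recursive identity is wrong, and the error is not cosmetic. Differentiating $\tilde f_j^{(\ell)}=\frac{1}{\sqrt{m_j^{(\ell)}}}(\rvw_j^{(\ell)})^{T}f_{\S_j^{(\ell)}}$ twice, \emph{every} incoming neuron $f_i^{(r)}=\sigma(\tilde f_i^{(r)})$ with $\ell_2\le r\le \ell-1$ contributes both $\sigma'(\tilde f_i^{(r)})\,\partial^2\tilde f_i^{(r)}/\partial\rvw^{(\ell_1)}\partial\rvw^{(\ell_2)}$ and the outer product $\sigma''(\tilde f_i^{(r)})\,\frac{\partial\tilde f_i^{(r)}}{\partial\rvw^{(\ell_1)}}\bigl(\frac{\partial\tilde f_i^{(r)}}{\partial\rvw^{(\ell_2)}}\bigr)^{T}$; your decomposition keeps the $\sigma''$ terms only for $r=\ell_2$. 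Concretely, for a chain $x\to f^{(1)}\to f^{(2)}\to f^{(3)}$ with $\ell_1=\ell_2=1$, $\ell=3$, your formula omits $w^{(3)}\sigma''(\tilde f^{(2)})\bigl(w^{(2)}\sigma'(\tilde f^{(1)})x\bigr)^2$, a term of the same order as the ones you keep. The paper retains these terms at every layer: its proof of this lemma splits the block into a sum over $\ell'=\ell_2,\dots,\ell$ of a $\sigma'$-propagation part plus a $\sigma''$ outer-product part at layer $\ell'$ (your one-step recursion, once corrected, mirrors Eq.~(\ref{eq:deri_G})), bounds each $\sigma''$ part as a Gaussian series whose matrix variance is controlled by Lemma~\ref{lemma:matrix_varaince} (which needs Lemma~\ref{lemma:2_F_norm} for both $\partial\tilde f_i^{(\ell')}/\partial\rvw^{(\ell_1)}$ and $\partial\tilde f_i^{(\ell')}/\partial\rvw^{(\ell_2)}$ at all intermediate $\ell'$), handles the $\sigma'$ part via the auxiliary propagation Lemma~\ref{lemma:2diff_tilde}, and only then sums per-layer bounds with exponents of the form $(\ell'+1)^2+\ell-\ell'$ to land on the exponent $\ell^2$. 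Your plan defers exactly this bookkeeping (by your own admission the hardest part), so even after repairing the decomposition the claimed exponent is not yet established.

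A second, independent gap is your device for passing from $\rvw_0$ to all of $\mathsf{B}(\rvw_0,R)$: a ``Lipschitz-type estimate on $\|H(\rvw)-H(\rvw_0)\|$'' is nowhere available and would require uniform third-derivative control that neither you nor the paper establishes. The paper instead makes every concentration step hold uniformly over the ball by building the displacement into the probabilistic lemmas themselves: Lemma~\ref{lemma:gussian_series} is stated for coefficients $\gamma_k+\Delta\gamma_k$ with $\sum_k\Delta\gamma_k^2\le R^2$, and Lemmas~\ref{lemma:2norm}, \ref{lemma:vector_con} and \ref{lemma:innerproduct} likewise carry the $R$-perturbation, which is precisely why factors $(\log m+R)$ rather than $\log m$ appear at every level of the recursion. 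You should adopt that mechanism (or supply the missing argument); as written, this step of your plan has no support.
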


We will prove the results by induction. 

For the base case that $\ell = \ell_2$,
\begin{align*}
    \left\|\frac{\partial^2 \tilde{f}_j^{(\ell_2)}}{\partial \rvw^{(\ell_1)}\partial \rvw^{(\ell_2)}}\right\| = \left\|\frac{1}{\sqrt{m_j^{(\ell_2)}}} \frac{\partial f_{S_j^{(\ell_2)}}}{\partial \rvw^{(\ell_1)}} \right\|.
\end{align*}

By Lemma~\ref{lemma:2_F_norm}, we can find a constant $M_{\ell_1,\ell_2}^{(\ell_2),j}>0$ such that with probability at least $1-\exp\left(-M_{\ell_1,\ell_2}^{(\ell_2),j}\log^2 m\right)$,
\begin{align*}
    \left\|\frac{1}{\sqrt{m_j^{(\ell_2)}}} \frac{\partial f_{S_j^{(\ell_2)}}}{\partial \rvw^{(\ell_1)}} \right\| = {O}\left(\max_{\ell_1+1\leq p\leq \ell_2}\frac{1}{\sqrt{\underline{m}_{\,p}}}(\log m + R)^{\ell_1}\right).
\end{align*}

Suppose for $\ell_2\leq \ell' \leq \ell$, with probability at least $1-\exp\left(-M_{\ell_1,\ell_2}^{(\ell),j}\log^2 m\right)$ for some constant $M_{\ell_1,\ell_2}^{(\ell),j}>0$, Eq.~(\ref{eq:2diff_each}) holds.

When $\ell'=\ell+1$, we have
\begin{align}
    \left\|\frac{\partial^2 \tilde{f}_j^{(\ell+1)}}{\partial \rvw^{(\ell_1)}\partial \rvw^{(\ell_2)}}\right\| &= \left\|\sum_{\ell'=\ell_2}^{\ell}\sum_{i=1}^{d_{\ell'}}\frac{\partial^2 f_i^{(\ell')}}{\partial \rvw^{(\ell_1)}\partial \rvw^{(\ell_2)}}\frac{\partial\tilde{f}^{(\ell+1)}_j}{\partial f_i^{(\ell')}}\right\| \leq \sum_{\ell'=\ell_2}^{\ell}\left\|\sum_{i=1}^{d_{\ell'}}\frac{\partial^2 f_i^{(\ell')}}{\partial \rvw^{(\ell_1)}\partial \rvw^{(\ell_2)}}\frac{\partial\tilde{f}^{(\ell+1)}_j}{\partial f_i^{(\ell')}}\right\|\label{eq:2diff_origin}.
\end{align}

We will bound every term in the above summation. For each term, by definition,
\begin{align}\label{eq:2diff_update}
    \sum_{i=1}^{d_{\ell'}}\frac{\partial^2 f_i^{(\ell')}}{\partial \rvw^{(\ell_1)}\partial \rvw^{(\ell_2)}}\frac{\partial\tilde{f}^{(\ell+1)}_j}{\partial f_i^{(\ell')}} &= \sum_{i=1}^{d_{\ell'}} \frac{\partial^2  \tilde{f}_i^{(\ell')}}{\partial  \rvw^{(\ell_1)}\partial \rvw^{(\ell_2)}}\frac{\partial \tilde{ f}_j^{(\ell+1)}}{\partial  \tilde{f}_i^{(\ell')}}\nonumber\\
    &~~~+ \frac{1}{\sqrt{m_j^{(\ell+1)}}}\sum_{i: f_i^{(\ell')} \in \F_{\S^{(\ell+1)}_j}} (\rvw_j^{(\ell+1)})_{\id^{\ell+1,j}_{\ell',i}} \sigma''(\tilde{ f}^{(\ell')}_i) \frac{\partial \tilde{ f}^{(\ell')}_i}{\partial \rvw^{(\ell_1)}}\left(\frac{\partial \tilde{ f}^{(\ell')}_i}{\partial \rvw^{(\ell_2)}}\right)^T.
\end{align}

For the first term in Eq.~(\ref{eq:2diff_update}), we use Lemma~\ref{lemma:2diff_tilde}. Specifically, we view $U_i = \frac{\partial^2  \tilde{f}_i^{(\ell')}}{\partial  \rvw^{(\ell_1)}\partial \rvw^{(\ell_2)}}$, hence with probability at least $1 - \sum_{k=1}^{\ell-\ell'+1}k(m_{\ell_1}^*+m_{\ell_2}^*)\exp(-\log^2 m/2)$,

\begin{align*}
   \left\| \sum_{i=1}^{d_{\ell'}} \frac{\partial^2  \tilde{f}_i^{(\ell')}}{\partial  \rvw^{(\ell_1)}\partial \rvw^{(\ell_2)}}\frac{\partial \tilde{ f}_j^{(\ell+1)}}{\partial  \tilde{f}_i^{(\ell')}}\right\| ={O}\left({\max_{i: f_i^{(\ell')}\in \F_{\S_j^{(\ell'+1)}}} \left\|\frac{\partial^2  \tilde{f}_i^{(\ell')}}{\partial  \rvw^{(\ell_1)}\partial \rvw^{(\ell_2)}}\right\|} (\log m +R)^{\ell - \ell' +1}\right).
\end{align*}

Here we'd like to note that from Lemma~\ref{lemma:gussian_series}, the tail bound depends on the dimension of $\rvw^{(\ell_1)}$ and $\rvw^{(\ell_2)}$ which are $\sum_{i=1}^{d_{\ell_1}} m_i^{(\ell)}$ and $\sum_{i=1}^{d_{\ell_2}} m_i^{(\ell)}$ respectively. By Assumption~\ref{assump:md}, for any $\ell$, $ m_i^{(\ell)}$ is polynomial in $m$. Therefore, the number of elements in $\rvw^{(\ell)}$ that $ f_j^{(\ell+1)}$ depends on is polynomial in $m$ by Proposition~\ref{prop:m_bound}. And the matrix variance $\tilde{\nu}^{(\ell')}$ in Lemma~\ref{lemma:2diff_tilde} is equivalent to the matrix variance that we only consider the elements in $\rvw^{(\ell_1)}$ and $\rvw^{(\ell_2)}$ that $ f_j^{(\ell+1)}$ depends on, in which case the dimension is polynomial in $m$. Therefore we can use $m_\ell^*$ here. It is the same in the following when we apply matrix Gaussian series tail bound.

Then we apply union bound over indices $i$ such that $f_i^{(\ell')}\in \F_{\S_j^{(\ell'+1)}}$, whose cardinality is at most $m_j^{(\ell+1)}$. By the inductive hypothesis, with probability at least $1 - \sum_{k=1}^{\ell-\ell'+1}k (m_{\ell_1}^*+m_{\ell_2}^*)\exp(-\log^2 m/2) - m_j^{(\ell+1)}\exp\left(-M_{\ell_1,\ell_2}^{(\ell),j}\log^2 m\right)$,
\begin{align*}
    \left\| \sum_{i=1}^{d_{\ell'}} \frac{\partial^2  \tilde{f}_i^{(\ell')}}{\partial  \rvw^{(\ell_1)}\partial \rvw^{(\ell_2)}}\frac{\partial \tilde{ f}_j^{(\ell+1)}}{\partial  \tilde{f}_i^{(\ell')}}\right\| = {O}\left(\max_{\ell_1+1\leq p\leq \ell'}\frac{1}{\sqrt{\underline{m}_{\,p}}} (\log m + R)^{(\ell')^2 + \ell - \ell'+1}\right).
\end{align*}

For the second term in Eq.~(\ref{eq:2diff_update}), we view it as a matrix Gaussian series with respect to $\rvw_j^{(\ell+1)}$. The matrix variance takes the form
\begin{align*}
    &\nu_{\ell_1,\ell_2}^{(\ell'),j} = \frac{1}{m_j^{(\ell+1)}}\max\\
    &\left\{\left\|\sum_{i: f_i^{(\ell')} \in \F_{\S^{(\ell+1)}_j}}\left(\sigma''(\tilde{ f}^{(\ell')}_i)\right)^2 \left\|\frac{\partial \tilde{ f}^{(\ell')}_i}{\partial \rvw^{(\ell_1)}}\right\|^2\frac{\partial \tilde{ f}^{(\ell')}_i}{\partial \rvw^{(\ell_2)}}\left(\frac{\partial \tilde{ f}^{(\ell')}_i}{\partial \rvw^{(\ell_2)}}\right)^T\right\|\right.,\\
    &~~~\left.\left\|\sum_{i: f_i^{(\ell')} \in \F_{\S^{(\ell+1)}_j}}\left(\sigma''(\tilde{ f}^{(\ell')}_i)\right)^2 \left\|\frac{\partial \tilde{ f}^{(\ell')}_i}{\partial \rvw^{(\ell_2)}}\right\|^2\frac{\partial \tilde{ f}^{(\ell')}_i}{\partial \rvw^{(\ell_1)}}\left(\frac{\partial \tilde{ f}^{(\ell')}_i}{\partial \rvw^{(\ell_1)}}\right)^T\right\|
    \right\}.
\end{align*}

We use Lemma~\ref{lemma:matrix_varaince}. By the definition in Eq.~(\ref{eq:matrix_variance}), here $\nu_{\ell_1,\ell_2}^{(\ell'),j} = \max\left\{\mu_{\ell_1,\ell_2}^{(\ell'),j},\mu_{\ell_2,\ell_1}^{(\ell'),j}\right\}$. Hence with probability at least at least $1-\exp\left(-C_{\ell_1,\ell_2}^{(\ell'),j} \log^2{m}\right) - \exp\left(-C_{\ell_2,\ell_1}^{(\ell'),j} \log^2{m}\right)$ for some constant $C_{\ell_1,\ell_2}^{(\ell'),j}, C_{\ell_2,\ell_1}^{(\ell'),j}>0$,  we have
\begin{align*}
    \nu_{\ell_1,\ell_2}^{(\ell'),j} = {O}\left(\max\left(1/m_j^{(\ell+1)},\max_{\ell_1+1\leq p\leq \ell} 1/\underline{m}_{\,p}\right)(\log m + R)^{4\ell' - 2}\right).
\end{align*}

Using Lemma~\ref{lemma:gussian_series} again and choosing $t = \log{m}\sqrt{ \nu_{\ell_1,\ell_2}^{(\ell'),j}}$, we have with probability at least $1-(m_{\ell_2}^* + m_{\ell_1}^*)\exp(-\log^2{m}/2)$,
\begin{align*}
    \left\|\frac{1}{\sqrt{m_j^{(\ell+1)}}}\sum_{i: f_i^{(\ell')} \in \F_{\S^{(\ell+1)}_j}} \left(\rvw_j^{(\ell+1)}\right)_{\id^{\ell+1,j}_{\ell',i}} \sigma''\left(\tilde{ f}^{(\ell')}_i\right) \frac{\partial \tilde{ f}^{(\ell')}_i}{\partial \rvw^{(\ell_1)}}\left(\frac{\partial \tilde{ f}^{(\ell')}_i}{\partial \rvw^{(\ell_2)}}\right)^T\right\|\leq (\log{m}+R)\sqrt{ \nu_{\ell_1,\ell_2}^{(\ell'),j}}.
\end{align*}

Combined the bound on $\nu_{\ell_1,\ell_2}^{(\ell'),j}$, we have with probability at least $1-\exp\left(-C_{\ell_1,\ell_2}^{(\ell'),j} \log^2{m}\right) - \exp\left(-C_{\ell_2,\ell_1}^{(\ell'),j} \log^2{m}\right) - (m_{\ell_2}^* + m_{\ell_1}^*)\exp(-\log^2{m}/2)$,
\begin{align*}
   &~~~~\left\|\frac{1}{\sqrt{m_j^{(\ell+1)}}}\sum_{i: f_i^{(\ell')} \in \F_{\S^{(\ell+1)}_j}} \left(\rvw_j^{(\ell+1)}\right)_{\id^{\ell+1,j}_{\ell',i}} \sigma''\left(\tilde{ f}^{(\ell')}_i\right) \frac{\partial \tilde{ f}^{(\ell')}_i}{\partial \rvw^{(\ell_1)}}\left(\frac{\partial \tilde{ f}^{(\ell')}_i}{\partial \rvw^{(\ell_2)}}\right)^T\right\|\\
   &= {O}\left(\max\left(1/\sqrt{m_j^{(\ell_2+1)}},\max_{\ell_1+1\leq p\leq \ell} 1/\sqrt{\underline{m}_{\,p}}\right)(\log m + R)^{2\ell'}\right)\\
   &= \tilde{O}\left(\max\left(1/\sqrt{m_j^{(\ell_2+1)}},\max_{\ell_1+1\leq p\leq \ell} 1/\sqrt{\underline{m}_{\,p}}\right)R^{2\ell'}\right)\\
   &= \tilde{O}\left(\max_{\ell_1+1\leq p\leq \ell+1} R^{2\ell'}/\sqrt{\underline{m}_{\,p}}\right).
\end{align*}

Now we have bound both terms in Eq.~(\ref{eq:2diff_update}). Combining the bounds, we have with probability at least $1- \sum_{k=1}^{\ell-\ell'+1}k (m_{\ell_1}^*+m_{\ell_2}^*)\exp(-\log^2 m/2) - m_j^{(\ell+1)}\exp\left(-M_{\ell_1,\ell_2}^{(\ell),j}\log^2 m\right) -\exp\left(-C_{\ell_1,\ell_2}^{(\ell'),j} \log^2{m}\right) - \exp\left(-C_{\ell_2,\ell_1}^{(\ell'),j} \log^2{m}\right) - 2(m_{\ell_2}^* + m_{\ell_1}^*)\exp(-\log^2{m}/2) $
\begin{align*}
    \left\|\sum_{i=1}^{d_{\ell'}}\frac{\partial^2 f_i^{(\ell')}}{\partial \rvw^{(\ell_1)}\partial \rvw^{(\ell_2)}}\frac{\partial\tilde{f}^{(\ell+1)}_j}{\partial f_i^{(\ell')}}\right\| = {O}\left(\max_{\ell_1+1\leq p\leq \ell+1} 1/\sqrt{\underline{m}_{\,p}} (\log m + R)^{(\ell'+1)^2 +\ell-\ell'}\right).
\end{align*}

With the above results, to bound Eq.~(\ref{eq:2diff_origin}), we apply the union bound over the layer indices $l' = \ell_2,...,\ell$. We have with probability at least $1- \sum_{\ell'=\ell_2}^l \sum_{k=1}^{\ell-\ell'+1}k (m_{\ell_1}^*+m_{\ell_2}^*)\exp(-\log^2 m/2) -(\ell-\ell_2+1) m_j^{(\ell+1)}\exp\left(-M_{\ell_1,\ell_2}^{(\ell),j}\log^2 m\right) -\sum_{\ell'=\ell_2}^l\exp\left(-C_{\ell_1,\ell_2}^{(\ell'),j} \log^2{m}\right) - \sum_{\ell'=\ell_2}^l\exp\left(-C_{\ell_2,\ell_1}^{(\ell'),j} \log^2{m}\right) - 2(\ell-\ell_2+1)(m_{\ell_2}^* + m_{\ell_1}^*)\exp(-\log^2{m}/2)$
\begin{align*}
    \left\|\frac{\partial^2 \tilde{f}_j^{(\ell+1)}}{\partial \rvw^{(\ell_1)}\partial \rvw^{(\ell_2)}}\right\| &\leq \sum_{\ell'=\ell_2}^{\ell}\left\|\sum_{i=1}^{d_{\ell'}}\frac{\partial^2 f_i^{(\ell')}}{\partial \rvw^{(\ell_1)}\partial \rvw^{(\ell_2)}}\frac{\partial\tilde{f}^{(\ell+1)}_j}{\partial f_i^{(\ell')}}\right\|\\
    &= {O}\left(\max_{\ell_1+1\leq p\leq \ell+1} 1/\sqrt{\underline{m}_{\,p}}(\log m + R)^{(\ell+1)^2}\right)\\
    &= \tilde{O}\left(\max_{\ell_1+1\leq p\leq \ell+1} R^{(\ell+1)^2}/\sqrt{\underline{m}_{\,p}}\right).
\end{align*}
By Proposition~\ref{prop:m_bound}, $m_{\ell_1}^*, m_{\ell_2}^*$ are also polynomial in $m$. Hence, we can find a constant $M_{\ell_1,\ell_2}^{(\ell+1),j}>0$ such that
\begin{align*}
   &\exp\left(-M_{\ell_1,\ell_2}^{(\ell+1),j} \log^2 m\right)\\
   &> \sum_{\ell'=\ell_2}^\ell \sum_{k=1}^{\ell-\ell'+1}k (m_{\ell_1}^*+m_{\ell_2}^*)\exp(-\log^2 m/2) -(\ell-\ell_2+1) m_j^{(\ell+1)}\exp\left(-M_{\ell_1,\ell_2}^{(\ell),j}\log^2 m\right)\\
   &~~~~-\sum_{\ell'=\ell_2}^\ell\exp\left(-C_{\ell_1,\ell_2}^{(\ell'),j} \log^2{m}\right) - \sum_{\ell'=\ell_2}^\ell\exp\left(-C_{\ell_2,\ell_1}^{(\ell'),j} \log^2{m}\right) - 2(\ell-\ell_2+1)(m_{\ell_2}^* + m_{\ell_1}^*)\exp(-\log^2{m}/2) \\
   &~~~~+ \exp\left(-M_{\ell_1,\ell_2}^{(\ell),j}\log^2{m}\right).
\end{align*}

Then Eq.~(\ref{eq:2diff_update}) holds with probability at least $1- \exp\left(-M_{\ell_1,\ell_2}^{(\ell+1),j} \log^2 m\right)$ for any $\ell_2\leq \ell+1\leq L$, $j\in[d_{\ell+1}]$, which finishes the induction step hence completes the proof.

\section{Proof of Proposition~\ref{prop:m_bound}}\label{proof:m_bound}

Fixing $\ell'\in[L]$, for any $\ell\in\{\ell',...,L\}$, $i\in[d_\ell]$, we first show $f_i^{(\ell)}$ depends on polynomial number of elements in $\rvw^{(\ell')}$. We prove the result by induction.

If $\ell =\ell'$, then the number of elements in $\rvw^{(\ell)}$ that $ f_i^{(\ell)}$ depend on is $m_i^{(\ell)}$.

Suppose $\ell' \leq \ell \leq k $ that $ f_i^{(\ell)}$ depends on polynomial number of elements in $\rvw^{(\ell')}$. Then at $\ell = k+1$, we know
\begin{align*}
       f_i^{(k+1)} = \sigma_i^{(k+1)} \left(\frac{1}{\sqrt{m_i^{(k+1)}}}\left\langle\rvw_i^{(k+1)},  f_{\S_i^{(k+1)}}\right\rangle\right).
\end{align*}
As $ f_{\S_i^{(k+1)}}$ contains $m_i^{(k+1)}$ neurons where each one depends on polynomial number of elements in $\rvw^{(\ell')}$ by the induction hypothesis. The composition of two polynomial functions is still polynomial, hence $  f_i^{(k+1)}$ also depends on polynomial number of elements in $\rvw^{(\ell')}$.

The maximum number of elements in $\rvw^{(\ell')}$ that $f_i^{(\ell)}$ depends on among all layers $\ell$ is polynomial since it is the maximum of a finite sequence. By Assumption~\ref{assump:md} that $\sup_{\ell\in \{2,...,L-1\},i\in[d_\ell]} m_i^{(\ell)} =O(m^c)$, it is not hard to see that the maximum among all $i\in [d_\ell]$ is also polynomial.

\section{Useful Lemmas and their proofs}
\begin{lemma}\label{lemma:sum_hessian}
Spectral norm of a matrix $H$ is upper bounded by the sum of the spectral norm of
its blocks.
\end{lemma}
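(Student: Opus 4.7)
The plan is to prove this by the standard duality characterization of the spectral norm. Writing $H$ as partitioned into blocks $\{H^{(i,j)}\}$ according to some row partition $\{I_i\}$ and column partition $\{J_j\}$, I would characterize $\|H\|$ via $\|H\| = \sup_{\|x\|=\|y\|=1} y^T H x$.

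Given unit vectors $x, y$, I would split them conformally with the block structure: $x_j := x|_{J_j}$ and $y_i := y|_{I_i}$. The key algebraic identity is then
\begin{equation*}
y^T H x \;=\; \sum_{i,j} y_i^T H^{(i,j)} x_j.
\end{equation*}
For each individual term I apply the definition of the spectral norm of the block:
\begin{equation*}
|y_i^T H^{(i,j)} x_j| \;\leq\; \|H^{(i,j)}\| \cdot \|y_i\| \cdot \|x_j\| \;\leq\; \|H^{(i,j)}\| \cdot \|y\| \cdot \|x\| \;=\; \|H^{(i,j)}\|,
\end{equation*}
using that $\|x_j\| \leq \|x\| = 1$ and $\|y_i\| \leq \|y\| = 1$ since the pieces are sub-vectors of unit vectors. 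Summing over $(i,j)$ and taking the supremum over unit $x,y$ then gives $\|H\| \leq \sum_{i,j} \|H^{(i,j)}\|$.

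This is a standard and elementary fact, and I do not anticipate any genuine obstacle: the only thing to be careful about is making sure the block indexing is consistent (rectangular partition of rows and columns), and that the bound $\|x_j\|, \|y_i\| \leq 1$ for pieces of unit vectors is invoked correctly. The argument is short enough that no induction or probabilistic machinery is needed, and it yields exactly the inequality used throughout the proof of Theorem~\ref{thm:bound_hessian} to reduce bounding $\|H_{f_k}\|$ to bounding each Hessian block $\|H_{f_k}^{(\ell_1,\ell_2)}\|$.
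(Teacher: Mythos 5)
Your proof is correct. The paper's own argument is slightly different in mechanics though identical in substance: it writes $H$ as a sum of matrices each of which contains a single block $H^{(\ell_1,\ell_2)}$ and is zero elsewhere, and then applies the triangle inequality for the spectral norm, using (implicitly) that embedding a block into a larger zero matrix does not change its spectral norm. You instead work directly with the variational characterization $\|H\|=\sup_{\|x\|=\|y\|=1}y^T H x$, split $x,y$ conformally with the block partition, and bound each term $|y_i^T H^{(i,j)} x_j|\leq\|H^{(i,j)}\|\,\|y_i\|\,\|x_j\|\leq\|H^{(i,j)}\|$. The two routes are essentially interchangeable — indeed, the fact the paper uses about embedded blocks is itself usually proved by your variational argument — so your version is, if anything, marginally more self-contained, while the paper's is shorter to state. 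Either way the inequality $\|H\|\leq\sum_{\ell_1,\ell_2}\|H^{(\ell_1,\ell_2)}\|$ follows, which is all that is needed in the proof of Theorem~\ref{thm:bound_hessian}.
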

\begin{proof}
\begin{align*}
\| H \| &= \left\|\left(\begin{array}{cccc}
    H^{(1,1)} & 0 & \cdots & 0\\
    0 & 0 & \cdots & 0\\
    \vdots & \vdots & \ddots & \vdots \\
    0 & 0 & \cdots & 0
    \end{array}
    \right) + \left(\begin{array}{cccc}
    0 & H^{(1,2)} & \cdots & 0\\
    0 & 0 & \cdots & 0\\
    \vdots & \vdots & \ddots & \vdots \\
    0 &0& \cdots & 0
    \end{array}
    \right)  + \cdots + \left(\begin{array}{cccc}
    0 & 0 & \cdots & 0\\
    0 & 0 & \cdots & 0\\
    \vdots & \vdots & \ddots & \vdots \\
    0 &0& \cdots &H^{(L,L)}
    \end{array}
    \right)\right\| \\
    &\leq \sum_{\ell_1,\ell_2} \| H^{(\ell_1,\ell_2)}\|.
\end{align*}
\end{proof}

\begin{lemma}\label{lemma:alpha_norm}
For $\ell = 0,1,..,L$, with probability at least  $1 -  \exp(-C_\ell^\P \log^2{m})$ for some constant $C_\ell^\P>0$, the absolute value of  all neurons in $\P^{(\ell)}$ Eq.~(\ref{eq:pool}) is of the order  $\tilde{O}(1)$ in the ball $\mathsf{B}(\rvw_0,R)$.
\end{lemma}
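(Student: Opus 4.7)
The plan is to argue by induction on $\ell$, strengthening the statement so as to track simultaneously, uniformly over $f_v \in \P^{(\ell)}$ and $\rvw \in \mathsf{B}(\rvw_0,R)$, two quantities: (i) $|f_v(\rvw)| \le c_\ell$, and (ii) $|f_v(\rvw) - f_v(\rvw_0)| \le r_\ell$, where both $c_\ell$ and $r_\ell$ will turn out to be $O((\log m + R)^\ell) = \tilde O(1)$ for a fixed number of layers $L$. The base case $\ell = 0$ is immediate: inputs are deterministic and bounded by $C_\vx$ by Assumption~\ref{assump:input}, so we may take $c_0 = C_\vx$ and $r_0 = 0$.

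For the inductive step, I would fix a vertex $v$ with $p(v) = \ell$, in-degree $m_v = \In(v)$, and set $\mathcal{S} := \mathcal{S}_{\mathrm{in}}(v)$. Since $|f_v(\rvw)| \le \gamma_0 + \gamma_1 |\tilde{f}_v(\rvw)|$ and $|f_v(\rvw) - f_v(\rvw_0)| \le \gamma_1 |\tilde{f}_v(\rvw) - \tilde{f}_v(\rvw_0)|$ by Assumption~\ref{assump:sigma}, it suffices to control the pre-activation. Writing $\rvw_v = (\rvw_0)_v + \Delta_v$ with $\|\Delta_v\| \le R$, I decompose
\begin{align*}
\tilde{f}_v(\rvw) = \underbrace{\tfrac{1}{\sqrt{m_v}}(\rvw_0)_v^\T f_{\mathcal{S}}(\rvw_0)}_{(A)} \;+\; \underbrace{\tfrac{1}{\sqrt{m_v}}(\rvw_0)_v^\T \bigl(f_{\mathcal{S}}(\rvw) - f_{\mathcal{S}}(\rvw_0)\bigr)}_{(B)} \;+\; \underbrace{\tfrac{1}{\sqrt{m_v}}\Delta_v^\T f_{\mathcal{S}}(\rvw)}_{(C)}.
\end{align*}
By DAG acyclicity, $f_{\mathcal{S}}(\rvw_0)$ depends only on weights of edges strictly earlier in the topological order than $v$, hence is independent of $(\rvw_0)_v$; thus $(A)$ is conditionally Gaussian with standard deviation $\|f_{\mathcal{S}}(\rvw_0)\|/\sqrt{m_v} \le c_{\ell-1}$, and Gaussian tail bounds give $|(A)| = O(c_{\ell-1} \log m)$ with probability $\ge 1 - \exp(-\Omega(\log^2 m))$. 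Term $(C)$ is bounded by Cauchy--Schwarz together with (i): $|(C)| \le R\, \|f_{\mathcal{S}}(\rvw)\|/\sqrt{m_v} \le R\, c_{\ell-1}$.

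The delicate piece is $(B)$: for each fixed $\rvw$, Gaussian concentration on $(\rvw_0)_v$ gives $|(B)| = O(\log m)\cdot \|f_{\mathcal{S}}(\rvw) - f_{\mathcal{S}}(\rvw_0)\|/\sqrt{m_v} \le O(\log m)\, r_{\ell-1}$ by (ii); to make this uniform over $\rvw \in \mathsf{B}(\rvw_0,R)$ I would take an $\epsilon$-net of the ball (whose log-cardinality is polynomial in $m$ by Proposition~\ref{prop:m_bound}, since the effective parameter dimension on which $\tilde{f}_v$ depends is polynomial in $m$), apply a union bound, and control the net deviation by the weight-Lipschitzness of $\rvw \mapsto f_{\mathcal{S}}(\rvw)$ with $\epsilon$ inverse-polynomial in $m$; the polynomial factors are absorbed by the $\exp(-\Omega(\log^2 m))$ tail. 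Combining yields recursions of the form $c_\ell \lesssim \gamma_0 + \gamma_1\bigl((\log m + R)\,c_{\ell-1} + (\log m)\,r_{\ell-1}\bigr)$ and $r_\ell \lesssim \gamma_1\bigl(R\,c_{\ell-1} + (\log m)\,r_{\ell-1}\bigr)$, so $c_\ell, r_\ell = O((\log m + R)^\ell) = \tilde O(1)$ for fixed $L, R$; a final union bound over $f_v \in \P^{(\ell)}$, whose cardinality is polynomial in $m$ under Assumption~\ref{assump:md}, is again absorbed into the tail to yield the constant $C_\ell^\P$.

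The main obstacle is precisely the uniform control of $(B)$: independence between $(\rvw_0)_v$ and the weights driving $f_{\mathcal{S}}$ is guaranteed by the DAG structure and gives the right pointwise bound, but the direction of $f_{\mathcal{S}}(\rvw) - f_{\mathcal{S}}(\rvw_0)$ varies adversarially with $\rvw$, so Gaussian concentration must be combined with a covering argument over $\mathsf{B}(\rvw_0,R)$. Strengthening the inductive hypothesis to carry the perturbation bound $r_\ell$ alongside the magnitude bound $c_\ell$ is essential: it is what prevents the naive Cauchy--Schwarz bound on $(B)$ from introducing an uncontrolled $\sqrt{m_v}$ factor and breaking the polylogarithmic recursion.
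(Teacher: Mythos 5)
Your induction skeleton is essentially the paper's: induct over layers, write $\rvw_v=(\rvw_0)_v+\Delta_v$, use a Gaussian tail for the part involving $(\rvw_0)_v$ and Cauchy--Schwarz against the radius $R$ for the displacement (the paper packages exactly this in Lemma~\ref{lemma:innerproduct}, applied with $f_{\S}$ held fixed, together with a union bound over the in-neighborhood to control $\|f_{\S}\|$). The genuine gap is in the step you yourself single out as the delicate one: making the cross term $(B)$ uniform over $\mathsf{B}(\rvw_0,R)$ via an $\epsilon$-net. The effective parameter dimension $D$ on which $\tilde f_v$ depends is only guaranteed to be \emph{polynomial} in $m$ (Proposition~\ref{prop:m_bound}), so an $\epsilon$-net of a radius-$R$ ball at inverse-polynomial scale has cardinality $\exp\left(\Theta(D\log m)\right)=\exp(\mathrm{poly}(m))$; its \emph{log}-cardinality, not its cardinality, is polynomial in $m$. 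A union bound over that many points cannot be absorbed by tails of size $\exp(-\Omega(\log^2 m))$ --- at this probability scale you can only afford $\exp(o(\log^2 m))$-many (e.g.\ polynomially many) events. To survive the net union bound you would have to take the Gaussian deviation $t\gtrsim r_{\ell-1}\sqrt{D\log m}$, which turns your bound on $(B)$ into $r_{\ell-1}\cdot\mathrm{poly}(m)^{1/2}$ and breaks the $\tilde O(1)$ recursion, since $r_{\ell-1}$ is already of constant order (e.g.\ $r_1=\Theta(R)$), not $O(1/\sqrt m)$. So the proposal as written does not close.

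For comparison, the paper never gives the direction change of $f_{\S}$ a separate uniform treatment: its inductive step bounds $\bigl|\rvw_v^\T f_{\S}(\rvw)\bigr|\le(\log m+R)\|f_{\S}(\rvw)\|$ by invoking Lemma~\ref{lemma:innerproduct} ``supposing $f_{\S}$ is fixed'', i.e.\ it only uses the inductive \emph{norm} bound $\|f_{\S}(\rvw)\|=\tilde O(\sqrt{m_v})$ uniformly over the ball, the uniformity in $\rvw_v$ coming deterministically from the $\|\vb\|R$ term. If one wants to handle your term $(B)$ honestly, the workable route is not a covering argument but an \emph{aggregate} deviation bound $\sup_{\rvw\in\mathsf{B}(\rvw_0,R)}\|f_{\S}(\rvw)-f_{\S}(\rvw_0)\|=\tilde O(1)$ (rather than the per-neuron estimate $\sqrt{m_v}\,r_{\ell-1}$), after which the deterministic bound $|(B)|\le\frac{\|(\rvw_0)_v\|}{\sqrt{m_v}}\,\|f_{\S}(\rvw)-f_{\S}(\rvw_0)\|=\tilde O(1)$ suffices with no net at all; but establishing that aggregate control needs operator-norm bounds on the relevant Jacobians, of the kind the paper develops in Lemma~\ref{lemma:2norm} and Lemma~\ref{lemma:vector_con}, i.e.\ machinery beyond the scalar quantities $c_\ell, r_\ell$ your induction carries.
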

\begin{proof}\label{proof:lemma:alpha_norm}
We prove the result by induction. 

When $\ell=0$, $\P^{(0)} =  f^{(0)} = \{x_1,...,x_{d_0}\}$ therefore for all $i$, $| f_i^{(0)}| \leq C_\vx$ surely by~Assumption~\ref{assump:input}.

Suppose when $\ell=k$, with probability at least $1-\exp(-C^\P_k\log^2{m})$, the absolute value of each neuron in $\P^{(k)}$ is of the order ${O}\left((\log m +R)^k\right)$ where  $C^\P_k>0$ is a constant. Then when $\ell=k+1$, there will be new neurons $f^{(k+1)}$ added to $\P^{(k)}$, where each $ f_i^{(k+1)}$  can be bounded by
\begin{align*}
    | f_i^{(k+1)}| &=\left|\sigma\left( \frac{1}{\sqrt{m_i^{(k+1)}}}\left(\rvw_{i}^{(k+1)}\right)^T f_{\S_i^{(k+1)}}\right) \right|\\
    &\leq \frac{\gamma_1}{\sqrt{m_i^{(k+1)}}}\left(\rvw_{i}^{(k+1)}\right)^T f_{\S_i^{(k+1)}} + \sigma(0).\\
\end{align*}
By the union bound over all the elements in $ f_{\S_i^{(k+1)}}$ which are in  $\P^{(k)}$ and the induction hypothesis, with probability at least $1-{m^{(k+1)}_i}\exp(-C^\P_k\log^2{m})$, 
\begin{align*}
    \| f_{\S_i^{(k+1)}}\| = \sqrt{\sum_{j=1}^{m_i^{(k+1)}}\left(f_{\S_i^{(k+1)}}\right)_j} ={O}\left(\sqrt{m_i^{(k+1)}}(\log m + R)^k\right).
\end{align*}
By Lemma~\ref{lemma:innerproduct}, supposing $ f_{\S_i^{(k+1)}}$ is fixed, choosing $t = \log{m}\left\| f_{\S_i^{(k+1)}}\right\|$, with probability at least $1-2\exp(- \log^2{m}/2)$, in the ball $\mathsf{B}(\rvw_0,R)$,
\begin{align*}
    \left|(\rvw_i^{(k+1)})^T f_{\S_i^{(k+1)}}\right| \leq (\log{m}+R)\left\| f_{\S_i^{(k+1)}}\right\|.
\end{align*}
Combined with the bound on $\| f_{\S_i^{(k+1)}}\|$, with probability at least $1-2\exp(- \log^2{m}/2)-{m_i^{(k+1)}}\exp(-C^\P_k\log^2{m})$,
\begin{align*}
    \left| f_i^{(k+1)}\right| \leq \frac{\gamma_1}{\sqrt{m_i^{(k+1)}}} (\log{m}+R)\left\| f_{\S_i^{(k+1)}}\right\| + \gamma_0 =O\left((\log m + R)^{k+1}\right)= \tilde{O}(R^{k+1}).
\end{align*}
Since ${m_i^{(k+1)}} \leq \overline{m}_{k+1}$ which is polynomial in $m$, we can find a constant $C_{k+1}^\P>0$  such that for all $i$,
\begin{align*}
    \exp(-C_{k+1}^\P \log^2{m}) \geq 2\exp(- \log^2{m}/2)+\exp(-C^\P_k\log^{2 }(m)+\log(\overline{m}_{k+1})) + \exp(-C_{k}^\P \log^2{m}).
\end{align*}
Hence the above results hold with probability $1-  \exp(-C_{k+1}^\P \log^2{m})$, which completes the proof.
\end{proof}

\begin{lemma}\label{lemma:alpha_2_norm}
For $\ell\in[L],i\in[d_\ell]$, with probability at least $1-m_i^{(\ell)}\exp(-C_\ell^\P \log^2{m})$, in the ball $\mathsf{B}(\rvw_0,R)$,
\begin{align*}
    \left\| f_{\S_i^{(\ell)}}\right\| = {O}\left(\sqrt{m_i^{(\ell)}}(\log m + R)^{\ell-1}\right) = \tilde{O}\left(\sqrt{m_i^{(\ell)}}R^{\ell-1}\right)
\end{align*}
\end{lemma}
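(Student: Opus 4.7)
The plan is to deduce this bound directly from Lemma \ref{lemma:alpha_norm} applied one level down in the DAG, via the elementary inequality $\|v\| \leq \sqrt{d}\,\|v\|_\infty$ for a $d$-dimensional vector. The content of the lemma is that replacing the $L_\infty$ control of individual neurons by an $L_2$ control of an in-neighbourhood costs only a factor of $\sqrt{m_i^{(\ell)}}$ and a union bound, so there is no probabilistic work beyond what Lemma~\ref{lemma:alpha_norm} already provides.

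First, I observe that every component $(f_{\S_i^{(\ell)}})_j$ equals $f_u$ for some predecessor $u \in \S_{\mathrm{in}}(v_i^{(\ell)})$. Because $v_i^{(\ell)}$ lies at distance $p(v_i^{(\ell)}) = \ell$ from the inputs and $(u,v_i^{(\ell)}) \in \E$, the acyclicity of $\G$ forces $p(u) \leq \ell - 1$, so $f_u$ belongs to $\P^{(\ell-1)}$ in the notation of Eq.~(\ref{eq:pool}). Thus every entry of the vector $f_{\S_i^{(\ell)}}$ is a neuron of depth at most $\ell-1$.

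Next, I invoke Lemma~\ref{lemma:alpha_norm} at depth $\ell-1$: with probability at least $1 - \exp(-C_{\ell-1}^\P \log^2 m)$ over the random initialization, every neuron in $\P^{(\ell-1)}$ has absolute value $O((\log m + R)^{\ell-1})$ uniformly on $\mathsf{B}(\rvw_0, R)$. In particular, on this event each of the $m_i^{(\ell)}$ entries of $f_{\S_i^{(\ell)}}$ satisfies the same bound; a loose union bound over the entries (which is what generates the factor $m_i^{(\ell)}$ in the stated failure probability) is harmless because Assumption~\ref{assump:md} makes $m_i^{(\ell)}$ polynomial in $m$, so the exponent $\log^2 m$ still dominates the union-bound cost. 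Assembling,
\begin{equation*}
\bigl\|f_{\S_i^{(\ell)}}\bigr\| = \sqrt{\sum_{j=1}^{m_i^{(\ell)}} \bigl(f_{\S_i^{(\ell)}}\bigr)_j^{\,2}} \;\leq\; \sqrt{m_i^{(\ell)}}\;\max_j \bigl|(f_{\S_i^{(\ell)}})_j\bigr| \;=\; O\!\left(\sqrt{m_i^{(\ell)}}\,(\log m + R)^{\ell-1}\right),
\end{equation*}
which is the claim after relabelling the constant $C_{\ell-1}^\P$ as $C_\ell^\P$. There is no real obstacle: this lemma is a bookkeeping step that packages the entrywise bound of Lemma~\ref{lemma:alpha_norm} into the vector-norm form needed in Lemmas~\ref{lemma:2_F_norm} and~\ref{lemma:2diff}.
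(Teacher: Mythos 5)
Your proposal is correct and follows essentially the same route as the paper's own (very terse) proof: entries of $f_{\S_i^{(\ell)}}$ are neurons in $\P^{(\ell-1)}$, bound each by Lemma~\ref{lemma:alpha_norm}, union bound over the $m_i^{(\ell)}$ entries, and convert to the Euclidean norm via the $\sqrt{m_i^{(\ell)}}$ factor. The depth argument $p(u)\le \ell-1$ and the remark on relabelling the constant are fine elaborations of what the paper leaves implicit.
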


\begin{proof}
By Lemma~\ref{lemma:alpha_norm}, each neuron is of order $\tilde{O}(1)$. Then we apply union bound over $m_i^{(\ell)}$ neurons and we get the result.
\end{proof}

\begin{lemma}\label{lemma:innerproduct}
Given a fixed vector $\vb \in \mathbb{R}^n$ and a random vector $\rva_0\sim\mathcal{N}(0,I_n)$, for any $\rva$ in the ball $\mathsf{B}(\rva_0,R)$, we have with probability at least $1-2\exp(-t^2/(2\|\vb\|^2))$,
\begin{align}
    |\rva^T\vb| \leq t + \|\vb\|R.
\end{align}
\end{lemma}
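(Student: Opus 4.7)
The plan is straightforward: decompose $\rva = \rva_0 + (\rva - \rva_0)$ and apply the triangle inequality after separately controlling the random part (using a standard Gaussian tail bound) and the deterministic perturbation part (using Cauchy--Schwarz).

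First I would write
\begin{equation*}
    \rva^T \vb = \rva_0^T \vb + (\rva - \rva_0)^T \vb,
\end{equation*}
so that by the triangle inequality
\begin{equation*}
    |\rva^T \vb| \;\leq\; |\rva_0^T \vb| \;+\; |(\rva - \rva_0)^T \vb|.
\end{equation*}
For the second term, since $\rva \in \mathsf{B}(\rva_0, R)$ we have $\|\rva - \rva_0\| \leq R$, and Cauchy--Schwarz gives
$|(\rva - \rva_0)^T \vb| \leq R \|\vb\|$. Crucially, this bound is deterministic and holds uniformly over all $\rva \in \mathsf{B}(\rva_0,R)$, which is what allows the conclusion to be stated uniformly in $\rva$.

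For the first term, $\rva_0^T \vb$ is a linear combination of i.i.d.\ standard Gaussians with fixed coefficients $\vb$, hence $\rva_0^T \vb \sim \mathcal{N}(0, \|\vb\|^2)$. The standard one-dimensional Gaussian tail bound then yields
\begin{equation*}
    \Pr\!\left[ |\rva_0^T \vb| > t \right] \;\leq\; 2\exp\!\left(-\frac{t^2}{2\|\vb\|^2}\right).
\end{equation*}
Combining the two estimates on the complementary high-probability event finishes the proof.

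There is essentially no obstacle here; the lemma is a routine concentration statement combining Gaussian tail behavior with a deterministic Lipschitz perturbation argument. The only subtlety worth flagging is that the probability statement refers only to the randomness of $\rva_0$, while the bound must hold uniformly over all $\rva$ in the ball --- this is handled automatically because the Cauchy--Schwarz bound on $(\rva-\rva_0)^T\vb$ does not depend on $\rva_0$ and holds for every such $\rva$ simultaneously.
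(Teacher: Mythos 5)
Your proof is correct and follows essentially the same route as the paper: decompose $\rva^T\vb = \rva_0^T\vb + (\rva-\rva_0)^T\vb$, bound the Gaussian term $\rva_0^T\vb \sim \mathcal{N}(0,\|\vb\|^2)$ by the standard tail bound, and control the perturbation term deterministically by Cauchy--Schwarz. The paper merely cites Proposition 2.5.2 of Vershynin for the tail estimate, which is the same step you carry out directly.
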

\begin{proof}
We can write $\rva^T\vb = (\rva_0 + \Delta \rva)^T\vb = \rva_0^T\vb + \Delta \rva^T\vb$. Since $\rva_0 \sim \mathcal{N}(0,1)$, we have $\rva_0^T\vb \sim\mathcal{N}(0,\|\rvb\|^2)$. By Proposition 2.5.2 in~\cite{vershynin2018high}, for any $t>0$, with probability at least $1-2\exp(-t^2/(2\|\vb\|^2))$,
\begin{align*}
    |\rva_0^T\vb| \leq t.
\end{align*}
Therefore, with the same probability
\begin{align*}
    |\rva^T\vb| \leq |\rva_0^T\vb|  + |\Delta \rva^T\vb|\leq t + \|\vb\|R.
\end{align*}
\end{proof}

\begin{lemma}\label{lemma:2norm_init}
For a random $m\times n$ matrix $W = [B_1 \va_1,B_2 \va_2,...,B_n\va_n]$ where $A = [\va_1,\va_2,...,\va_n]$ is an $N_i\times n$ random matrix whose entries i.i.d. follow $\mathcal{N}(0,1)$ and $B_1, B_2,...,B_n$ is a sequence of $m\times N_i$ non-random matrices, we have for some absolute constant $C>0$, for any $t\geq 0$
\begin{align}
    \|W\| \leq C \left(\max_i\|B_i\|(\sqrt{n}+t) +\max_i\|B_i\|_F\right)
\end{align}
with probability at least $1-2\exp(-t^2)$. 

\end{lemma}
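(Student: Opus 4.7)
The plan is to follow the classical covering-net proof for the operator norm of a Gaussian matrix, adapted to handle columns with anisotropic (column-dependent) covariances induced by the deterministic matrices $B_i$.

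First, I would fix a unit vector $v \in S^{n-1}$ and observe that
\[
Wv = \sum_{i=1}^n v_i B_i \va_i
\]
is a mean-zero Gaussian vector in $\mathbb{R}^m$, with covariance $\Sigma_v = \sum_i v_i^2 B_i B_i^\T$. Since $\sum_i v_i^2 = 1$, convexity of the spectral norm and linearity of trace immediately give
\[
\|\Sigma_v\| \le \sum_i v_i^2 \|B_i\|^2 \le \max_i \|B_i\|^2, \qquad \operatorname{tr}(\Sigma_v) = \sum_i v_i^2 \|B_i\|_F^2 \le \max_i \|B_i\|_F^2.
\]
Writing $Wv \stackrel{d}{=} \Sigma_v^{1/2} g$ with $g \sim \mathcal{N}(0, I_m)$, the map $g \mapsto \|\Sigma_v^{1/2} g\|$ is $\|\Sigma_v\|^{1/2}$-Lipschitz, and $\mathbb{E}\|Wv\| \le \sqrt{\operatorname{tr}(\Sigma_v)} \le \max_i \|B_i\|_F$. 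Gaussian Lipschitz concentration (e.g.\ Theorem 5.2.2 of Vershynin) then yields, for every $s \ge 0$,
\[
\Pr\!\left(\|Wv\| > \max_i \|B_i\|_F + s\right) \le \exp\!\left(-\frac{s^2}{2 \max_i \|B_i\|^2}\right).
\]

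Second, I would apply a standard $1/2$-net argument: there is a $1/2$-net $\mathcal{N} \subseteq S^{n-1}$ with $|\mathcal{N}| \le 5^n$ and $\|W\| \le 2 \sup_{v \in \mathcal{N}} \|Wv\|$. Union-bounding the tail over $\mathcal{N}$,
\[
\Pr\!\left(\|W\| > 2\max_i \|B_i\|_F + 2s\right) \le 5^n \exp\!\left(-\frac{s^2}{2 \max_i \|B_i\|^2}\right).
\]
Third, I would calibrate $s = \max_i \|B_i\| \cdot (c_1 \sqrt{n} + c_2 t)$ with absolute constants chosen so that $5^n \exp\!\left(-s^2/(2 \max_i \|B_i\|^2)\right) \le 2\exp(-t^2)$; absorbing the factor of $2$ from the net step and all numerical constants ($\log 5$, $\sqrt{2}$, etc.) into a single absolute $C$ produces the claimed bound
\[
\|W\| \le C\!\left(\max_i \|B_i\|(\sqrt{n} + t) + \max_i \|B_i\|_F\right).
\]

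The approach is entirely standard; the only step needing care is the identification of the correct covariance and its trace/operator-norm bounds in terms of $\max_i \|B_i\|$ and $\max_i \|B_i\|_F$, which crucially exploits $\sum_i v_i^2 = 1$. There is no substantive obstacle beyond keeping track of constants when reconciling the tail $5^n \exp(-s^2/\cdots)$ with the target $2\exp(-t^2)$.
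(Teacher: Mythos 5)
Your proposal is correct and follows essentially the same route as the paper's proof: fix a point on an $\epsilon$-net of $S^{n-1}$, identify $Wv$ as an anisotropic Gaussian vector with covariance $\sum_i v_i^2 B_iB_i^T$, bound its operator norm and trace by $\max_i\|B_i\|^2$ and $\max_i\|B_i\|_F^2$, concentrate, union bound over the net, and calibrate $s\sim \max_i\|B_i\|(\sqrt{n}+t)$. The only (immaterial) difference is that you invoke Gaussian Lipschitz concentration around $\mathbb{E}\|Wv\|\le\sqrt{\operatorname{tr}\Sigma_v}$, whereas the paper cites Vershynin's Theorem 6.3.2 for concentration of $\|B'z\|$ around $\|B'\|_F$ — the same tool in a slightly different packaging.
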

\begin{proof}
We prove the result using an $\epsilon$-net argument. Choosing $\epsilon= 1/4$, by Corollary 4.2.13 in~\cite{vershynin2018high}, we can find an $\epsilon$-net $\mathcal{N}$ of the sphere $S^{n-1}$ with cardinalities $|\mathcal{N}| \leq 9^n$.

By Lemma 4.4.1 in~\cite{vershynin2018high}, $\|W\| \leq 2 \sup_{\vx \in \mathcal{N}}\|W\vx\|$.

Fix $\vx\in\mathcal{N}$, it is nor hard to see that 
\begin{align*}
    W\vx = \sum_{i=1}^n x_i B_i\va_i \sim \mathcal{N}\left(0,\sum_{i=1}^n x_i^2 B_iB_i^T \right),
\end{align*}
which can be viewed as $B'\vz$ where $B'=\sqrt{\sum_{i=1}^n x_i^2 B_iB_i^T}$ and $\vz \sim \mathcal{N}(0,I_m)$. 

 By Theorem 6.3.2 in~\cite{vershynin2018high}, we have
\begin{align*}
    \left\| \|B'\vz\| - \|B' \|_F \right\|_{\psi_2} \leq CK^2 \|B'\|,
\end{align*}
where $K = \max_i \| z_i\|_{\psi_2}$ and $\|\cdot\|_{\psi_2}$ is the sub-guassian norm (see Definition 2.5.6 in~\cite{vershynin2018high}) and $C$ is an absolute constant.

By the definition of sub-gaussian norm, we can use the tail bound. For some positive absolute constant $c$ and for any   $\mu>0$,
\begin{align*}
     \mathbb{P} \left\{ \|B'\vz\| - \|B'\|_F \geq u \right\} \leq 2 \exp(-c u^2 /(K^4 \|B'\|^2)).
\end{align*}

Then we unfix $\vx$ using a union bound. With probability at least $1 - 9^n 2 \exp(-c u^2 /(K^4 \|B'\|^2))$
\begin{align*}
    \sup_{\vx \in \mathcal{N}} \|B'\vz\| - \|B'\|_F \leq \mu.
\end{align*}
Choose $u = CK^2\|B'\|(\sqrt{n} + t)$. If the constant $C$ is chosen sufficiently large, we can let $cu^2/K^4 \geq 3n + t^2$. Thus,
\begin{align*}
     \mathbb{P} \left\{\sup_{\vx \in \mathcal{N}} \|B'\vz\| - \|B'\|_F \geq u \right\} \leq 9^n 2\exp\left(-3n-t^2\right)\leq 2\exp(-t^2).
\end{align*}
Combined with $\|W\| \leq 2 \sup_{\vx \in \mathcal{N}}\|W\vx\|$, we conclude that
\begin{align*}
    \mathbb{P}\left\{ \|W\| \geq 2CK^2 \|B'\|(\sqrt{n} + t) + 2\|B'\|_F\right\} \leq  2\exp(-t^2).
\end{align*}

Noticing that $\|B'\| \leq \max_i\|B_i\|$ and $\|B'\|_F \leq \max_i\|B_i\|_F$, we have
\begin{align*}
    \mathbb{P}\left\{ \|W\| \geq 2CK^2 \max_{i }\|B_i\|(\sqrt{n} + t) + 2\max_{i}\|B_i\|_F\right\}\leq  2\exp(-t^2).
\end{align*}
We absorb $K$ into $C$ as $K$ is a constant. With abuse of notation of $C$ which is absolute, we have
\begin{align*}
    \mathbb{P}\left\{ \|W\| \geq C (\max_{i}\|B_i\|(\sqrt{n}+t) +\max_{i}\|B_i\|_F)\right\}\leq  2\exp(-t^2).
\end{align*}
\end{proof}
\begin{lemma}\label{lemma:2norm}
For a random $m\times n$ matrix $W = [B_1 \va_1,B_2 \va_2,...,B_n\va_n]$ where $A = [\va_1,\va_2,...,\va_n]$ and $B_1, B_2,...,B_n$ is a sequence of $m\times N$ non-random matrices. Here $A = A_0 + \Delta A$ where $A_0$ is an $N\times n$ random matrix whose entries i.i.d. follow $\mathcal{N}(0,1)$ and $\Delta A$ is a fixed matrix with $\|\Delta A\|_F \leq R$ given constant $R>0$.
We have for some absolute constant $C>0$, for any $t\geq 0$
\begin{align}
    \|W\| \leq C \left(\max_i\|B_i\|(\sqrt{n}+R+t) +\max_i\|B_i\|_F\right)
\end{align}
with probability at least $1-2\exp(-t^2)$. 
\end{lemma}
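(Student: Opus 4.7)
The plan is to decompose the random matrix $W$ into its Gaussian part and a deterministic perturbation, apply the previous Lemma~\ref{lemma:2norm_init} to the Gaussian part, and bound the perturbation by a trivial Frobenius-norm estimate.

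Write $A = A_0 + \Delta A$ as in the statement, and set
\begin{align*}
    W_0 &:= [B_1 (\va_0)_1,\, B_2 (\va_0)_2,\, \ldots,\, B_n (\va_0)_n], \\
    \Delta W &:= [B_1 (\Delta\va)_1,\, B_2 (\Delta\va)_2,\, \ldots,\, B_n (\Delta\va)_n],
\end{align*}
so that $W = W_0 + \Delta W$. By the triangle inequality for the spectral norm, $\|W\| \leq \|W_0\| + \|\Delta W\|$. The first term is exactly in the setting of Lemma~\ref{lemma:2norm_init}, which provides, for some absolute constant $C_0>0$ and all $t \geq 0$,
\begin{align*}
    \|W_0\| \leq C_0 \left( \max_i \|B_i\|(\sqrt{n}+t) + \max_i \|B_i\|_F \right)
\end{align*}
with probability at least $1 - 2\exp(-t^2)$.

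For the deterministic term $\Delta W$, I would use the crude bound $\|\Delta W\| \leq \|\Delta W\|_F$ together with
\begin{align*}
    \|\Delta W\|_F^2 = \sum_{i=1}^n \|B_i (\Delta\va)_i\|^2 \leq \max_i \|B_i\|^2 \sum_{i=1}^n \|(\Delta\va)_i\|^2 = \max_i \|B_i\|^2 \, \|\Delta A\|_F^2 \leq R^2 \max_i \|B_i\|^2,
\end{align*}
so $\|\Delta W\| \leq R \max_i \|B_i\|$ deterministically. Combining the two estimates, on the same event of probability at least $1 - 2\exp(-t^2)$,
\begin{align*}
    \|W\| \leq C_0 \max_i \|B_i\|(\sqrt{n}+t) + C_0 \max_i \|B_i\|_F + R \max_i \|B_i\|,
\end{align*}
which can be absorbed into a single absolute constant $C \geq C_0$ to yield the claimed bound.

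There is essentially no obstacle: the Gaussian part is already handled by the preceding lemma, and the deterministic perturbation $\Delta A$ has its Frobenius norm controlled by $R$ by assumption, so no concentration inequality is needed for it. The only care required is making sure the absolute constant in the final bound absorbs the constant from Lemma~\ref{lemma:2norm_init} together with the $+1$ coming from the trivial estimate on $\|\Delta W\|$.
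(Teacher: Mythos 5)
Your proposal is correct and follows essentially the same route as the paper's own proof: decompose $W$ into the Gaussian part handled by Lemma~\ref{lemma:2norm_init} plus the deterministic term $\Delta W$, bound $\|\Delta W\| \leq \|\Delta W\|_F \leq \max_i\|B_i\|\,\|\Delta A\|_F \leq R\max_i\|B_i\|$, and combine via the triangle inequality, absorbing the extra $R\max_i\|B_i\|$ term into the constant-times-$(\sqrt{n}+R+t)$ expression. No gaps.
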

\begin{proof}

Comparing to Lemma~\ref{lemma:2norm_init}, we only need to bound the norm of $\Delta W$:
\begin{align*}
    \Delta W := [B_1\Delta \va_1,B_2\Delta \va_2,,...,B_n\Delta \va_n],
\end{align*}
where $\Delta A = [\Delta \va_1,\Delta \va_2,...,\Delta \va_n]$.

By the definition that $\|A_0\|_F =  \sqrt{\sum_{i=1}^n \|\Delta \va_i\|^2}$, we have
\begin{align*}
    \|\Delta W\| \leq \|\Delta W\|_F = \sqrt{\sum_{i=1}^n \|B_i\Delta \va_i\|^2}\leq \max_i\|B_i\|\|\Delta A\|_F\leq \max_i\|B_i\|R.
\end{align*}
Therefore, for any $t\geq 0$, with probability at least $1-2\exp(-t^2)$,
\begin{align*}
    \|W\| \leq \|W-\Delta W\| + \|\Delta W\|\leq  C \left(\max_i\|B_i\|(\sqrt{n}+R+t) +\max_i\|B_i\|_F\right).
\end{align*}
\end{proof}

\begin{lemma}\label{lemma:vector_con}
Consider a fixed matrix $B\in \mathbb{R}^{m\times n}$ and a random vector $\rva_0 \sim \mathcal{N}(0,I_n)$. For any $\rva \in \mathbb{R}^n$ in the ball $\mathsf{B}(\rva_0,R)$ given constant $R>0$, for any $t>0$, we have with probability at least $1 - 2\exp(-ct^2/\|B\|^2)$,where $c$ is an absolute constant,
\begin{align}
    \|B \rva\| \leq t + \|B\|_F + \|B\|R.
\end{align}
\end{lemma}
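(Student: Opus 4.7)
The plan is to decompose $\rva$ into its random center plus a deterministic perturbation, handle each piece with a separate bound, and conclude via the triangle inequality. Specifically, write $\rva = \rva_0 + \Delta\rva$ with $\|\Delta\rva\| \leq R$, so that
\begin{align*}
    \|B\rva\| \;\leq\; \|B\rva_0\| + \|B\Delta\rva\|.
\end{align*}
The second term is purely deterministic and is controlled by the operator-norm bound $\|B\Delta\rva\| \leq \|B\|\,\|\Delta\rva\| \leq \|B\|R$. The remaining task is to show that $\|B\rva_0\| \leq t + \|B\|_F$ with probability at least $1 - 2\exp(-ct^2/\|B\|^2)$.

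For the Gaussian piece, the key is Gaussian concentration of the Euclidean norm of a linear image of a standard Gaussian vector, which is exactly the tool already invoked in the proof of Lemma~\ref{lemma:2norm_init}. In particular, Theorem~6.3.2 of~\cite{vershynin2018high} gives
\begin{align*}
    \bigl\|\,\|B\rva_0\| - \|B\|_F\,\bigr\|_{\psi_2} \;\leq\; CK^2\|B\|,
\end{align*}
where $K = \max_i\|(\rva_0)_i\|_{\psi_2}$ is an absolute constant for standard normals. Converting this sub-Gaussian norm bound into a tail estimate (Proposition~2.5.2 of~\cite{vershynin2018high}) yields, for any $t > 0$,
\begin{align*}
    \mathbb{P}\bigl\{\|B\rva_0\| - \|B\|_F \geq t\bigr\} \;\leq\; 2\exp\!\left(-\frac{c t^2}{\|B\|^2}\right),
\end{align*}
for some absolute constant $c > 0$. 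On this high-probability event, $\|B\rva_0\| \leq t + \|B\|_F$.

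Combining the two bounds via the triangle inequality, on the same event,
\begin{align*}
    \|B\rva\| \;\leq\; t + \|B\|_F + \|B\|R,
\end{align*}
which is the desired conclusion. There is no real obstacle here — the only substantive ingredient is the standard Hanson–Wright-style concentration of $\|B\rva_0\|$ around $\|B\|_F$, and the deterministic perturbation $\Delta\rva$ is handled trivially by operator-norm submultiplicativity. The proof is essentially a one-line triangle inequality wrapped around Gaussian norm concentration.
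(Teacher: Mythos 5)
Your proof is correct and follows essentially the same route as the paper: decompose $\rva = \rva_0 + (\rva-\rva_0)$, bound the deterministic part by $\|B\|R$, and apply Theorem~6.3.2 of~\cite{vershynin2018high} to get concentration of $\|B\rva_0\|$ around $\|B\|_F$ with the stated sub-Gaussian tail. The only cosmetic difference is that you spell out the intermediate $\psi_2$-norm bound and its conversion to a tail estimate, while the paper quotes the tail bound directly.
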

\begin{proof}
By Theorem 6.3.2 in \cite{vershynin2018high}, for any $t>0$,
\begin{align*}
    \mathbb{P}\{|\|B\rva_0\| - \|B\|_F| \geq t\} \leq 2\exp(-ct^2/\|B\|^2),
\end{align*}
where $c>0$ is an absolute constant.

Note that $\|B\rva\| \leq \|B\rva_0\| + \|{B}(\rva - \rva_0)\| \leq\|B\rva_0\| + \|B\|R $. With probability at least $1 - 2\exp(-ct^2/\|B\|^2)$, we have
\begin{align*}
    \|B\rva\| \leq t + \|B\|_F + \|B\|R.
\end{align*}

\end{proof}

\begin{lemma}[Matrix Gaussian series]\label{lemma:gussian_series}
For a sequence of fixed matrices $\{B_k\}_{k=1}^n$ with dimension $d_1\times d_2$ and a sequence of independent standard normal variables $\{\gamma_k\}$, we define $Z = \sum_{k=1}^n (\gamma_k+\Delta \gamma_k) B_k$  where $\{\Delta \gamma_k\}_{k=1}^n$ is a fixed sequence with $\sum_{k=1}^n \Delta \gamma_k^2 \leq R^2$ given constant $R>0$. Then we have for any $t\geq 0$, with probability at least $1-(d_1+d_2)\exp(-t^2/(2\nu)),$
\begin{align}
    \|Z\| \leq t+R\nu,
\end{align}
where 
\begin{align}
    \nu = \max \left\{\left\|\sum_k B_kB_k^T\right\|,\left\|\sum_k B_k^T B_k\right\|\right\}.
\end{align}
\end{lemma}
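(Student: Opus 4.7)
The plan is to decompose $Z = Z_0 + Z_1$ into a purely random Gaussian piece and a deterministic perturbation piece, where
\begin{equation*}
Z_0 := \sum_{k=1}^n \gamma_k B_k, \qquad Z_1 := \sum_{k=1}^n \Delta\gamma_k B_k,
\end{equation*}
and then apply the triangle inequality $\|Z\| \le \|Z_0\| + \|Z_1\|$ to handle the two terms separately. The random piece is a bona fide matrix Gaussian series to which the tail bound for matrix Gaussian series (Theorem 4.1.1 of \cite{tropp2015introduction}) applies directly, while the deterministic piece is controlled by Cauchy--Schwarz.

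For $Z_0$, because the $\gamma_k$ are i.i.d.\ standard normal, the matrix variance statistic is exactly
\begin{equation*}
v(Z_0) \;=\; \max\!\left\{\left\|\sum_k B_k B_k^T\right\|,\;\left\|\sum_k B_k^T B_k\right\|\right\} \;=\; \nu.
\end{equation*}
Tropp's matrix Gaussian tail bound then yields, for every $s\ge 0$,
\begin{equation*}
\mathbb{P}\{\|Z_0\| \ge s\} \;\le\; (d_1 + d_2)\exp\!\big(-s^2/(2\nu)\big).
\end{equation*}
Setting $s = t$ gives the desired probability factor in the statement.

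For $Z_1$, fix unit vectors $u\in\mathbb{R}^{d_1}$, $v\in\mathbb{R}^{d_2}$, set $a_k := u^T B_k v$, and apply Cauchy--Schwarz:
\begin{equation*}
|u^T Z_1 v| \;=\; \left|\sum_k \Delta\gamma_k\, a_k\right| \;\le\; \sqrt{\sum_k \Delta\gamma_k^2}\,\sqrt{\sum_k a_k^2}
\;\le\; R\,\sqrt{u^T\!\Big(\sum_k B_k B_k^T\Big) u}
\;\le\; R\sqrt{\nu},
\end{equation*}
where I used $\sum_k a_k^2 = \sum_k u^T B_k v v^T B_k^T u \le \|v\|^2\, u^T(\sum_k B_k B_k^T) u$. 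Taking the supremum over $u,v$ gives $\|Z_1\| \le R\sqrt{\nu}$ (so a sharper form of the drift term than stated, which suffices for the uses in Lemmas~\ref{lemma:2_F_norm} and~\ref{lemma:2diff} where exactly $(\log m + R)\sqrt{\nu}$ appears).

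Combining the two bounds via the triangle inequality produces $\|Z\| \le t + R\sqrt{\nu}$ on the same high-probability event. The main substantive step is the invocation of Tropp's matrix Gaussian series bound and the identification of its matrix variance with $\nu$; the deterministic drift bound is elementary linear algebra. There is no real obstacle to overcome beyond being careful about which of $\sum_k B_k B_k^T$ or $\sum_k B_k^T B_k$ is used when estimating the inner sum $\sum_k a_k^2$, since both are dominated by $\nu$ by definition.
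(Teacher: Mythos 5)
Your proof is correct and follows essentially the same route as the paper's: split off $\sum_k \gamma_k B_k$, apply Tropp's Theorem 4.1.1 to it, and bound the deterministic drift $\sum_k \Delta\gamma_k B_k$ by Cauchy--Schwarz to get $R\sqrt{\nu}$, then combine by the triangle inequality. Your observation that the resulting drift term is $R\sqrt{\nu}$ (rather than the $R\nu$ appearing in the lemma statement) also matches what the paper's own proof actually establishes.
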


\begin{proof}
By Theorem 4.1.1 in \cite{tropp2015introduction}, for all $t\geq 0$,
\begin{align}\label{eq:tropp}
    \mathbb{P}(\left\|\sum_{k=1}^n \gamma_k B_k\right\|\geq t)\leq (d_1+d_2)\exp\left(\frac{-t^2}{2\nu}\right).
\end{align}
Since 
\begin{align*}
    \left\|Z - \sum_{k=1}^n \gamma_k B_k\right\| &= \left\|\sum_{k=1}^n \Delta \gamma_k B_k\right\|\\
    &\leq \sqrt{\sum_{k=1}^n (\Delta \gamma_k)^2}\sqrt{\left\|\sum_{k=1}^n B_k B_k^T\right\|} \\
    &\leq R\sqrt{\nu}.
\end{align*}
Then for $Z$, we have
\begin{align*}
     \mathbb{P}(\|Z\|\geq t + R\sqrt{\nu})\leq (d_1+d_2)\exp\left(\frac{-t^2}{2\nu}\right).   
\end{align*}
\end{proof}

\begin{lemma}[Bound on matrix variance]\label{lemma:matrix_varaince}
For any $\ell\in[L], \ell_1,\ell_2\in[\ell], j\in[d_{\ell+1}]$, with probability at least $1-\exp(-C_{\ell_1,\ell_2}^{(\ell),j} \log^2{m})$ for some constant $C_{\ell_1,\ell_2}^{(\ell),j}>0$, we have
\begin{align}
    \mu_{\ell_1,\ell_2}^{(\ell),j}&:=\frac{1}{m_j^{(\ell+1)}}\left\|\sum_{i: f_i^{(\ell)} \in \F_{\S^{(\ell+1)}_j}}\left(\sigma''\left(\tilde{ f}^{(\ell)}_i\right)\right)^2 \left\|\frac{\partial \tilde{ f}^{(\ell)}_i}{\partial \rvw^{(\ell_1)}}\right\|^2 \frac{\partial \tilde{ f}^{(\ell)}_i}{\partial \rvw^{(\ell_2)}}\left(\frac{\partial \tilde{ f}^{(\ell)}_i}{\partial \rvw^{(\ell_2)}}\right)^T\right\| \nonumber\\
    &~= {O}\left(\max\left(1/m_j^{(\ell+1)},\max_{\min(\ell_1,\ell_2)+1\leq p\leq \ell} 1/\underline{m}_{\,p}\right)(\log m +R)^{4\ell-2}\right) \nonumber \\
    &~= \tilde{O}\left(\max\left(1/m_j^{(\ell+1)},\max_{\min(\ell_1,\ell_2)+1\leq p\leq \ell} 1/\underline{m}_{\,p}\right)R^{4\ell-2}\right) \label{eq:matrix_variance}.
\end{align}
\end{lemma}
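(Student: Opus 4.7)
I will express the random matrix inside the operator norm as $M=\sum_{i\in\mathcal{I}} c_i v_i v_i^T$ with
\begin{equation*}
c_i=\bigl(\sigma''(\tilde f_i^{(\ell)})\bigr)^2\,\bigl\|\partial\tilde f_i^{(\ell)}/\partial \rvw^{(\ell_1)}\bigr\|^2\ge 0,\qquad v_i=\partial \tilde f_i^{(\ell)}/\partial \rvw^{(\ell_2)},
\end{equation*}
and $\mathcal{I}=\{i:f_i^{(\ell)}\in\F_{\S_j^{(\ell+1)}}\}$ so $|\mathcal{I}|\le m_j^{(\ell+1)}$. Since $c_i\ge 0$, the operator inequality $M \preceq (\max_i c_i)\,V V^T$ holds, where $V$ is the matrix whose columns are the $v_i$; hence $\mu_{\ell_1,\ell_2}^{(\ell),j} \le (\max_i c_i)\,\|V\|^2/m_j^{(\ell+1)}$. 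It therefore suffices to bound $\max_i c_i$ and $\|V\|^2$ separately.

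\textbf{Bounding $\max_i c_i$.} Assumption~\ref{assump:sigma} gives $|\sigma''(\tilde f_i^{(\ell)})|\le|\sigma''(0)|+\gamma_2|\tilde f_i^{(\ell)}|$, and the pre-activations are polylog in $m$ on $\mathsf{B}(\rvw_0,R)$ by the same computation used in the proof of Lemma~\ref{lemma:alpha_norm}, so $|\sigma''(\tilde f_i^{(\ell)})|=\tilde O(1)$. For $\|v_i^{(\ell_1)}\|$ I split on $\ell_1$: when $\ell_1=\ell$ the direct formula gives $v_i^{(\ell)}=f_{\S_i^{(\ell)}}/\sqrt{m_i^{(\ell)}}$ on the $\rvw_i^{(\ell)}$-block and zero elsewhere, so Lemma~\ref{lemma:alpha_2_norm} yields $\|v_i^{(\ell)}\|^2=\tilde O(R^{2\ell-2})$; when $\ell_1<\ell$ the chain rule gives $v_i^{(\ell_1)}=\frac{1}{\sqrt{m_i^{(\ell)}}}(\partial f_{\S_i^{(\ell)}}/\partial \rvw^{(\ell_1)})\,\rvw_i^{(\ell)}$, so Lemma~\ref{lemma:vector_con} combined with the Frobenius bound in Lemma~\ref{lemma:2_F_norm} again gives $\|v_i^{(\ell_1)}\|^2=\tilde O(R^{2\ell-2})$. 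Hence $\max_i c_i=\tilde O(R^{2\ell-2})$.

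\textbf{Bounding $\|V\|^2$---the main obstacle.} This is where the crucial $1/m$ shrinkage must be captured, and it splits on $\ell_2$. When $\ell_2=\ell$, the column $v_i^{(\ell)}$ lives entirely in the $\rvw_i^{(\ell)}$-block of $\rvw^{(\ell)}$, so distinct columns have disjoint support and $\|V\|^2=\max_i \|v_i^{(\ell)}\|^2=\tilde O(R^{2\ell-2})$. When $\ell_2<\ell$, condition on the weights of all layers preceding $\ell$ and write $v_i^{(\ell_2)}=B_i\,\rvw_i^{(\ell)}$ with $B_i=\frac{1}{\sqrt{m_i^{(\ell)}}}\partial f_{\S_i^{(\ell)}}/\partial \rvw^{(\ell_2)}$. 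Each $B_i$ is then a measurable function of the earlier-layer weights and so is independent of the Gaussian $\rvw_i^{(\ell)}$, which is precisely the hypothesis required to apply Lemma~\ref{lemma:2norm}. Plugging the spectral and Frobenius norm bounds from Lemma~\ref{lemma:2_F_norm} into that tail inequality produces
\begin{equation*}
\|V\|^2 \;\le\; \tilde O\!\Bigl(\max_{\ell_2+1\le p\le \ell}\tfrac{m_j^{(\ell+1)}}{\underline{m}_{\,p}}\,R^{2\ell_2}+R^{2\ell-2}\Bigr).
\end{equation*}
The obstacle is that the naive bound $\|V\|^2\le\|V\|_F^2=\sum_i \|v_i\|^2$ would be looser by a factor of $m_j^{(\ell+1)}$ and would destroy the shrinkage; the conditional matrix-concentration step is what forces the columns to behave approximately orthogonally and recovers the missing $1/m$.

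\textbf{Combining.} Multiplying the two bounds and dividing by $m_j^{(\ell+1)}$ gives, in both cases, $\mu_{\ell_1,\ell_2}^{(\ell),j}\le \tilde O\!\bigl(\max(1/m_j^{(\ell+1)},\max_p 1/\underline{m}_{\,p})\,R^{4\ell-2}\bigr)$ with $p$ in $\{\ell_2+1,\dots,\ell\}\subseteq\{\min(\ell_1,\ell_2)+1,\dots,\ell\}$, so the inequality is valid as stated. The asserted probability $1-\exp(-C_{\ell_1,\ell_2}^{(\ell),j}\log^2 m)$ follows by union-bounding the $|\mathcal{I}|\le m_j^{(\ell+1)}=\mathrm{poly}(m)$ invocations of Lemmas~\ref{lemma:alpha_2_norm} and~\ref{lemma:2_F_norm} together with the single invocation of Lemma~\ref{lemma:2norm}; the polynomial-in-$m$ union factor is absorbed into the constant multiplying $\log^2 m$ in the exponent.
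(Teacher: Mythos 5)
Your proposal is correct and follows essentially the same route as the paper's proof: the same case split ($\ell_2=\ell$, where the columns $\partial\tilde f_i^{(\ell)}/\partial\rvw^{(\ell)}$ have disjoint supports so the sum is block-diagonal and its norm is the maximum block, versus $\ell_2<\ell$, where Lemma~\ref{lemma:2norm} is applied after conditioning on earlier layers with the spectral and Frobenius inputs from Lemma~\ref{lemma:2_F_norm} to recover the $1/m$ shrinkage), followed by the same polynomial-in-$m$ union bound absorbed into the $\log^2 m$ exponent. The only difference is that you pull out the nonnegative scalar factors $\left(\sigma''(\tilde f_i^{(\ell)})\right)^2\bigl\|\partial\tilde f_i^{(\ell)}/\partial\rvw^{(\ell_1)}\bigr\|^2$ by Loewner domination before invoking the matrix bound, whereas the paper folds them into the matrices $B_i$ of Lemma~\ref{lemma:2norm}; this is a cosmetic reorganization (if anything slightly cleaner about the conditioning, since those scalars depend on $\rvw_i^{(\ell)}$), not a different method.
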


\begin{proof}
Without lose of generality, we assume $\ell_1\leq \ell_2\leq \ell$.

We consider two scenarios, (a) $\ell_1\leq \ell_2= \ell$ and (b) $\ell_1\leq \ell_2< \ell$.

In the scenario (a), we analyze $\ell_1=\ell_2=\ell$ and $\ell_1<\ell_2=\ell$ respectively.

When $\ell_1= \ell_2=\ell$, by definition,
\begin{align*}
    \frac{\partial \tilde{ f}^{(\ell)}_i}{\partial \rvw^{(\ell_1)}} = \frac{1}{\sqrt{m_i^{(\ell)}}}f_{\S_i^{(\ell)}}.
\end{align*}

By Lemma~\ref{lemma:alpha_2_norm}, with probability at least $1 - m_i^{(\ell)}\exp(-C_{\ell}^\P \log^2{m})$, $\left\| f_{\S^{(\ell)}_i}\right\| = O\left(\sqrt{m_i^{(\ell)}}(\log m+R)^{\ell-1}\right)= \tilde{O}\left(\sqrt{m_i^{(\ell)}}\right)$. Applying union bound over the indices $i$ such that $ f_i^{(\ell)} \in  f_{\S^{(\ell+1)}_j}$, the carnality of which is at most $m_j^{(\ell+1)}$, we have with probability at least $1 - m_i^{(\ell)}m_j^{(\ell+1)}\exp(-C_{\ell}^\P \log^2{m})$, 
\begin{align*}
    \max_{i: f_i^{(\ell)} \in \F_{\S^{(\ell+1)}_j}} \left\|\frac{\partial \tilde{ f}^{(\ell)}_i}{\partial \rvw^{(\ell_1)}}\right\| = \max_{i: f_i^{(\ell)} \in \F_{\S^{(\ell+1)}_j}}\frac{\left\| f_{\S^{(\ell)}_i}\right\|}{\sqrt{m_i^{(\ell)}}}= O\left((\log m + R)^{\ell-1}\right)= \tilde{O}(R^{\ell-1}).
\end{align*}

It is not hard to see that 
\begin{align*}
\sum_{i: f_i^{(\ell)} \in \F_{\S^{(\ell+1)}_j}}\frac{\partial \tilde{ f}^{(\ell)}_i}{\partial \rvw^{(\ell)}}\left(\frac{\partial \tilde{ f}^{(\ell)}_i}{\partial \rvw^{(\ell)}}\right)^T    
\end{align*}
is a block diagonal  matrix with $i$-th block in the form $\frac{1}{{m_i^{(\ell)}}} f_{\S^{(\ell)}_i}\left( f_{\S^{(\ell)}_i}\right)^T \cdot \mathbb{I}\left\{ f_i^{(\ell)} \in  f_{\S^{(\ell+1)}_j} \right\}$.

Therefore, $\mu_{\ell,\ell}^{(\ell),j}$ can be bounded by 
\begin{align*}
   \mu_{\ell,\ell}^{(\ell),j} &\leq \frac{1}{m_j^{(\ell+1)}} \gamma_2^2 \left(\max_{i: f_i^{(\ell)} \in \F_{\S^{(\ell+1)}_j}}\frac{\left\| f_{\S^{(\ell)}_i}\right\|}{\sqrt{m_i^{(\ell)}}}\right)^2 \left(\max_{i: f_i^{(\ell)} \in \F_{\S^{(\ell+1)}_j}}\left\|\frac{1}{{m_i^{(\ell)}}} f_{\S^{(\ell)}_i}\left( f_{\S^{(\ell)}_i}\right)^T\right\|\right)\\
   &= {O}\left((\log m + R)^{4\ell-4}/m_j^{(\ell+1)}\right) = \tilde{O}\left(R^{4\ell-4}/m_j^{(\ell+1)}\right),
\end{align*}
with probability at least $1 - 2m_i^{(\ell)}m_j^{(\ell+1)}\exp(-C_{\ell}^\P \log^2{m})$, where we apply the union bound on $\left\| f_{\S^{(\ell)}_i}\right\|$ once again. 

By definition Eq.~(\ref{eq:up_low_m}), $m_i^{(\ell)} \leq \overline{m}_{\ell}$ and $m_j^{(\ell+1)} \leq \overline{m}_{\ell+1}$. By Assumption~\ref{assump:md}, $ \overline{m}_{\ell}, \overline{m}_{\ell+1}$ are polynomial in $m$. If $m$ is large enough, we can find a constant $C_{\ell,\ell}^{(\ell),j}>0$ such that 
\begin{align*}
    \exp(-C_{\ell,\ell}^{(\ell),j} \log^2{m})> 2m_i^{(\ell)}m_j^{(\ell+1)}\exp(-C_{\ell}^\P \log^2{m}),
\end{align*}
thus the bound holds with probability $1-  \exp\left(-C_{\ell,\ell}^{(\ell),j} \log^2{m}\right)$. 

When $\ell_1<\ell_2=\ell$,
By Eq.~(\ref{eq:basic_op}), we compute the derivative:
\begin{align}
    \frac{\partial \tilde{ f}^{(\ell)}_i}{\partial \rvw^{(\ell_1)}} = \frac{1}{\sqrt{m_i^{(\ell)}}}\frac{\partial  f_{\S^{(\ell)}_i}}{\partial \rvw^{(\ell_1)}} \rvw_i^{(\ell)}.\label{eq:matrix_variance_norm_1}
\end{align}
By Lemma~\ref{lemma:2_F_norm}, with probability at least $1-\exp\left(-C_{\ell,\ell_1}^ f \log^2{m}\right)$, $\left\|\partial  f_{\S^{(\ell)}_i}/\partial \rvw^{(\ell_1)}\right\| = \tilde{O}\left(\max_{\ell_1 + 1\leq p\leq \ell}{\sqrt{m_i^{(\ell)}}} /{\sqrt{\underline{m}_{\,p}}}\right)$ and $\left\|\partial  f_{\S^{(\ell)}_i}/\partial \rvw^{(\ell_1)}\right\|_F = \tilde{O}\left(\sqrt{m_i^{(\ell)}}\right)$. We use Lemma~\ref{lemma:vector_con} and choose $t = \log{m}\left\|\partial  f_{\S^{(\ell)}_i}/\partial \rvw^{(\ell_1)}\right\|$, then with probability at least $1-2\exp(-c'\log^2{m}) - \exp\left(-C_{\ell,\ell_1}^ f \log^2{m}\right)$ for some absolute constant $c'>0$,
\begin{align}
    \left\| \frac{\partial \tilde{ f}^{(\ell)}_i}{\partial \rvw^{(\ell_1)}}\right\| &= \frac{1}{\sqrt{m_i^{(\ell)}}}\left\|\frac{\partial  f_{\S^{(\ell)}_i}}{\partial \rvw^{(\ell_1)}} \rvw^{(\ell)}_i\right\|\\
    &\leq \frac{1}{\sqrt{m_i^{(\ell)}}}\left((\log{m}+R)\left\|\frac{\partial  f_{\S^{(\ell)}_i}}{\partial \rvw^{(\ell_1)}}\right\|+\left\|\frac{\partial  f_{\S^{(\ell)}_i}}{\partial \rvw^{(\ell_1)}}\right\|_F\right)\\
    &=O\left((\log m +R)^{\ell}\right) = \tilde{O}(R^\ell) \label{eq:dfdwl1}.
\end{align}

Similar to the case when $\ell_1=\ell_2=\ell$, \begin{align*}
\sum_{i: f_i^{(\ell)} \in \F_{\S^{(\ell+1)}_j}}\frac{\partial \tilde{ f}^{(\ell)}_i}{\partial \rvw^{(\ell)}}\left(\frac{\partial \tilde{ f}^{(\ell)}_i}{\partial \rvw^{(\ell)}}\right)^T    
\end{align*} is a block matrix. 

Therefore,
\begin{align*}
   \mu_{\ell,\ell}^{(\ell),j} &\leq \frac{1}{m_j^{(\ell+1)}} \gamma_2^2 \left(\max_{i: f_i^{(\ell)} \in \F_{\S^{(\ell+1)}_j}} \left\| \frac{\partial \tilde{ f}^{(\ell)}_i}{\partial \rvw^{(\ell_1)}}\right\|\right)^2 \left(\max_{i: f_i^{(\ell)} \in \F_{\S^{(\ell+1)}_j}}\left\|\frac{1}{{m_i^{(\ell)}}} f_{\S^{(\ell)}_i}\left( f_{\S^{(\ell)}_i}\right)^T\right\|\right)\\
   &= {O}\left((\log m + R)^{4\ell-2}/m_j^{(\ell+1)}\right) = \tilde{O}\left(R^{4\ell-2}/m_j^{(\ell+1)}\right),
\end{align*}
with probability at least $1-2m_j^{(\ell+1)}\exp(-c'\log^2{m}) - m_j^{(\ell+1)}\exp\left(-C_{\ell,\ell_1}^ f \log^2{m}\right)-2m_i^{(\ell)}m_j^{(\ell+1)}\exp\left(-C_{\ell}^\P \log^2{m}\right)$ where we apply the union bound over the indices $i$ for the maximum.

Similarly, we can find a constant $C_{\ell_1,\ell}^{(\ell),j} >0$ such that the bound holds with probability $1-  \exp\left(-C_{\ell,\ell}^{(\ell),j} \log^2{m}\right)$. 

For scenario (b) that $\ell_1\leq \ell_2<\ell$, we apply Lemma~\ref{lemma:2norm} to bound  $\mu_{\ell_1,\ell_2}^{(\ell),j}$ . Specifically, we view 
\begin{align}
    &B_i = \frac{1}{\sqrt{m_i^{(\ell)}}}\left|\sigma''(\tilde{ f}^{(\ell)}_i)\right| \left\|\frac{\partial \tilde{ f}^{(\ell)}_i}{\partial \rvw^{(\ell_1)}}\right\|\frac{\partial  f_{\S^{(\ell)}_i}}{\partial \rvw^{(\ell_2)}} \label{eq:examp_B_i},\\
    &\rva_i = \rvw_i^{(\ell)}.\label{eq:examp_a_i}
\end{align} 
Choosing $t = \log{m}$ and supposing $B_i$ is fixed, then with probability at least $1-2\exp(-\log^2{m})$, for some constant $K^{\ell,j}_{\ell_1,\ell_2}>0$,
\begin{align*}
    &\left\|\sum_{i: f_i^{(\ell)} \in  f_{\S^{(\ell+1)}_j}}\left(\sigma''(\tilde{ f}^{(\ell)}_i)\right)^2 \left\|\frac{\partial \tilde{ f}^{(\ell)}_i}{\partial \rvw^{(\ell_1)}}\right\|^2\frac{\partial \tilde{ f}^{(\ell)}_i}{\partial \rvw^{(\ell_2)}}\left(\frac{\partial \tilde{ f}^{(\ell)}_i}{\partial \rvw^{(\ell_2)}}\right)^T\right\| \\
    &\leq (K^{\ell,j}_{\ell_1,\ell_2})^2 \left(\max_i \|B_i\|\left(\sqrt{{m_j^{(\ell+1)}}} + \log{m}+R\right)+\max_i \|B_i\|_F\right)^2 \\
    &\leq {(K^{\ell,j}_{\ell_1,\ell_2})^2\gamma_2^2} \left(\max_i \frac{1}{\sqrt{m_i^{(\ell)}}}\left\|\frac{\partial \tilde{ f}^{(\ell)}_i}{\partial \rvw^{(\ell_1)}}\right\|\left\|\frac{\partial  f_{\S^{(\ell)}_i}}{\partial \rvw^{(\ell_2)}}\right\|\left(\sqrt{m_j^{(\ell+1)}}+\log{m}+R\right) +  \max_i \frac{1}{\sqrt{m_i^{(\ell)}}}\left\|\frac{\partial \tilde{ f}^{(\ell)}_i}{\partial \rvw^{(\ell_1)}}\right\|\left\|\frac{\partial  f_{\S^{(\ell)}_i}}{\partial \rvw^{(\ell_2)}}\right\|_F\right)^2
\end{align*}

By Eq.~(\ref{eq:dfdwl1}), with probability at least $1-2\exp(-c'\log^2{m}) - \exp\left(-C_{\ell,\ell_1}^ f \log^2{m}\right)$ for some absolute constant $c'>0$,
\begin{align}
    \left\| \frac{\partial \tilde{ f}^{(\ell)}_i}{\partial \rvw^{(\ell_1)}}\right\| 
    = \tilde{O}(R^{\ell}).
\end{align}

By Lemma~\ref{lemma:2_F_norm}, with probability at least $1-\exp(-C_{\ell,\ell_2}^ f \log^2{m})$, $\left\|\partial  f_{\S^{(\ell)}_i}/\partial \rvw^{(\ell_2)}\right\| = \tilde{O}\left(\max_{\ell_2 + 1\leq p\leq \ell}{\sqrt{m_i^{(\ell)}}} /{\sqrt{\underline{m}_{\,p}}}\right)$ and $\left\|\partial  f_{\S^{(\ell)}_i}/\partial \rvw^{(\ell_2)}\right\|_F = \tilde{O}\left(\sqrt{m_i^{(\ell)}}\right)$.

Combined them together, with probability at least $1 - m_i^{(\ell)}m_j^{(\ell+1)}\exp\left(-C_{\ell}^\P \log^2{m}\right)-2m_j^{(\ell+1)}\exp(-c'\log^2{m}) - m_j^{(\ell+1)}\exp\left(-C_{\ell,\ell_1}^ f \log^2{m}\right)- m_j^{(\ell+1)}\exp\left(-C_{\ell,\ell_2}^ f \log^2{m}\right)$,

\begin{align*}
   \mu_{\ell_1,\ell_2}^{(\ell),j}&={O}\left(\max\left(1/m_j^{(\ell+1)},\max_{\ell_1+1\leq p\leq \ell} 1/\underline{m}_{\,p}\right)(\log m + R)^{4\ell-2}\right)\\
   &=\tilde{O}\left(\max\left(1/m_j^{(\ell+1)},\max_{\ell_1+1\leq p\leq \ell} 1/\underline{m}_{\,p}\right)R^{4\ell-2}\right).
\end{align*}
Similarly we can find a constant $C_{\ell_1,\ell_2}^{(\ell),j}>0$ such that with probability at least $1-\exp\left(-C_{\ell_1,\ell_2}^{(\ell),j} \log^2{m}\right)$ the above bound holds.

For $\ell_1\geq \ell_2$, we similarly have 
\begin{align*}
   \mu_{\ell_2,\ell_1}^{(\ell),j}= \tilde{O}\left(\max\left(1/m_j^{(\ell+1)},\max_{\ell_2+1\leq p\leq \ell} 1/\underline{m}_{\,p}\right)R^{4\ell-2}\right),
\end{align*}
with probability at least $1-\exp(-C_{\ell_2,\ell_1}^{(\ell),j} \log^2{m})$.
\end{proof}

\begin{lemma}\label{lemma:2diff_tilde}
For any $0<\ell'\leq \ell\leq L-1$, given fixed matrices $U_1,...,U_{d_{\ell'}} \in\mathbb{R}^{u_1\times u_2}$, with probability at least $1 - \sum_{k=1}^{\ell-\ell'+1}k(u_1+u_2)\exp(-\log^2 m/2)$
\begin{align*}
    \sum_{i=1}^{d_{\ell'}} U_i\frac{\partial \tilde{ f}_j^{(\ell+1)}}{\partial  \tilde{f}_i^{(\ell')}}
    &={O}\left({\max_{i: f_i^{(\ell')}\in \F_{\S_j^{(\ell'+1)}}} \|U_i\| (\log m + R)^{\ell - \ell'+1}}\right)\\
    &=\tilde{O}\left({\max_{i: f_i^{(\ell')}\in \F_{\S_j^{(\ell'+1)}}} \|U_i\|}\right)  .
\end{align*}
\end{lemma}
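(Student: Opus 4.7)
The plan is to induct on the depth $n = \ell - \ell' + 1$, reducing the problem at each level to a matrix Gaussian series in the top-layer weights $\rvw_j^{(\ell+1)}$. The base case $\ell = \ell'$ is direct: since $\tilde{f}_j^{(\ell'+1)}$ is affine in $f_{\S_j^{(\ell'+1)}}$ with Gaussian coefficients $\rvw_j^{(\ell'+1)}/\sqrt{m_j^{(\ell'+1)}}$, and each $f_i^{(\ell')} = \sigma_i^{(\ell')}(\tilde f_i^{(\ell')})$ enters only when $f_i^{(\ell')} \in \F_{\S_j^{(\ell'+1)}}$, a direct computation yields
\[
\sum_i U_i \frac{\partial \tilde f_j^{(\ell'+1)}}{\partial \tilde f_i^{(\ell')}} = \frac{1}{\sqrt{m_j^{(\ell'+1)}}} \sum_{i : f_i^{(\ell')} \in \F_{\S_j^{(\ell'+1)}}} \bigl(\rvw_j^{(\ell'+1)}\bigr)_{\id^{\ell'+1,j}_{\ell',i}} \sigma'(\tilde f_i^{(\ell')}) \, U_i.
\]
Conditioned on the lower-layer weights the $\sigma'$ factors are deterministic, and this is a matrix Gaussian series in $\rvw_j^{(\ell'+1)}$ whose variance is bounded crudely by $\gamma_1^2 \max_i \|U_i\|^2$ using $|\sigma'| \le \gamma_1$. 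Applying Lemma~\ref{lemma:gussian_series} with $t = \log m \cdot \sqrt{\nu}$ delivers $O((\log m + R)\max_i\|U_i\|)$ at a cost of $(u_1+u_2)\exp(-\log^2 m/2)$ in failure probability, matching $n=1$.

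For the inductive step, expand by chain rule, peeling off the top layer and collecting contributions from every intermediate layer $r \in \{\ell',\ldots,\ell\}$ whose neurons lie in $\F_{\S_j^{(\ell+1)}}$:
\[
\sum_i U_i \frac{\partial \tilde f_j^{(\ell+1)}}{\partial \tilde f_i^{(\ell')}} = \frac{1}{\sqrt{m_j^{(\ell+1)}}} \sum_{r=\ell'}^{\ell} \sum_{k : f_k^{(r)} \in \F_{\S_j^{(\ell+1)}}} \bigl(\rvw_j^{(\ell+1)}\bigr)_{\id^{\ell+1,j}_{r,k}} \sigma'(\tilde f_k^{(r)}) \, V_{r,k},
\]
with $V_{r,k} := \sum_i U_i \, \partial \tilde f_k^{(r)}/\partial \tilde f_i^{(\ell')}$ and the convention $V_{\ell',k} = U_k$. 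Since $V_{r,k}$ and $\sigma'(\tilde f_k^{(r)})$ depend only on weights in layers up to $r \le \ell$, they are independent of $\rvw_j^{(\ell+1)}$, so conditional on the lower-layer randomness this is again a matrix Gaussian series. The matrix variance admits the crude bound $\nu \le (\gamma_1^2/m_j^{(\ell+1)})\sum_{r,k}\|V_{r,k}\|^2$; the inductive hypothesis applied at depth $r-\ell' < n$ (with $k$ in place of $j$) bounds each $\|V_{r,k}\|$ by $O(\max_i\|U_i\|(\log m + R)^{r-\ell'})$. Summing at most $m_j^{(\ell+1)}$ such terms and dividing by $m_j^{(\ell+1)}$ gives $\nu = O(\max_i\|U_i\|^2 (\log m+R)^{2(\ell-\ell')})$; Lemma~\ref{lemma:gussian_series} with $t = \log m \cdot \sqrt{\nu}$ then contributes one extra $(\log m + R)$ factor, producing the target $\tilde O(\max_i\|U_i\|)$ with the correct $(\log m + R)^{\ell-\ell'+1}$ exponent.

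The main obstacle is the probability bookkeeping across the recursion. Each matrix Gaussian series contributes $(u_1+u_2)\exp(-\log^2 m/2)$ to the failure probability, but the inductive $\|V_{r,k}\|$ bounds are needed simultaneously for all $(r,k)$ entering the variance computation, which nominally calls for a union bound over $O(m_j^{(\ell+1)})$ indices; fortunately the polynomial-in-$m$ factor from this union bound is absorbed by the super-polynomial decay of $\exp(-\log^2 m/2)$. The total failure probability can then be tracked by an inductive tally of the form $P(n) \le P(n-1) + n\cdot(u_1+u_2)\exp(-\log^2 m/2)$, reflecting one new Gaussian-series invocation at depth $n$ plus the $n-1$ intermediate-layer applications, and unfolds to the claimed $\sum_{k=1}^{n} k(u_1+u_2)\exp(-\log^2 m/2)$. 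The other point requiring care is verifying, at each induction level, that $\rvw_j^{(\ell+1)}$ is genuinely independent of $V_{r,k}\sigma'(\tilde f_k^{(r)})$; this follows from acyclicity of the DAG together with the layered ordering of Section~\ref{subsec:nn_notation}, which guarantees that the weights entering $\tilde f_k^{(r)}$ all sit strictly below layer $\ell+1$.
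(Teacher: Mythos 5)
Your proposal is correct and follows essentially the same route as the paper's proof: induction on the depth $\ell-\ell'+1$, peeling off the top layer so that the quantity becomes a matrix Gaussian series in $\rvw_j^{(\ell+1)}$ conditioned on lower-layer randomness, bounding the matrix variance via $|\sigma'|\leq\gamma_1$ together with the inductive bounds on the inner sums, and invoking Lemma~\ref{lemma:gussian_series} with $t=\log m\sqrt{\nu}$. The only cosmetic differences are that you treat the double sum over layers $r$ and neurons $k$ as a single Gaussian series (the paper applies the tail bound per layer $r$ and unions over $r$) and that you make explicit the union bound over the in-coming neurons needed for the inductive variance bound, which the paper leaves implicit.
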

\begin{proof}
We prove the result by induction.

For the base case that $\ell = \ell'$,
\begin{align*}
     \sum_{i=1}^{d_{\ell'}} U_i\frac{\partial \tilde{ f}_j^{(\ell'+1)}}{\partial  \tilde{f}_i^{(\ell')}} &= \sum_{i=1}^{d_{\ell'}} U_i \sigma'\left(\tilde{f}_i^{(\ell')}\right)\frac{\partial f_{\S_j^{(\ell'+1)}}}{\partial f^{(\ell')}_i}\frac{\partial \tilde{ f}_j^{(\ell'+1)}}{ \partial f_{\S_j^{(\ell'+1)}}} \\
    &= \frac{1}{\sqrt{m_j^{(\ell'+1)}}} \sum_{i: f_i^{(\ell')}\in \F_{\S_j^{(\ell'+1)}}} U_i \sigma'\left(\tilde{f}_i^{(\ell')}\right)\left(\rvw_j^{(\ell'+1)}\right)_{\id^{\ell'+1,j}_{\ell',i}}.
\end{align*}

We view the above equation as a matrix Gaussian series with respect to $\rvw_j^{(\ell'+1)}$. Its matrix variance $\nu^{(\ell')}$ can be bounded by
\begin{align*}
    \nu^{(\ell')} &:= \frac{1}{m_j^{(\ell'+1)}}\left\|\sum_{i: f_i^{(\ell')}\in \F_{\S_j^{(\ell'+1)}}} U_i \sigma'\left(\tilde{f}_i^{(\ell')}\right) \right\|^2\\
    &~\leq \max_{i: f_i^{(\ell')}\in \F_{\S_j^{(\ell'+1)}}} \gamma_1^2\|U_i\|^2 .
\end{align*}

Using Lemma~\ref{lemma:gussian_series} and choosing $t = \log m \sqrt{\nu^{(\ell')}}$, we have with probability at least $1 - (u_1+u_2)\exp(-\log^2 m/2)$,
\begin{align*}
      \sum_{i=1}^{d_{\ell'}} U_i\frac{\partial \tilde{ f}_j^{(\ell'+1)}}{\partial  \tilde{f}_i^{(\ell')}} \leq (\log m + R)\sqrt{\nu^{(\ell')}} \leq \max_i (\log m + R)\gamma_1\|U_i\| = {O}((\log m + R){\max_i \|U_i\|}).
\end{align*}

Suppose with probability at least $1 - \sum_{k=1}^{\ell-\ell'+1}k(u_1+u_2)\exp(-\log^2 m/2)$, for all $\ell'\leq k \leq \ell$, 
\begin{align*}
     \sum_{i=1}^{d_{\ell'}} U_i\frac{\partial \tilde{ f}_j^{(k)}}{\partial  \tilde{f}_i^{(\ell')}} ={O}\left({\max_{i: f_i^{(\ell')}\in \F_{\S_j^{(\ell'+1)}}} \|U_i\|}(\log m + R)^{k-\ell'}\right) .
\end{align*}

Then when $k = \ell+1$, we have

\begin{align*}
     \sum_{i=1}^{d_{\ell'}} U_i\frac{\partial \tilde{ f}_j^{(\ell+1)}}{\partial  \tilde{f}_i^{(\ell')}} &=\sum_{r = \ell'}^{\ell}\sum_{i=1}^{d_{\ell'}} U_i \frac{\partial f^{(r)}}{\partial \tilde{f}_i^{(\ell')}}\frac{\partial f_{\S_j^{(\ell+1)}}}{\partial f^{(r)}}\frac{\partial \tilde{ f}_j^{(\ell+1)}}{ \partial f_{\S_j^{(\ell+1)}}} \\
     &=\sum_{r = \ell'}^{\ell} \sum_{s = 1}^{d_r}\sum_{i=1}^{d_{\ell'}} U_i \frac{\partial f_s^{(r)}}{\partial \tilde{f}_i^{(\ell')}} \frac{\partial f_{\S_j^{(\ell+1)}}}{\partial f_s^{(r)}}\frac{\partial \tilde{ f}_j^{(\ell+1)}}{ \partial f_{\S_j^{(\ell+1)}}}\\
     &= \sum_{r = \ell'}^{\ell} \sum_{s = 1}^{d_r}\sum_{i=1}^{d_{\ell'}} U_i \frac{\partial \tilde{f}_s^{(r)}}{\partial \tilde{f}_i^{(\ell')}} \sigma'\left(\tilde{f}_s^{(r)}\right)\frac{\partial f_{\S_j^{(\ell+1)}}}{\partial \tilde{f}_s^{(r)}}\frac{\partial \tilde{ f}_j^{(\ell+1)}}{ \partial f_{\S_j^{(\ell+1)}}}\\
     &= \sum_{r = \ell'}^{\ell} \sum_{s: f_s^{(r)}\in \F_{\S_j^{(\ell+1)}}}\left(\sum_{i=1}^{d_{\ell'}} U_i \frac{\partial \tilde{f}_s^{(r)}}{\partial \tilde{f}_i^{(\ell')}}\right) \sigma'\left(\tilde{f}_s^{(r)}\right) \frac{1}{\sqrt{m_j^{(\ell+1)}}}\left(\rvw_j^{(\ell+1)}\right)_{\id^{\ell+1,j}_{r,s}}
\end{align*}

For each $r \in\{\ell',...,\ell\}$, we view $\sum_{s: f_s^{(r)}\in \F_{\S_j^{(\ell+1)}}}\left(\sum_{i=1}^{d_{\ell'}} U_i \frac{\partial \tilde{f}_s^{(r)}}{\partial \tilde{f}_i^{(\ell')}}\right) \sigma'\left(\tilde{f}_s^{(r)}\right) \frac{1}{\sqrt{m_j^{(\ell+1)}}}\left(\rvw_j^{(\ell+1)}\right)_{\id^{\ell+1,j}_{r,s}}$ as a matrix Gaussian series with respect to $\rvw_j^{(\ell+1)}$.

By the inductive hypothesis, for all $r$, its matrix variance can be bounded by 
\begin{align*}
    \nu^{(r)} &:= \frac{1}{m_j^{(\ell+1)}}\left\|\sum_{s: f_s^{(r)}\in \F_{\S_j^{(\ell+1)}}}\left(\sum_{i=1}^{d_{\ell'}} U_i \frac{\partial \tilde{f}_s^{(r)}}{\partial \tilde{f}_i^{(\ell')}}\right) \sigma'\left(\tilde{f}_s^{(r)}\right) \right\|^2\\
    &~= {O}\left({\max_{i: f_i^{(\ell')}\in \F_{\S_j^{(\ell'+1)}}} \|U_i\|^2} (\log m + R)^{2r-2\ell'}\right).
\end{align*}

Then we use Lemma~\ref{lemma:gussian_series} and choose $t = \log m \sqrt{\nu^{(r)}}$.  With probability at least $1 - (u_1+u_2)\exp(-\log^2 m/2)$,
\begin{align*}
      &~~~~\left\|\sum_{s: f_s^{(r)}\in \F_{\S_j^{(\ell+1)}}}\left(\sum_{i=1}^{d_{\ell'}} U_i \frac{\partial \tilde{f}_s^{(r)}}{\partial \tilde{f}_i^{(\ell')}}\right) \sigma'\left(\tilde{f}_s^{(r)}\right) \frac{1}{\sqrt{m_j^{(\ell+1)}}}\left(\rvw_j^{(\ell+1)}\right)_{\id^{\ell+1,j}_{r,s}}\right\|\\
      &\leq (\log m + R)\sqrt{\nu^{(r)}} \\
      &\leq \max_{i: f_i^{(\ell')}\in \F_{\S_j^{(\ell'+1)}}} (\log m + R)\gamma_1\|U_i\|\\
      &= {O}\left({\max_{i: f_i^{(\ell')}\in \F_{\S_j^{(\ell'+1)}}} \|U_i\|}(\log m + R)^{r-\ell'+1}\right).
\end{align*}

We apply union bound over indices $r = \ell',...,\ell$ and add the probability from the induction hypothesis. With probability at least $1 - \sum_{k=1}^{\ell-\ell'+1}k(u_1+u_2)\exp(-\log^2 m/2)$,
\begin{align*}
    \left\| \sum_{i=1}^{d_{\ell'}} U_i\frac{\partial \tilde{ f}_j^{(\ell+1)}}{\partial  \tilde{f}_i^{(\ell')}}\right\| &\leq \sum_{r = \ell'+1}^{\ell} \left\|\sum_{s: f_s^{(r)}\in \F_{\S_j^{(\ell+1)}}}\left(\sum_{i=1}^{d_{\ell'}} U_i \frac{\partial \tilde{f}_s^{(r)}}{\partial \tilde{f}_i^{(\ell')}}\right) \sigma'\left(\tilde{f}_s^{(r)}\right) \frac{1}{\sqrt{m_j^{(\ell+1)}}}\left(\rvw_j^{(\ell+1)}\right)_{\id^{\ell+1,j}_{r,s}} \right\|\\
    &= {O}\left({\max_{i: f_i^{(\ell')}\in \F_{\S_j^{(\ell'+1)}}} \|U_i\|}(\log m + R)^{\ell - \ell'+1}\right)\\
    &= \tilde{O}\left({\max_{i: f_i^{(\ell')}\in \F_{\S_j^{(\ell'+1)}}} \|U_i\|}R^{\ell-\ell'+1}\right).
\end{align*}

Then we finish the induction step which completes the proof.
\end{proof}

\end{document}